\newcommand{\hytt}[1]{\texttt{\hyphenchar\font=\defaulthyphenchar #1}}
\DeclareMathOperator{\Proj}{Proj}
\newcommand{\vctProj}[2][]{\Proj_{#1}#2}
\newtheorem{myLem}{Lemma}
\newcommand{\first}[1]{{\color{black}{#1}}}
\begin{document}
\title{{\fontsize{21}{0}\selectfont LEAF + AIO: Edge-Assisted Energy-Aware Object Detection for Mobile Augmented Reality}}
\author{Haoxin Wang\IEEEauthorrefmark{1}, BaekGyu Kim\IEEEauthorrefmark{1}, Jiang Xie\IEEEauthorrefmark{2}, and Zhu Han\IEEEauthorrefmark{3}\IEEEauthorrefmark{4}\\
\IEEEauthorblockA{\IEEEauthorrefmark{1}Toyota Motor North America (TMNA) R\&D InfoTech Labs, U.S.A.}\\
\IEEEauthorblockA{\IEEEauthorrefmark{2}University of North Carolina at Charlotte, Charlotte, NC 28223, U.S.A.}\\
\IEEEauthorblockA{\IEEEauthorrefmark{3}Department of Electrical and Computer Engineering, University of Houston, Houston, TX 77004, U.S.A.}\\
\IEEEauthorblockA{\IEEEauthorrefmark{4}Department of Computer Science and Engineering, Kyung Hee University, Seoul, South Korea, 446-701\\
E-mail:~haoxin.wang@toyota.com;~baekgyu.kim@toyota.com;~linda.xie@uncc.edu;~hanzhu22@gmail.com}

\thanks{This is a personal copy of the authors. Not for redistribution. The final version of this paper was accepted by IEEE Transactions on Mobile Computing.}}

\maketitle
\thispagestyle{plain}
\pagestyle{plain}
\pagenumbering{gobble}

\begin{abstract}
Today very few deep learning-based mobile augmented reality (MAR) applications are applied in mobile devices because they are significantly energy-guzzling. In this paper, we design an edge-based energy-aware MAR system that enables MAR devices to dynamically change their configurations, such as CPU frequency, computation model size, and image offloading frequency based on user preferences, camera sampling rates, and available radio resources. Our proposed dynamic MAR configuration adaptations can minimize the per frame energy consumption of multiple MAR clients without degrading their preferred MAR performance metrics, such as latency and detection accuracy. To thoroughly analyze the interactions among MAR configurations, user preferences, camera sampling rate, and energy consumption, we propose, to the best of our knowledge, the first comprehensive analytical energy model for MAR devices. Based on the proposed analytical model, we design a LEAF optimization algorithm to guide the MAR configuration adaptation and server radio resource allocation. An image offloading frequency orchestrator, coordinating with the LEAF, is developed to adaptively regulate the edge-based object detection invocations and to further improve the energy efficiency of MAR devices. Extensive evaluations are conducted to validate the performance of the proposed analytical model and algorithms.
\end{abstract}

\begin{IEEEkeywords}
Object detection, augmented reality, mobile edge computing.
\end{IEEEkeywords}

\section{Introduction}
\label{sc:introduction}
With the advancement in \textit{Deep Learning} in the past few years, we are able to create intelligent machine learning models to accurately detect and classify complex objects in the physical world. This advancement has the potential to make \textit{Mobile Augmented Reality} (MAR) applications highly intelligent and widely adaptable in various scenarios, such as tourism, education, and entertainment. Thus, implementing MAR applications on popular mobile architectures is a new trend in modern technologies.

However, only a few MAR applications are implemented in mobile devices and are developed based on deep learning frameworks because (i) performing deep learning algorithms on mobile devices is significantly energy-guzzling; (ii) deep learning algorithms are computation-intensive, and executing locally in resource limited mobile devices may not provide acceptable performance for MAR clients \cite{huynh2017deepmon}. To solve these issues, a promising approach is to transfer MAR input image/video frames to an edge server that is sufficiently powerful to execute the deep learning algorithms. 

\textbf{Motivations.} Although compared to running a deep learning algorithm locally on a mobile device, edge-based approaches may extend the device's battery life to certain extents, it is still considerably energy consuming due to conducting multiple pre-processes on the mobile device, such as camera sampling, screen rendering, image conversion, and data transmission \cite{wang2019globe}. For instance, based on the measurement from our developed MAR testbed, a $3000$ mAh smartphone battery is exhausted within approximately $2.3$ hours for executing our developed MAR application which continuously transmits the latest camera sampled image frames to an edge server for object detection. Therefore, the energy efficiency of MAR devices becomes a bottleneck, which impedes MAR clients to obtain better MAR performance. For example, decreasing the energy consumption of an MAR device is always at the cost of reducing the object detection accuracy. Therefore, improving the energy efficiency of MAR devices and balancing the tradeoffs between energy efficiency and other MAR performance metrics are crucial to edge-based MAR systems.

\textbf{Challenges.} An accurate analytical energy model is significantly important for understanding how energy is consumed in an MAR device and for guiding the design of energy-aware MAR systems. However, to the best of our knowledge, there is no existing energy model developed for MAR devices or applications. Developing a comprehensive MAR energy model that is sufficiently general to handle any MAR architecture and application is very challenging. This is because (i) interactions between MAR configuration parameters (e.g., client's CPU frequency and computation model size) and MAR device's energy consumption are complex and lack analytic understandings; (ii) interactions between these configurations and the device's energy consumption may also vary with different mobile architectures.

In addition, designing an energy-aware solution for mobile devices in edge-based MAR systems is also challenging, even after we obtain an analytical energy model. This is because: 
(i) complicated pre-processes on MAR devices increase the complexity of the problem. Compared to conventional computation offloading systems, besides data transmission, there are also a variety of pre-processing tasks (e.g., camera sampling, screen rendering, and image conversion) necessarily to be performed on MAR devices, which are also energy consuming. For example, over $60\%$ of the energy is consumed by camera sampling and screen rendering, based on observations from our developed testbed. Therefore, we have to take into account the energy efficiency of these pre-processing tasks while designing an energy-aware approach for MAR clients. 
(ii) Considering the user preference constraint of individual MAR clients also increases the complexity of the problem. For example, maintaining a high object detection accuracy for a client who prefers a precise MAR while decreasing its energy consumption is very challenging. As stated previously, reducing the energy consumption of the MAR device without degrading other performance metrics is no easy task.
(iii) In practical scenarios, an edge server is shared by multiple MAR clients. Individual client's energy efficiency is also coupled with the radio resource allocation at the edge server. Such a coupling makes it computationally hard to optimally allocate radio resources and improve each client's energy efficiency.

\textbf{Our Contributions.}
In this paper \footnote{\first{This work is an extension of our previous conference paper: 10.1109/INFOCOM41043.2020.9155517}}, we study these research challenges and design a user preference based energy-aware edge-based MAR system. The novel contributions of this paper are summarized as follows:
\begin{enumerate}
\item We design and implement an edge-based object detection for MAR systems to analyze the interactions between MAR configurations and the client's energy consumption. Based on our experimental study, we summarize several insights which can potentially guide the design of energy-aware object detection. 
\item We propose, to the best of our knowledge, the first comprehensive energy model which identifies (i) the tradeoffs among the energy consumption, service latency, and detection accuracy, and (ii) the interactions among MAR configuration parameters (i.e., CPU frequency and computation model size), user preferences, camera sampling rate, network bandwidth, and per frame energy consumption for a multi-user edge-based MAR system.
\item We propose an energy-efficient optimization algorithm, LEAF, which guides MAR configuration adaptations and radio resource allocations at the edge server, and minimizes the per frame energy consumption while satisfying variant clients' user preferences.
\item We develop and implement an image offloading frequency orchestrator that coordinates with the LEAF algorithm to avoid energy-consuming continuous repeated executions of object detections and further improve the energy efficiency of MAR devices.
\end{enumerate}

\section{Related Work}
\label{sc:related}

\textbf{Energy Modeling.} Energy modeling has been widely used for investigating the factors that influence the energy consumption of mobile devices. \cite{xiao2013modeling} and \cite{huang2012close} propose energy models of WiFi and LTE data transmission with respect to the network performance metrics, such as data and retransmission rates, respectively. \cite{shye2009into,walker2016accurate,devogeleer2014modeling,xu2013v,pathak2011fine} propose multiple power consumption models to estimate the energy consumption of mobile CPUs. Tail energy caused by different components, such as disk, Wi-Fi, 3G, and GPS in smartphones has been investigated in \cite{pathak2011fine,pathak2012energy}. However, none of them can be directly applied to estimate the energy consumed by MAR applications. This is because MAR applications introduce a variety of (i) energy consuming components (e.g., camera sampling and image conversion) that are not considered in the previous models and (ii) configuration variables (e.g., computation model size and camera sample rate) that also significantly influence the energy consumption of mobile devices.

\textbf{Computation Offloading.} Most existing research on computation offloading focuses on how to make offloading decisions. \cite{hu2017energy} and \cite{hu2014energy} coordinate the scheduling of offloading requests for multiple applications to further reduce the wireless energy cost caused by the long tail problem. \cite{geng2018energy} proposes an energy-efficient offloading approach for multicore-based mobile devices. \cite{miettinen2010energy} discusses the energy efficiency of computation offloading for mobile clients in cloud computing. However, these solutions cannot be applied to improving the energy efficiency of mobile devices in MAR offloading cases. This is because (i) a variety of pre-processing tasks in MAR executions, such as camera sampling, screen rendering, and image conversion, are not taken into account and (ii) besides the latency constraint that is considered in most existing computation offloading approaches, object detection accuracy is also a key performance metric, which must be considered while designing an MAR offloading solution. In addition, although some existing work proposes to study the tradeoffs between the MAR service latency and detection accuracy \cite{ran2018deepdecision,liu2018edge,hanhirova2018latency,liu2019edge,wang2018smart,apicharttrisorn2019frugal,wang2019globe,wang2019auto,Anik2022icc}, none of them considered (i) the energy consumption of the MAR device and (ii) the whole processing pipeline of MAR (i.e., starting from the camera sampling to obtaining detection results).

\textbf{CPU Frequency Scaling.} Our work is also related to CPU frequency scaling. For modern mobile devices, such as smartphones, CPU frequency and the voltage provided to the CPU can be adjusted at run-time, which is called Dynamic Voltage and Frequency Scaling (DVFS). Prior work \cite{chen2007energy,hu2017energy,kwak2014dynamic,lee2009energy} proposes various DVFS strategies to reduce the mobile device energy consumption under various applications, such as video streaming \cite{hu2017energy} and delay-tolerant applications \cite{kwak2014dynamic}. However, to the best of our knowledge, there has been little effort factoring in the energy efficiency of MAR applications in the context of mobile device DVFS.

\section{Experimental Results on Factors Affecting MAR Client Energy Efficiency}
\label{sc:Experimental Results}

\begin{figure*}[t]
\centering
\includegraphics[width=0.86\textwidth]{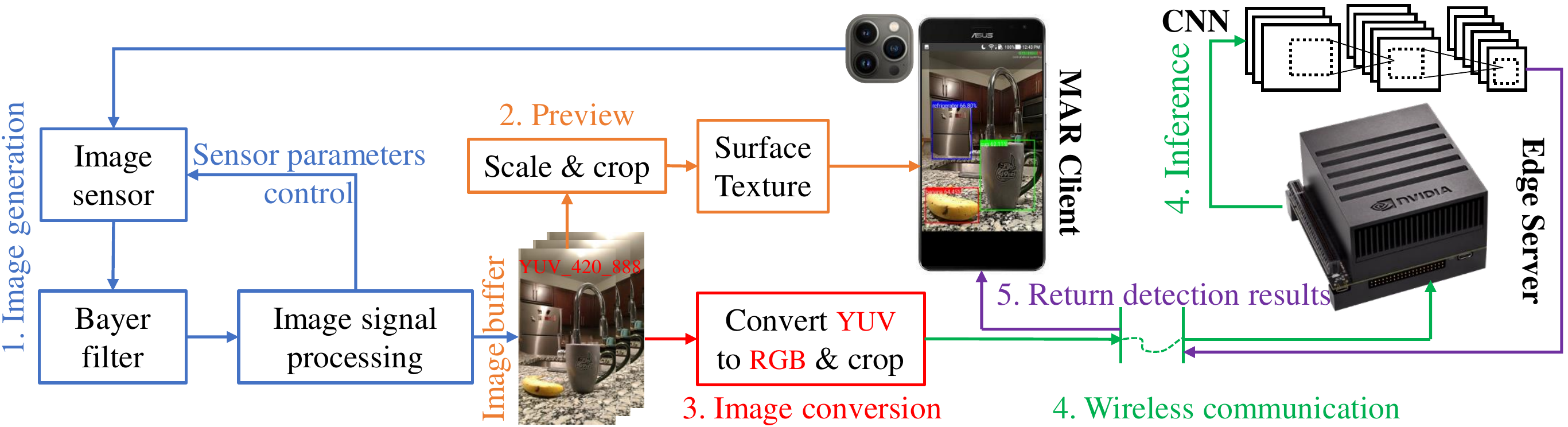}
\caption{\first{The processing pipeline of the edge-based MAR system developed in this paper \cite{wang2020user, 9274509}.}}
\label{fig:pipeline}
\end{figure*}

In this section, we describe our preliminary experiments to evaluate the impact of various factors on the energy efficiency of an MAR client, service latency, and detection accuracy in an edge-based MAR system. Specifically, these experimental results provide (i) observations on interactions between energy consumption and MAR configuration parameters, such as MAR client's CPU frequency, computation model size, camera sampling rate, and user preference, (ii) bases of modeling the energy consumption of an MAR client, and (iii) insights on designing an energy-efficient optimization algorithm.     

\subsection{Testbed Setup}
Our testbed consists of three major components: MAR client, edge server, and power monitor. Note that this paper focuses on the MAR application in which an MAR client captures physical environmental information through the camera and sends the information to an edge server for object detection. The detailed processing pipeline is shown in Fig. \ref{fig:pipeline}.

\textbf{Processing Pipeline \footnote{\first{GitHub: https://github.com/WINSAC/Mobile-AR-in-Edge-Computing-Client}}} 
\textit{Image generation (phase 1):} The input to this phase is continuous light signal and the output is an image frame. In this phase, the image sensor first senses the intensity of light and converts it into an electronic signal. A Bayer filter is responsible for determining the color information. \first{Then, an image signal processor (ISP) takes the raw data from the image sensor and converts it into a high-quality image frame.} The ISP performs a series of image signal processing operations to deliver a high-quality image, such as noise reduction, color correction, and edge enhancement. In addition, the ISP conducts automated selection of key camera control values according to the environment (e.g., auto-focus (AF), auto-exposure (AE), and auto-white-balance (AWB)). The whole image generation pipeline in our implemented application is constructed based on \hytt{android.hardware.camera2} which is a package that provides an interface to individual camera devices connected to an Android device. \hytt{CaptureRequest} is a class in \hytt{android.hardware.camera2} that constructs the configurations for the capture hardware (sensor, lens, and flash), the processing pipeline, and the control algorithms. Therefore, in our implemented application, we use \hytt{CaptureRequest} to set up image generation configurations. For example, \hytt{CaptureRequest.CONTROL\_AE\_MODE\_OFF} disables AE and \hytt{CaptureRequest.CONTROL\_AE\_TARGET\_FPS\_RANGE} sets the camera FPS (i.e., the number of frames that the camera samples per second). 

\textit{Preview (phase 2):} The input to this phase is a latest generated image frame with \hytt{YUV\_420\_888} format\footnote{For \hytt{android.hardware.camera2}, \hytt{YUV\_420\_888} format is recommended for YUV output \cite{ImageFormat}.} (i.e., the output of Phase 1) and the output is a camera preview rendered on a smartphone's screen with a pre-defined preview resolution. In this phase, the latest generated image frame is first resized to the desired preview resolution and then buffered in a \hytt{SurfaceTexture} which is a class capturing frames from an image stream (e.g., camera preview or video decode) as an OpenGL ES texture. Finally, the camera preview frame in \hytt{SurfaceTexture} is copied and sent to a dedicated drawing surface, \hytt{SurfaceView}, and rendered on the screen. In our implemented application, the preview resolution is set via method \hytt{SurfaceTexture.setDefaultBufferSize()}.

\textit{Image conversion (phase 3):} The input to this phase is a latest generated image frame with \hytt{YUV\_420\_888} format (i.e., the output of Phase 1) and the output is a cropped RGB image frame. In this phase, in order to further process camera captured images (i.e., object detection), an \hytt{ImageReader} class is implemented to acquire the latest generated image frame, where \hytt{ImageReader.OnImageAvailableListener} provides a callback interface for being notified that a new generated image frame is available and method \hytt{ImageReader.acquireLatestImage()} acquires the latest image frame from the \hytt{ImageReader}'s queue while dropping an older image. Additionally, the desired size and format of acquired image frames are configured once an \hytt{ImageReader} is created. In our implemented application, the desired size and the preview resolution are the same and the image format in \hytt{ImageReader} is set to \hytt{YUV\_420\_888}. Furthermore, an image converter is implemented to convert the \hytt{YUV\_420\_888} image to an \hytt{RGB} image, because the input to a CNN-based object detection model must be an \hytt{RGB} image. Finally, the converted \hytt{RGB} image is cropped to the size of the CNN model for object detections.

\textit{Wireless communication \& inference (phase 4):} The input to this phase is a converted and cropped image frame (i.e., the output of Phase 3) and the output is an object detection result. In our implemented application, the object detection result contains one or multiple bounding boxes with labels that identify the locations and classifications of the objects in an image frame. Each bounding box consists of 5 predictions: (x, y, w, h) and a confidence score \cite{yolov3}. The (x, y) coordinates represent the center of the box relative to the bounds of the grid cell. The (h, w) coordinates represent the height and width of the bounding box relative to (x, y). The confidence score reflects how confident the CNN-based object detection model is on the box containing an object and also how accurate it thinks the box is what it predicts. Our implemented application transmits the converted and cropped image frame to the edge server through a wireless TCP socket connection in real time. To avoid having the server process stale frames, the application always sends the latest generated frame to the server and waits to receive the detection result before sending the next frame for processing.

\textit{Detection result rendering (phase 5):} The input to this phase is the object detection result of an image frame (i.e., the output of Phase 4) and the output is a view with overlaid augmented objects (specifically, overlaid bounding boxes and labels in this paper) on top of the physical objects (e.g., a cup).

\textbf{Edge Server.} The edge server is developed to process received image frames and to send the detection results back to the MAR client. We implement an edge server on an Nvidia Jetson AGX Xavier, which connects to a WiFi access point (AP) through a $1$Gbps Ethernet cable. The transmission latency between the server and AP can be ignored. Two major modules are implemented on the edge server \footnote{\first{GitHub: https://github.com/WINSAC/Mobile-AR-in-Edge-Computing-Server}}: (i) the \textit{communication handler} which establishes a TCP socket connection with the MAR device and (ii) the \textit{analytics handler} which performs object detection for the MAR client. In this paper, the analytics handler is designed based on a custom framework called Darknet \cite{darknet13} with GPU acceleration and runs YOLOv3 \cite{yolov3}, a large Convolutional Neural Networks (CNN) model. The YOLOv3 model used in our experiments is trained on COCO dataset \cite{lin2014microsoft} and can detect $80$ classes.

\textbf{MAR Client.} We implement an MAR client on a rooted Android smartphone, Nexus $6$, which is equipped with Qualcomm Snapdragon $805$ SoC (System-on-Chip). The CPU frequency ranges from $0.3$ GHz to $2.649$ GHz.

\textbf{Power Monitor.} The power monitor is responsible for measuring the power consumption of the MAR client. We use Monsoon Power Monitor \cite{Monsoon}, which can sample at $5,000$ Hz, to provide power supply for the MAR device. The power measurements are taken with the screen on, with the Bluetooth/LTE radios disabled, and with minimal background application activity, ensuring that the smartphone's \emph{base power} is low and does not vary unpredictably over time. The base power is defined as the power consumed when the smartphone is connected to the AP without any data transmission activity. The detailed energy measurement methodology is presented in our previous paper \cite{9274509}.

\textbf{Key Performance Metrics.} We define three performance metrics to evaluate the MAR system:
\begin{itemize}
    \item \textit{Per frame energy consumption:} The per frame energy consumption is the total amount of energy consumed in an MAR client by successfully performing the object detection on one image frame. It includes the energy consumed by camera sampling (i.e., image generation), screen rendering (i.e., preview), image conversion, communication, and operating system.
    \item \textit{Service latency:} The service latency is the total time needed to derive the detection result on one image frame. It includes the latency of image conversion, transmission, and inference.
    \item \textit{Accuracy:} The mean average precision (mAP) is a commonly used performance metric to evaluate the detection accuracy of a visual object detection algorithm \cite{everingham2010pascal}, where a greater accuracy is indicated by a higher mAP.
\end{itemize}

\begin{figure}[t]
\centering
\subfigure[]
{\includegraphics[width=0.24\textwidth]{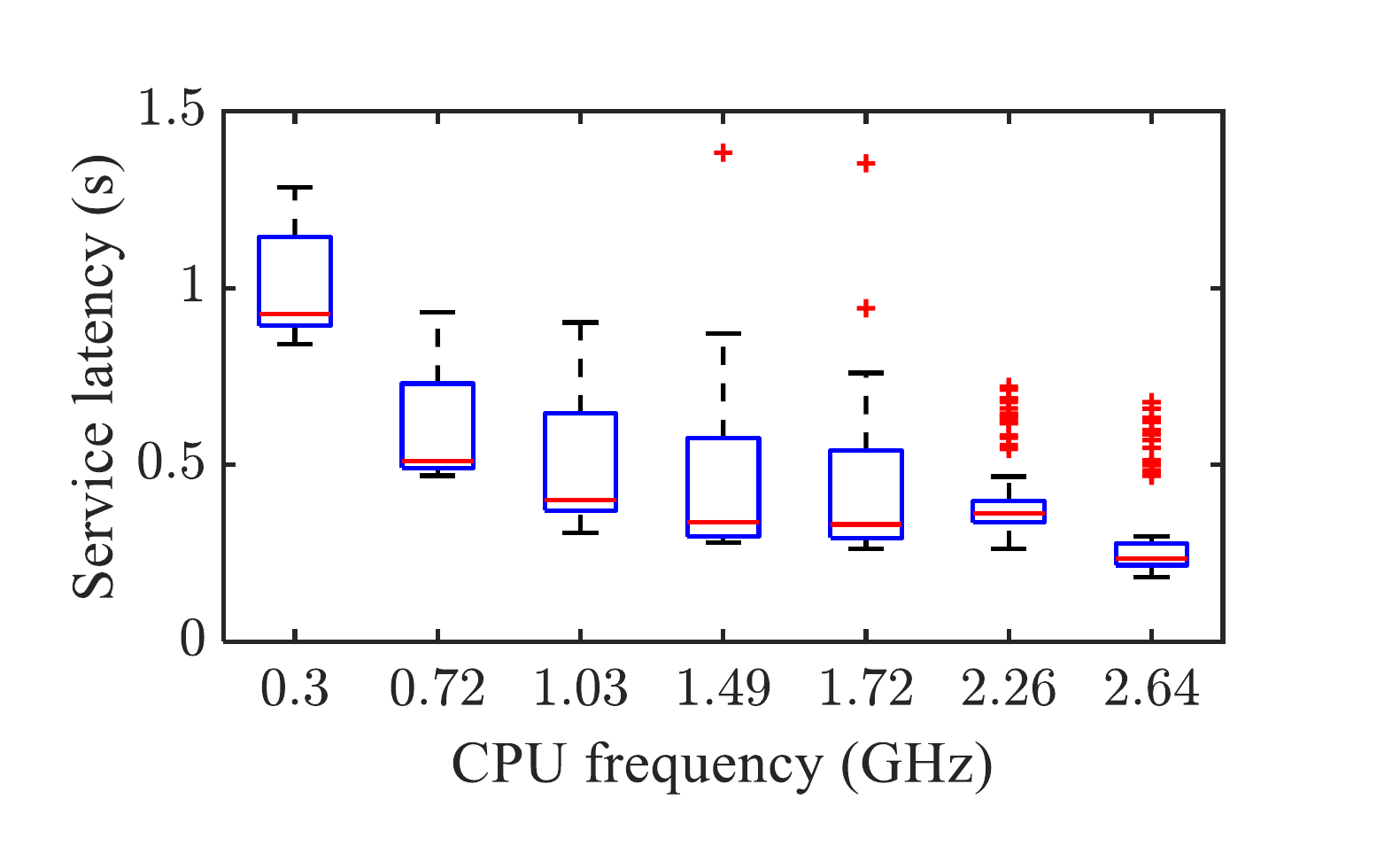}\label{fig:CPUlatency}}
\subfigure[]
{\includegraphics[width=0.232\textwidth]{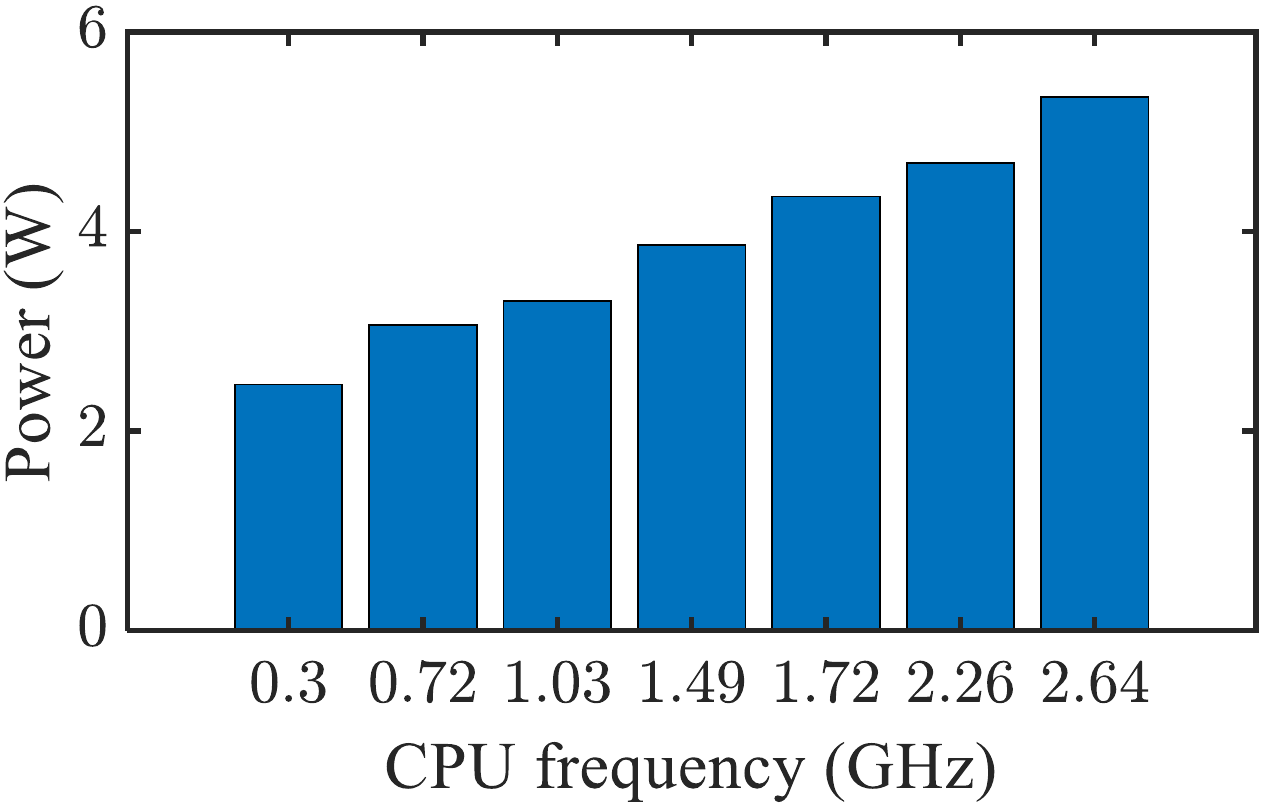}\label{fig:CPUpower}}
\vspace{-0.1 in}
\caption{CPU frequency vs. service latency and power (computation model size: $320^2$ pixels).}
\label{fig:CPUimpact}
\vspace{-0.15in}
\end{figure}

\subsection{The Impact of CPU Frequency on Power Consumption and Service Latency}

\first {In this experiment, we seek to investigate how the CPU frequency impacts the power consumption of the MAR device and the service latency. We set the test device to the \textit{Userspace} Governor and change its CPU frequency manually by writing files in the \hytt{/sys/devices/system/cpu/[cpu\#]/cpufreq} virtual file system with root privilege. The camera FPS is set to 15 and the computation model size is $320^2$ pixels.}
The results are shown in Fig. \ref{fig:CPUimpact}. The lower the CPU frequency, the longer service latency the MAR client derives and the less power it consumes. However, the reduction of the service latency and the increase of the power consumption is disproportional. For example, as compared to $1.03$ GHz, $1.72$ GHz reduces about $2\%$ service latency but increases about $15\%$ power consumption. As compared to $0.3$ GHz, $0.72$ GHz reduces about $60\%$ service latency, but only increases about $20\%$ power consumption. 

\textbf{Insight:} \textit{
\first{This result advocates adapting the client's CPU frequency for the service latency reduction by trading as little increase of the power consumption as possible.}
} 

\begin{figure}[t]
\centering
\subfigure[]
{\includegraphics[width=0.232\textwidth]{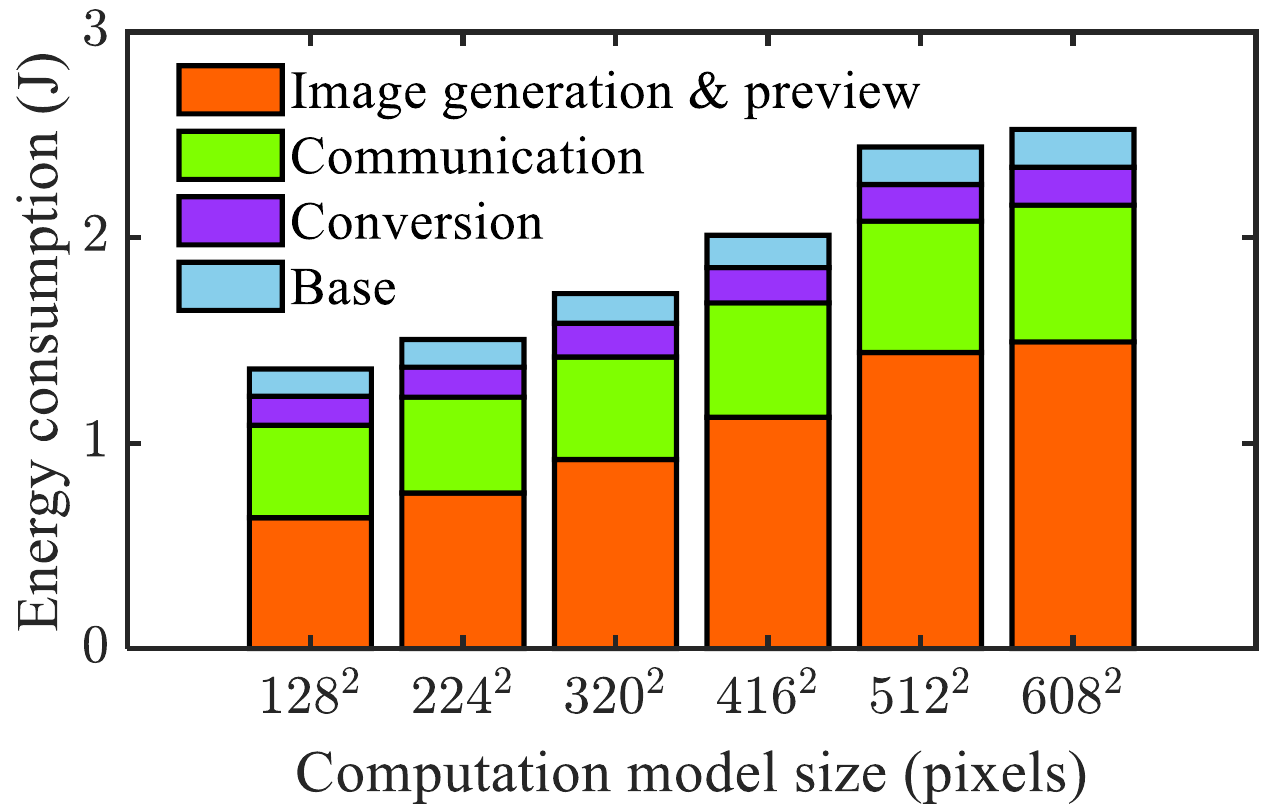}\label{fig:remoteen}}
\subfigure[]
{\includegraphics[width=0.24\textwidth]{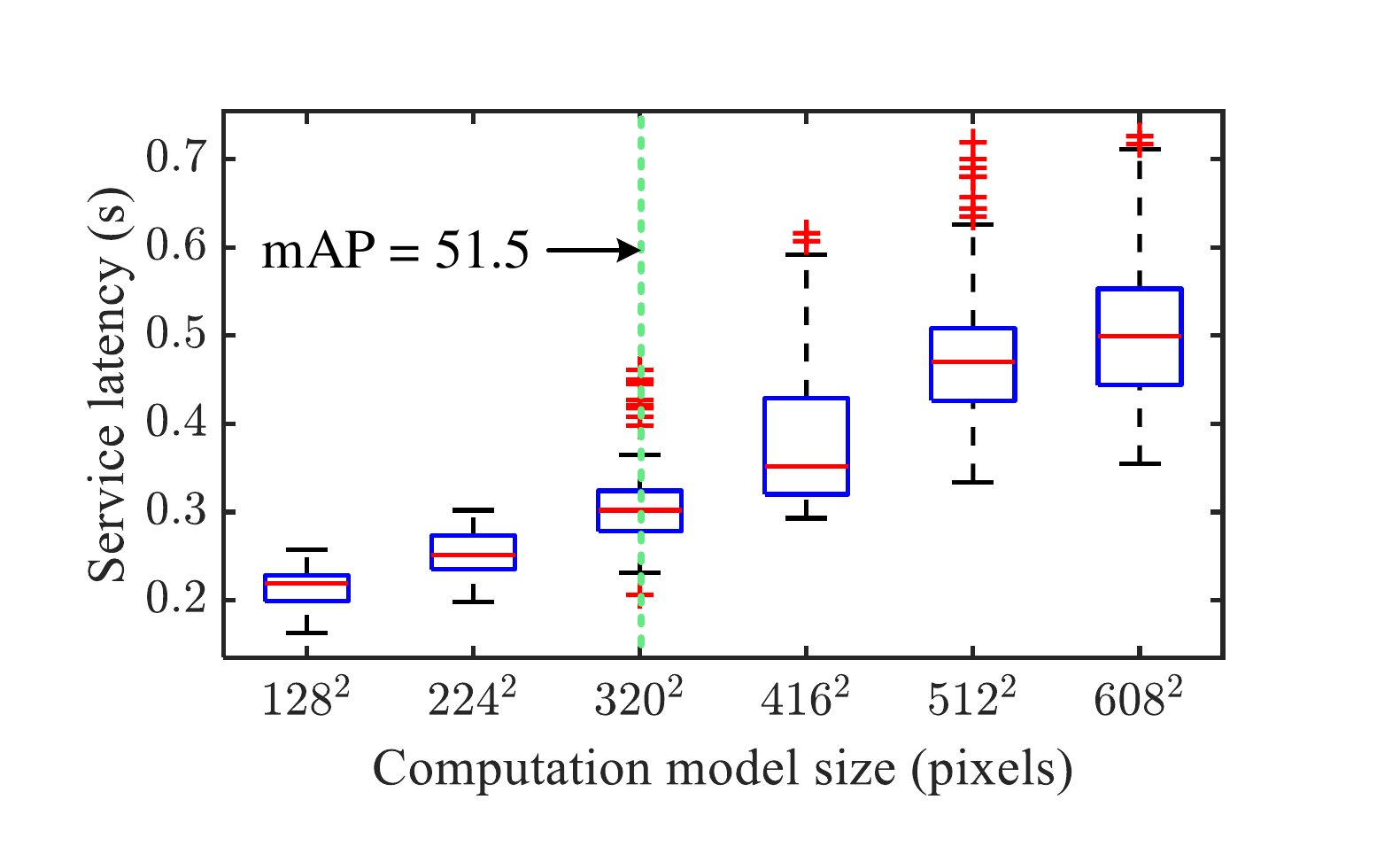}\label{fig:remotela}}
\vspace{-0.1 in}
\caption{Computation model size vs. energy consumption and service latency.}
\label{fig:remotevslocal}   
\vspace{-0.15in}
\end{figure}

\subsection{The Impact of Computation Model Size on Energy Consumption and Service Latency}
\first {In this experiment, we implement object detection based on the YOLOv3 framework with six different CNN model sizes. The test device works on the default CPU governor, \textit{Interactive} and its camera FPS is set to 15.} Increasing the model size always results in a gain of mAP. However, the gain on mAP becomes smaller as the increase of the model sizes \cite{liu2018edge}. In addition, the per frame energy consumption and the service latency boost $85\%$ and $130\%$, respectively, when the model size increases from $128^2$ to $608^2$ pixels, as shown in Figs. \ref{fig:remoteen} and \ref{fig:remotela}. 

\textbf{Insight:} \textit{This result inspires us to trade mAP for the per frame energy consumption and service latency reduction when the model size is large.}

\begin{figure}[t]
\centering
\subfigure[]
{\includegraphics[width=0.232\textwidth]{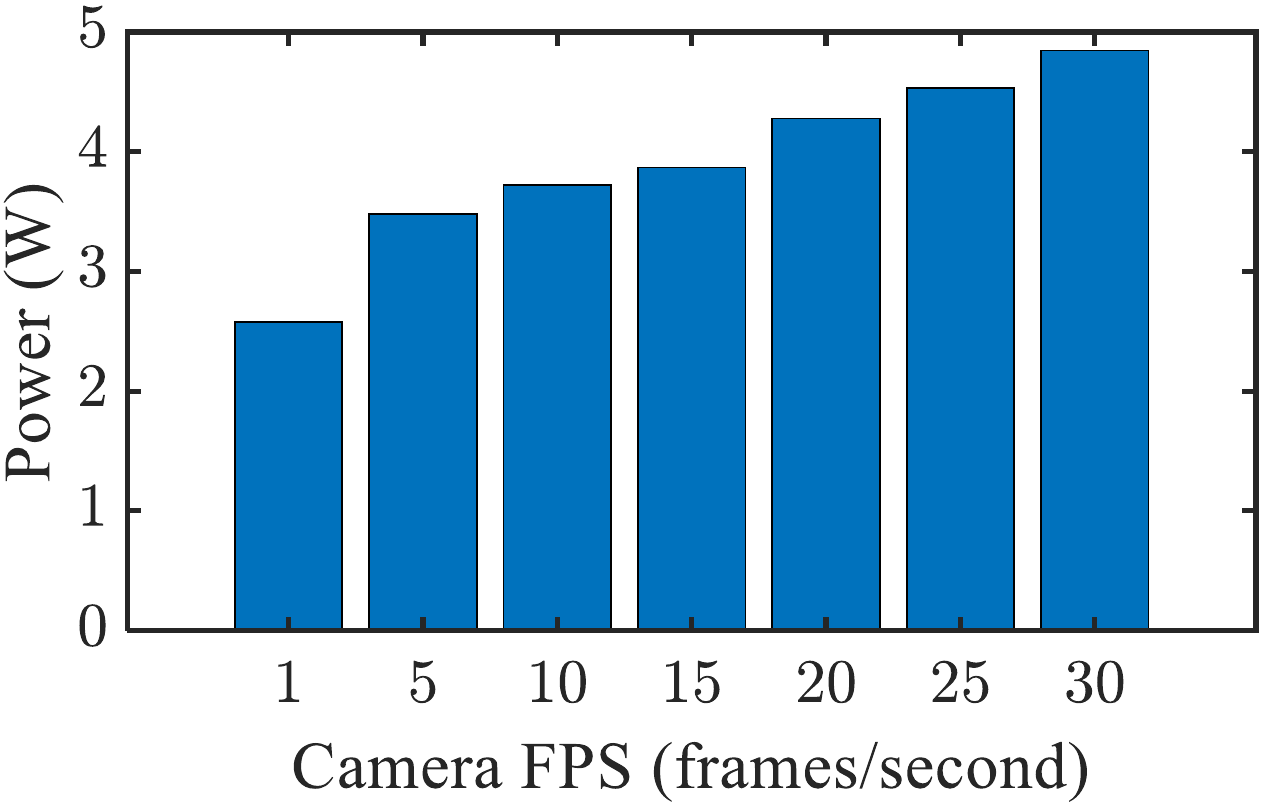}\label{fig:cameraFPSpower}}
\subfigure[]
{\includegraphics[width=0.24\textwidth]{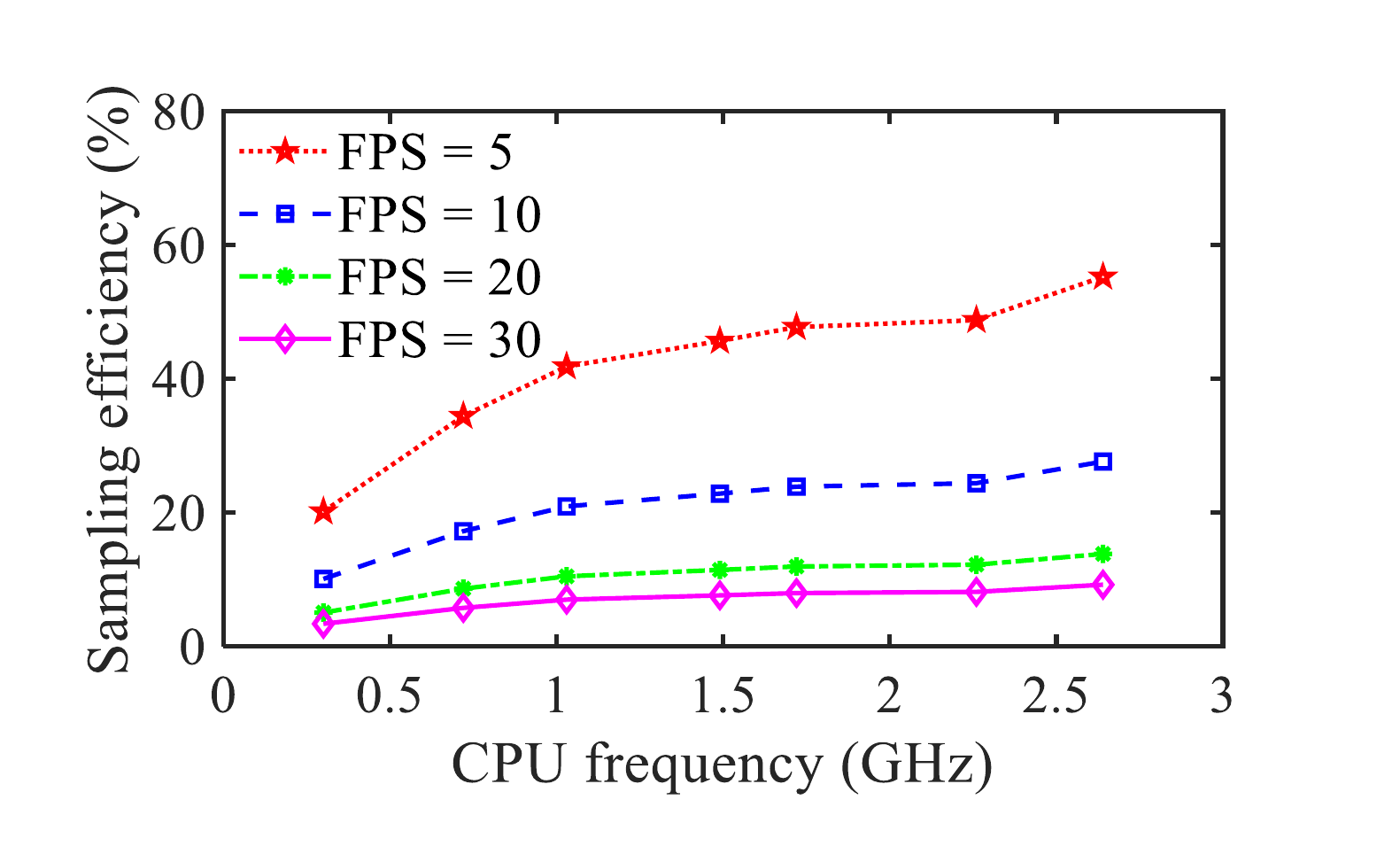}\label{fig:cameraFPSeff}}
\caption{Camera FPS vs. power and sampling efficiency (computation model size: $320^2$ pixels).}
\label{fig:FPSimpact}   
\end{figure}

\subsection{The Impact of Camera FPS on Power Consumption}

\first {In this experiment, we vary the MAR client's camera FPS to explore how it impacts the device's power consumption, where the camera FPS is defined as the number of frames that the camera samples per second. The computation model size is $320^2$ pixels and the default CPU frequency is $1.49$ GHz.}
Fig. \ref{fig:cameraFPSpower} shows that a large camera FPS leads to a high power consumption. However, as shown in Fig. \ref{fig:pipeline}, not every camera captured image frame is sent to the edge server for detection. Because of the need (i) to avoid the processing of stale frames and (ii) to decrease the transmission energy consumption, only the latest camera sampled image frame is transmitted to the server. This may result in the MAR client expending significant reactive power for sampling non-detectable image frames. In Fig. \ref{fig:cameraFPSeff}, we quantify the sampling efficiency with the variation of the camera FPS. As we expected, a large camera FPS leads to a lower sampling efficiency (e.g., less than $2\%$ of the power is consumed for sampling the detectable image frames when the camera FPS is set to $30$). However, in most MAR applications, users usually request a high camera FPS for a smoother preview experience, which is critical for tracking targets in physical environments. Interestingly, increasing CPU frequency can reduce the reactive power for sampling, as shown in Fig. \ref{fig:cameraFPSeff}. 

\textbf{Insight:} \textit{This result demonstrates that when a high camera FPS is requested, increasing CPU frequency can promote the sampling efficiency but may also boost the power consumption. Therefore, finding a CPU frequency that can balance this tradeoff is critical.}

\subsection{User Preference}
\label{ssc:preference}
An MAR client may have variant preferences in different implementation cases, including:
\begin{itemize}
\item \textbf{Latency-preferred.} The MAR application of cognitive assistance \cite{ha2014towards}, where a wearable device helps visually impaired people navigate on a street, may require a low service latency but can tolerate a relatively high number of false positives (i.e., false alarms are fine but missing any potential threats on the street is costly).
\item \textbf{Accuracy-preferred.} An MAR application for recommending products in shopping malls or supermarkets may tolerate a long latency but requires a high detection accuracy and preview smoothness.
\item \textbf{Preview-preferred.} The MAR drawing assistant application \cite{SketchAR}, where a user is instructed to trace virtual drawings from the phone, may tolerate a long latency (i.e., only needs to periodically detect the position of the paper where the user is drawing on) but requires a smooth preview to track the lines that the user is drawing. 
\end{itemize}

\textbf{Insight:} \textit{This observation infers that the user preference's diversity may significantly affect the tradeoffs presented above. For instance, for the accuracy-preferred case, trading detection accuracy for the per frame energy consumption or service latency reduction works against the requirement of the user.}

\section{Proposed System Architecture}
\label{sc:design}

\begin{figure}[t]
\centering
\includegraphics[width=0.48\textwidth]{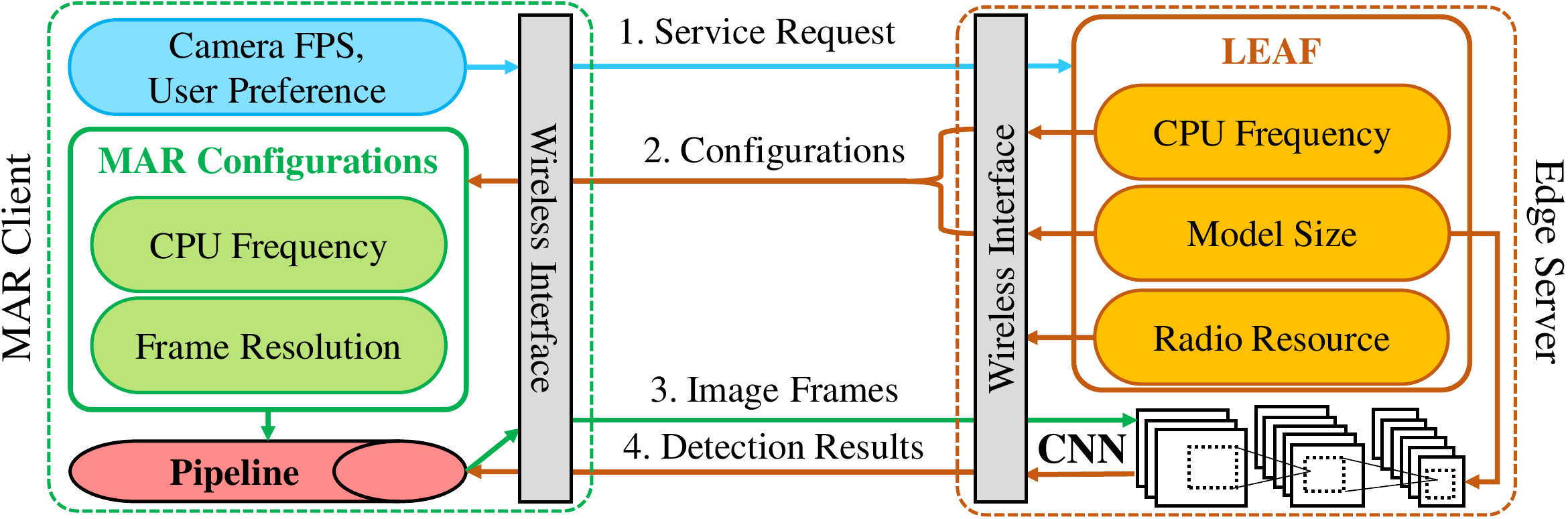}
\caption{Overview of the proposed edge-based MAR system.}
\label{fig:overview}
\end{figure}

\first{Based on the above insights, we propose an edge-based MAR system for object detection to reduce the per frame energy consumption of MAR clients by dynamically selecting the optimal combination of MAR configurations. To derive the optimal MAR configurations, we propose an optimization algorithm (LEAF) that supports \underline{l}ow-\underline{e}nergy, \underline{a}ccurate, and \underline{f}ast MAR applications.}

Fig. \ref{fig:overview} shows the overview of our proposed system. In the first step, MAR clients send their service requests and selected camera FPS and user preferences to an edge server. In the second step, according to the received camera FPS and user preferences, the edge server determines the optimal CPU frequency, computation model size, and allocated radio resource for each MAR client using our proposed LEAF algorithm. The determined CPU frequency and computation model size are then sent back to corresponding MAR clients as MAR configuration messages. In the third step, MAR clients set their CPU frequency to the optimal value and resize their latest camera sampled image frames based on the received optimal computation model size. After the CPU frequency adaptation and image frame resizing, MAR clients transmit their image frames to the edge server for object detection. In the final step, the edge server returns detection results to corresponding MAR clients. The LEAF will be executed when (i) a new MAR client joins the system; (ii) an MAR client leaves the system; or (iii) an MAR client re-sends the service request with a new user preference.

However, designing such a system is challenging. From the presented insights in the previous section, the interactions among the MAR system configuration variables, user preference, camera FPS, and the per frame energy consumption are complicated. (i) Some configuration variables improve one performance metric but impair another one. For example, a lower computation model size reduces the service latency but decreases the detection accuracy. (ii) Some configuration variables may affect the same metric in multiple ways. For example, selecting a higher CPU frequency can decrease the per frame energy consumption by increasing the sampling efficiency, but it increases the CPU power, which conversely increases the per frame energy consumption. Unfortunately, there is no analytical model for characterizing these interactions in the MAR system and it is not possible to design a prominent optimization algorithm without thoroughly analyzing these interactions.

\section{Proposed Analytical Model and Problem Formulation}
\label{sc:models}
In this section, we thoroughly investigate the complicated interactions among the MAR configuration parameters, user preference, camera FPS, and the key performance metrics presented in Section \ref{sc:Experimental Results}. We first propose a comprehensive analytical model to theoretically dissect the per frame energy consumption and service latency. The proposed model is general enough to handle any MAR device and application. Then, using the proposed model, we further model multiple fine-grained interactions, whose theoretical properties are complex and hard to understand, via a data-driven methodology. Finally, based on the above proposed models, we formulate the MAR reconfiguration as an optimization problem. 
\subsection{Analytics-based Modeling Methodology}
\label{ssc:Analytics-based}

\begin{figure*}[t]
\centering
\includegraphics[width=0.98\textwidth]{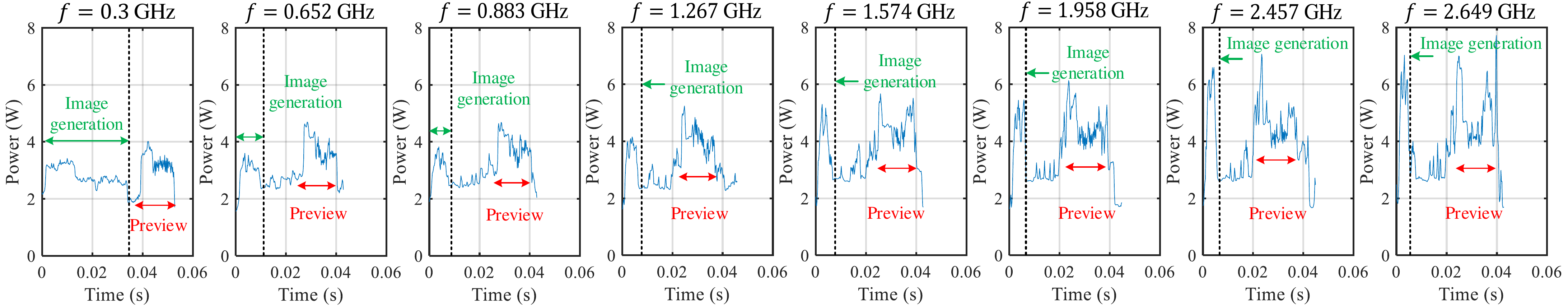}
\vspace{-0.15 in}
\caption{The impact of CPU frequency on the power consumption of image generation and preview.}
\label{fig:imagegeneration}
\vspace{-0.2 in}
\end{figure*}

We consider an edge-based MAR system with $K$ MAR clients and one edge server, where clients are connected to the edge server via a single-hop wireless network. Denote $\mathcal{K}$ as the set of MAR clients. The per frame service latency of the $k$th MAR client can be defined as

\begin{small}
\begin{equation}
    L^{k} = L^{k}_{cv} + L^{k}_{tr} + L^{k}_{inf},
\end{equation}
\end{small}\noindent
where $L^{k}_{cv}$ is the image conversion latency caused by converting a buffered camera captured image frame from YUV to RGB; $L^{k}_{tr}$ is the transmission latency incurred by sending the converted RGB image frame from the $k$th client to its connected edge server; and $L^{k}_{inf}$ is the inference latency of the object detection on the server. According to the MAR pipeline depicted in Fig. \ref{fig:pipeline}, the per frame energy consumption of the $k$th MAR client can be defined as

\begin{small}
\begin{equation}
    E^{k} = E^{k}_{img} + E^{k}_{cv} + E^{k}_{com} + E^{k}_{bs},
\end{equation}
\end{small}\noindent
where $E^{k}_{img}$ is the image generation and preview energy consumption incurred by image sampling, processing, and preview rendering; $E^{k}_{cv}$ is the image conversion energy consumption; $E^{k}_{com}$ is the wireless communication energy consumption, which includes four phases: \textit{promotion, data transmission, tail}, and \textit{idle}; and $E^{k}_{bs}$ is the MAR device base energy consumption.

\textbf{The Model of Image Generation and Preview.} Image generation is the process that an MAR client transfers its camera sensed continuous light signal to a displayable image frame. Preview is the process of rendering the latest generated image frame on the client's screen. As these two processes are executed in parallel with the main thread, their execution delays are not counted in the per frame service latency. 

As depicted in Fig. \ref{fig:remoteen}, the energy consumption of image generation and preview is the largest portion of the per frame energy consumption. To understand how energy is consumed in image generation and preview and what configuration variables impact it, we conduct a set of experiments. We find that \textit{the power consumption of image generation and preview highly depends on the CPU frequency.} Fig. \ref{fig:imagegeneration} shows the power consumption of image generation and preview under different CPU frequencies, where the camera FPS is set to $15$. A higher CPU frequency results in a higher average power consumption. In addition, the image generation delay is also closely related to the CPU frequency, where a higher CPU frequency always leads to a shorter delay. However, the delay of rendering a preview is only related to the GPU frequency, which is out of the scope of this paper. Thus, we consider the preview delay as a fixed value with any CPU frequencies. We model the energy consumption of the $k$th MAR client's image generation and preview within a service latency as
\begin{small}
\begin{equation}
    E^{k}_{img} = \left(\int_{0}^{t^{k}_{gt}(f_{k})} P^{k}_{gt}(f_{k})\, dt + \int_{0}^{t_{prv}} P^{k}_{prv}(f_{k})\, dt\right)\cdot fps_{k}\cdot L^{k},
    \vspace{-0.05 in}
\end{equation}
\end{small}\noindent
where $P^{k}_{gt}$, $P^{k}_{prv}$, $t^{k}_{gt}$, $t_{prv}$ are the power consumption of image generation, preview, the delay of image generation, and preview, respectively. $f_{k}$ is the CPU frequency. $fps_{k}$ is the camera FPS. $P^{k}_{gt}$, $P^{k}_{prv}$, and $t^{k}_{gt}$ are functions of $f_{k}$.

\textbf{The Model of Image Conversion.} Image conversion is processed through the MAR client's CPU; and hence, the conversion latency and power consumption highly depend on the CPU frequency. We define $L^{k}_{cv}$ and $E^{k}_{cv}$ a function of $f_{k}$. Therefore, the major source of the power consumption of the image conversion is the CPU computation. The power consumption of mobile CPUs can be divided into two components, $P^{k}_{cv} = P_{leak} + P^{k}_{dynamic}$ \cite{devogeleer2014modeling}, where $P_{leak}$ is independent and $P^{k}_{dynamic}$ is dependent upon the CPU frequency. (i) $P_{leak}$ is the power originating from leakage effects and is in essence not useful for the CPU's purpose. In this paper, we consider $P_{leak}$ a constant value $\epsilon$. (ii) $P^{k}_{dynamic}$ is the power consumed by the logic gate switching at $f_{k}$ and is proportional to $V_{k}^{2}f_{k}$, where $V_{k}$ is the supply voltage for the CPU. Due to the DVFS for the power saving purpose, e.g., a higher $f_{k}$ will be supplied by a larger $V_{k}$, each $f_{k}$ matches with a specific $V_{k}$, where $V_{k} \propto (\alpha_{1}f_{k}+\alpha_{2})$. $\alpha_{1}$ and $\alpha_{2}$ are two positive coefficients. Thus, the energy consumption of converting a single image frame of the $k$th MAR client can be modeled as
\begin{small}
\begin{equation}
E^{k}_{cv} = P^{k}_{cv}L^{k}_{cv} = (\alpha^{2}_{1}f^{3}_{k}+2\alpha_{1}\alpha_{2}f^{2}_{k}+\alpha_{2}f_{k}+\epsilon)\cdot L^{k}_{cv}(f_{k}).
\vspace{-0.05 in}
\label{eq:conversionPower}
\end{equation}
\end{small}\noindent

\textbf{The Model of Wireless Communication and Inference.}
Intuitively, the wireless communication latency is related to the data size of the transmitted image frame (determined by the frame resolution) and wireless data rate. As the data size of detection results is usually small, we do not consider the latency caused by returning the detection results \cite{liu2018edge}. In this paper, we use $s_{k}^{2}$ (pixels) to represent the computation model size of the $k$th MAR client. The client must send image frames whose resolutions are not smaller than $s_{k}^{2}$ to the edge server to obtain the corresponding detection accuracy. Thus, the most efficient way is to transmit the image frame with the resolution of $s_{k}^{2}$ to the server. Denote $\sigma$ as the number of bits required to represent the information carried by one pixel. The data size of an image frame is calculated as $\sigma s_{k}^{2}$ bits. Let $B_{k}$ be the wireless bandwidth derived by the $k$th MAR client. We model the transmission latency of the $k$th client as
\begin{small}
\begin{equation}
    L^{k}_{tr} = \frac{\sigma s_{k}^{2}}{R_{k}},
\end{equation}
\end{small}\noindent
where $R_{k}$ is the average wireless data rate of the $k$th client, which is a function of $B_{k}$.

In addition to the computation model size and wireless bandwidth, the transmission latency is also determined by the MAR client's CPU frequency. This is because the image transmission uses TCP as the transport layer protocol, and TCP utilizes substantial CPU capacity to handle congestion avoidance, buffer, and retransmission requests. For example, when the CPU frequency is low, the remaining CPU capacity may not be adequate to process the TCP task; and thus, the TCP throughput is decreased. Therefore, $R_{k}$ is also a function of $f_{k}$, i.e., $R_{k}(B_{k},f_{k})$. In this paper, $R_{k}(B_{k},f_{k})$ is defined as
\begin{small}
\begin{equation}
    R_{k}(B_{k},f_{k}) = r_{k}^{max}(B_{k})\cdot r_{k}^{*}(f_{k}),
\end{equation}
\end{small}\noindent
where $r_{k}^{max}(B_{k})$ is the network throughput, which is not affected by the variation of the MAR client's CPU frequency, and is only determined by the bandwidth (more comprehensive model of this part can be found in \cite{xiao2013modeling}, which is out of the scope of this paper); $r_{k}^{*}(f_{k})$ represents the impact of the CPU frequency on the TCP throughput. 

\begin{figure}[t]
\centering
\subfigure[]
{\includegraphics[width=0.23\textwidth]{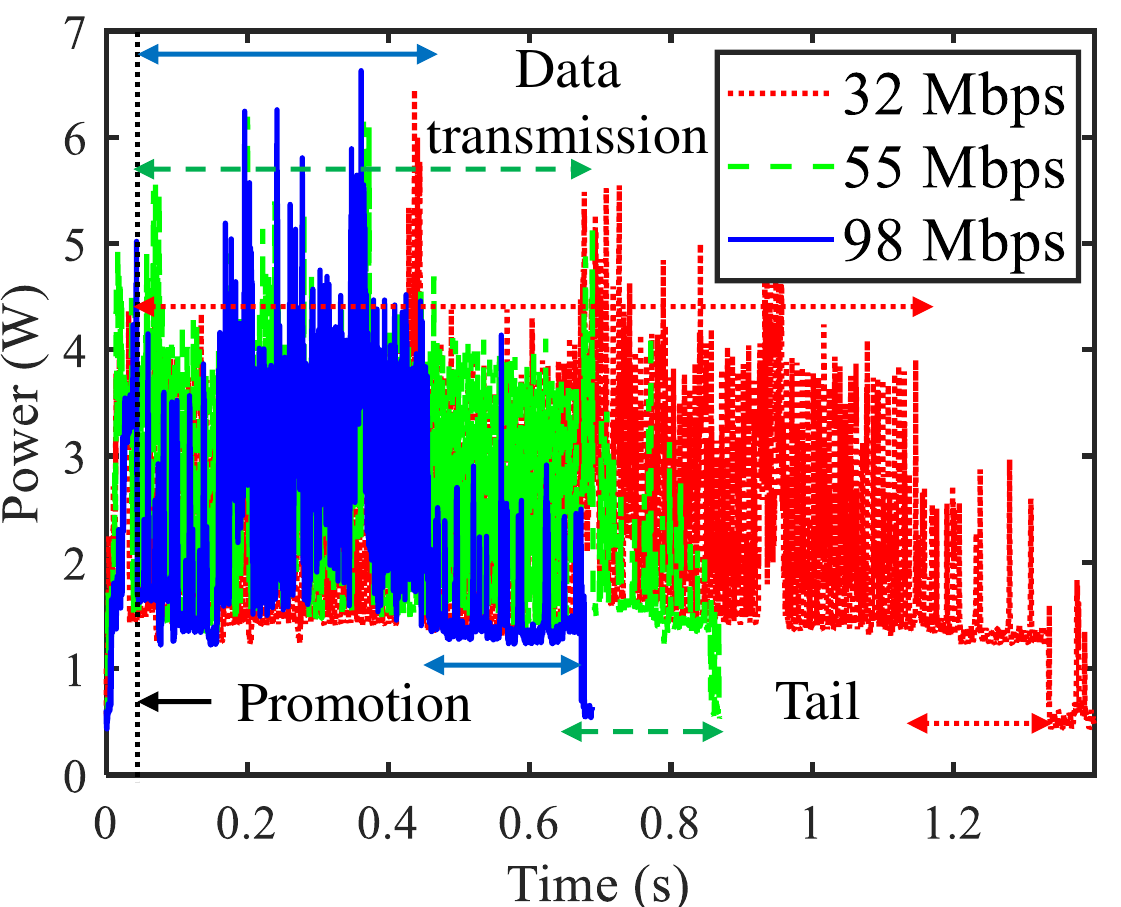}\label{fig:multitrans}}
\subfigure[]
{\includegraphics[width=0.24\textwidth]{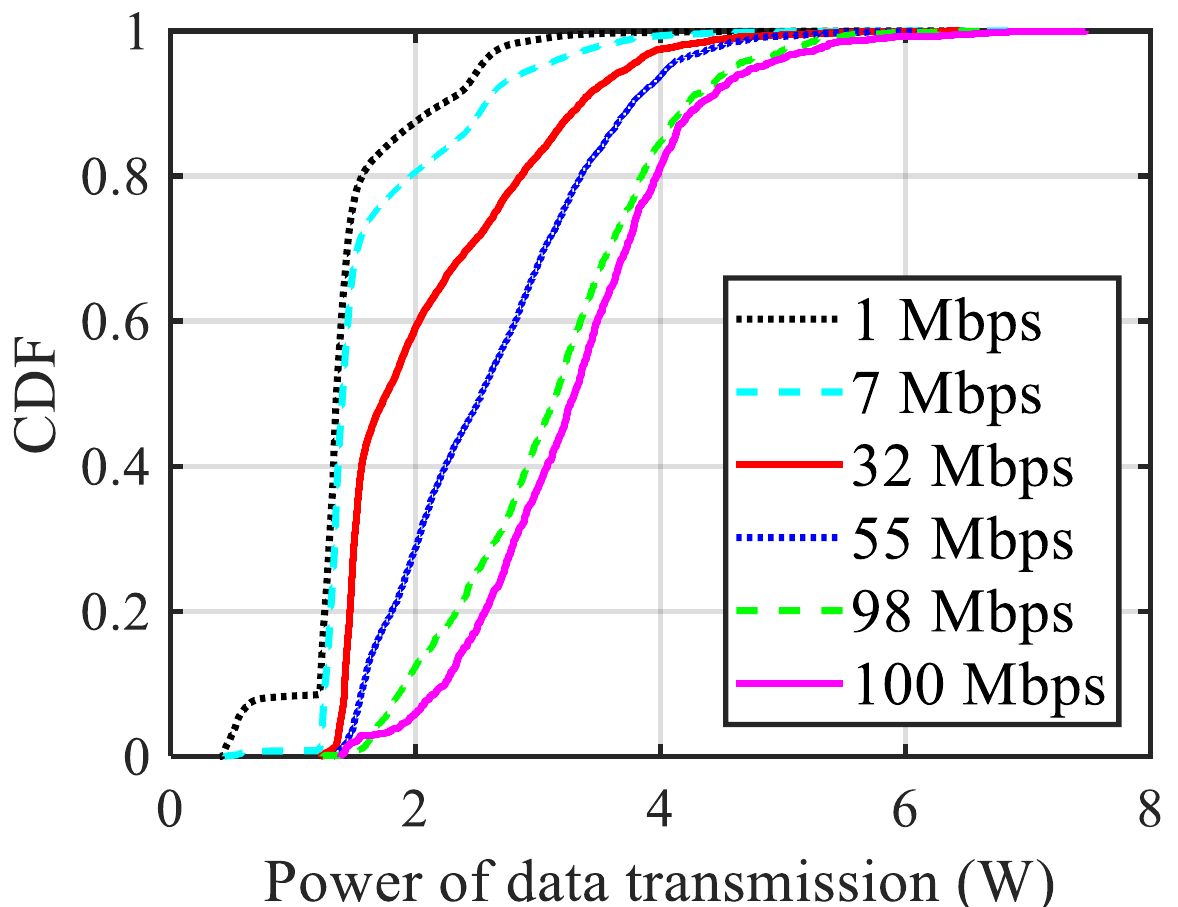}\label{fig:throughputpowerCDF}}
\vspace{-0.1in}
\caption{MAR client's wireless interface power consumption.}
\label{fig:wirelessmodel}   
\vspace{-0.25in}
\end{figure}

In WiFi networks, when transmitting a single image frame, the MAR client's wireless interface experiences four phases: promotion, data transmission, tail, and idle. When an image transmission request comes, the wireless interface enters the promotion phase. Then, it enters the data transmission phase to send the image frame to the edge server. After completing the transmission, the wireless interface is forced to stay in the tail phase for a fixed duration and waits for other data transmission requests and the detection results. If the MAR client does not receive the detection result in the tail phase, it enters the idle phase and waits for the feedback from its associated edge server. Fig. \ref{fig:wirelessmodel} depicts the measured power consumption of the MAR client that transmits a $3,840\times2,160$ pixel image with different throughput. We find that the average power consumption of the data transmission phase increases as the throughput grows. However, the average power consumption and the duration of promotion and tail phases are almost constant. Therefore, we model the energy consumption of the $k$th MAR client in the duration that starts from the promotion phase to obtaining the object detection result as
\begin{small}
\begin{equation}
    E^{k}_{com} = P^{k}_{tr}(R_{k}(B_{k},f_{k}))L^{k}_{tr} + P^{k}_{idle}t^{k}_{idle} + P_{pro}t_{pro} + P_{tail}t_{tail},
    \vspace{-0.05 in}
\end{equation}
\end{small}\noindent
where $P^{k}_{tr}$, $P^{k}_{idle}$, $P_{pro}$, and $P_{tail}$ are the average power consumption of the data transmission, idle, promotion, and tail phases, respectively; $t^{k}_{idle}$, $t_{pro}$, and $t_{tail}$ are the durations of the idle, promotion, and tail phases, respectively,
\begin{small}
\begin{equation}
P_{idle}^{k}t_{idle}^{k} = 
\begin{cases}
    0, & L_{inf}^{k}(s_{k}^{2}) \leq t_{tail},\\
    P_{bs}^{k}\cdot(L_{inf}^{k}(s_{k}^{2})-t_{tail}), & L_{inf}^{k}(s_{k}^{2}) > t_{tail},
    \vspace{-0.05 in}
\end{cases}
\end{equation}
\end{small}\noindent
where $P_{bs}^{k}$ is the MAR device's base power consumption; $L_{inf}^{k}(s_{k}^{2})$ is the inference latency on the edge server, which is determined by the computation model size \cite{liu2018edge}. Note that our proposed wireless communication model can also be used in other wireless networks (e.g., LTE).

\textbf{The Model of Base Energy.} In this paper, the base energy consumption is defined as the energy consumed by the MAR clients' CPU without any workloads, except running its operating system, and the energy consumed by the screen without any rendering. Because the screen's brightness is not a critical factor that affects the object detection performance, it is considered as a constant value in our proposed power model. Thus, the base power consumption is only a function of the CPU frequency. We model the base energy consumption of the $k$th MAR client within a service latency as
\begin{small}
\begin{equation}
    E^{k}_{bs} = 
    \begin{cases}
    P^{k}_{bs}(f_{k})\cdot L^{k}, &  L_{inf}^{k}(s_{k}^{2}) \leq t_{tail},\\
    P^{k}_{bs}(f_{k})\cdot (L^{k}-L_{inf}^{k}(s_{k}^{2})+t_{tail}), &  L_{inf}^{k}(s_{k}^{2}) > t_{tail}.
    \vspace{-0.05 in}
    \end{cases}
    \label{eq:basepower}
\end{equation}
\end{small}\noindent

\subsection{Regression-based Modeling Methodology}

\begin{figure*}[t]
\centering
\subfigure[]
{\includegraphics[width=0.195\textwidth]{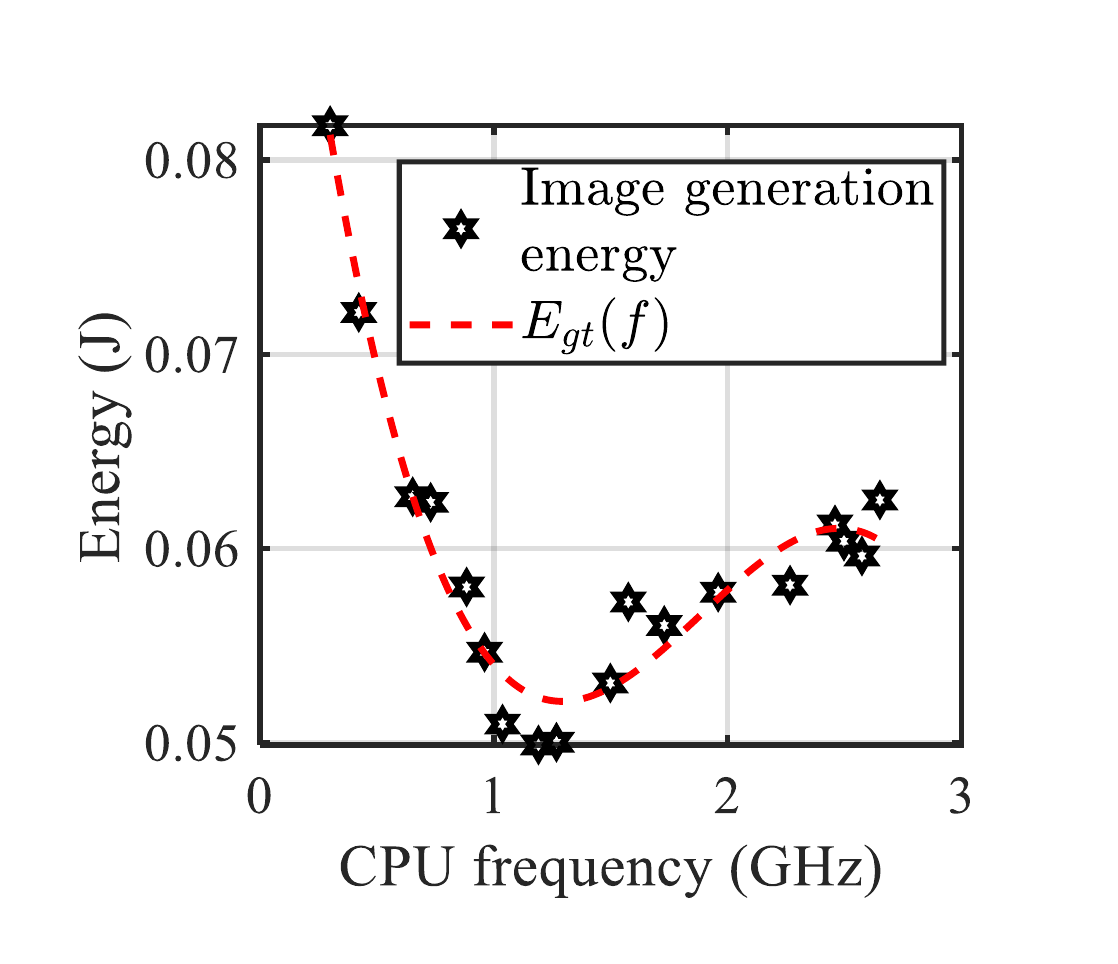}\label{fig:Egt}}
\subfigure[]
{\includegraphics[width=0.195\textwidth]{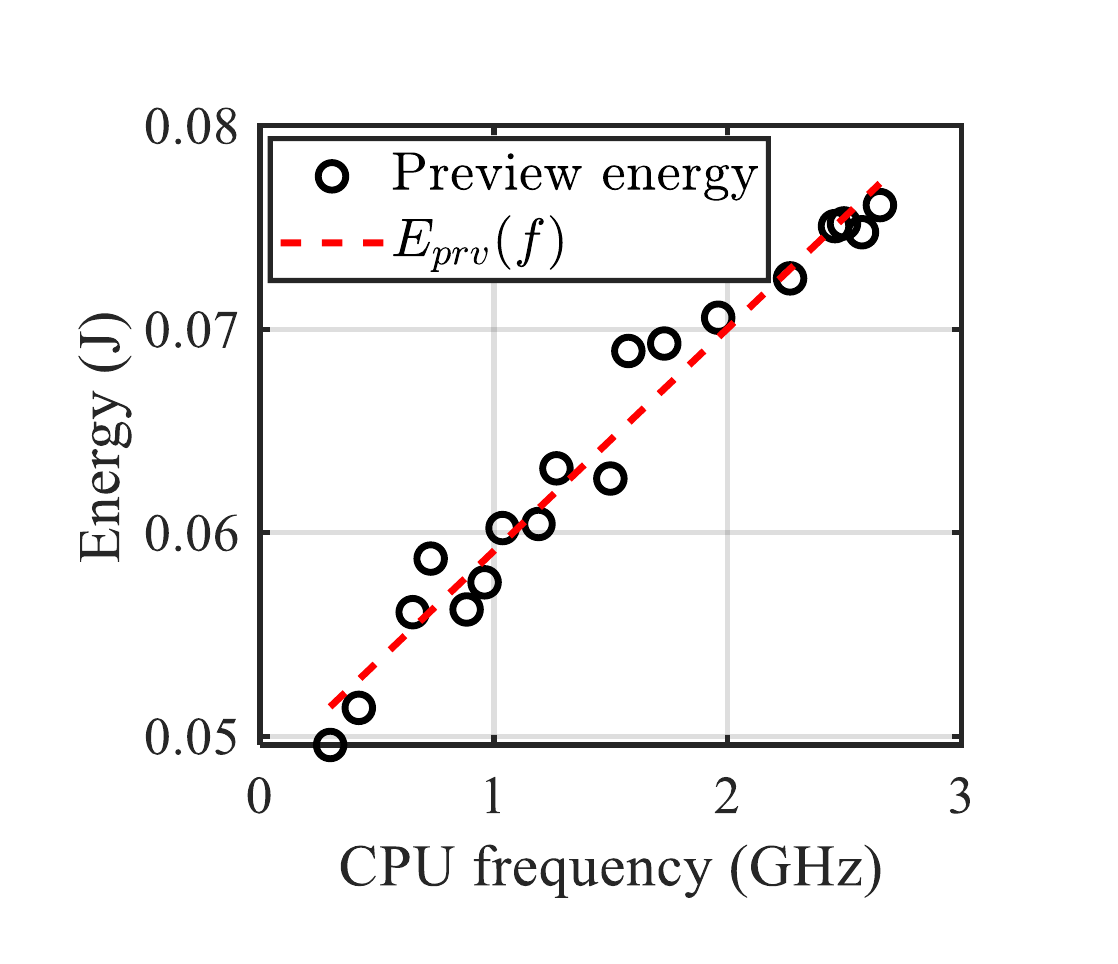}\label{fig:Eprv}}
\subfigure[]
{\includegraphics[width=0.195\textwidth]{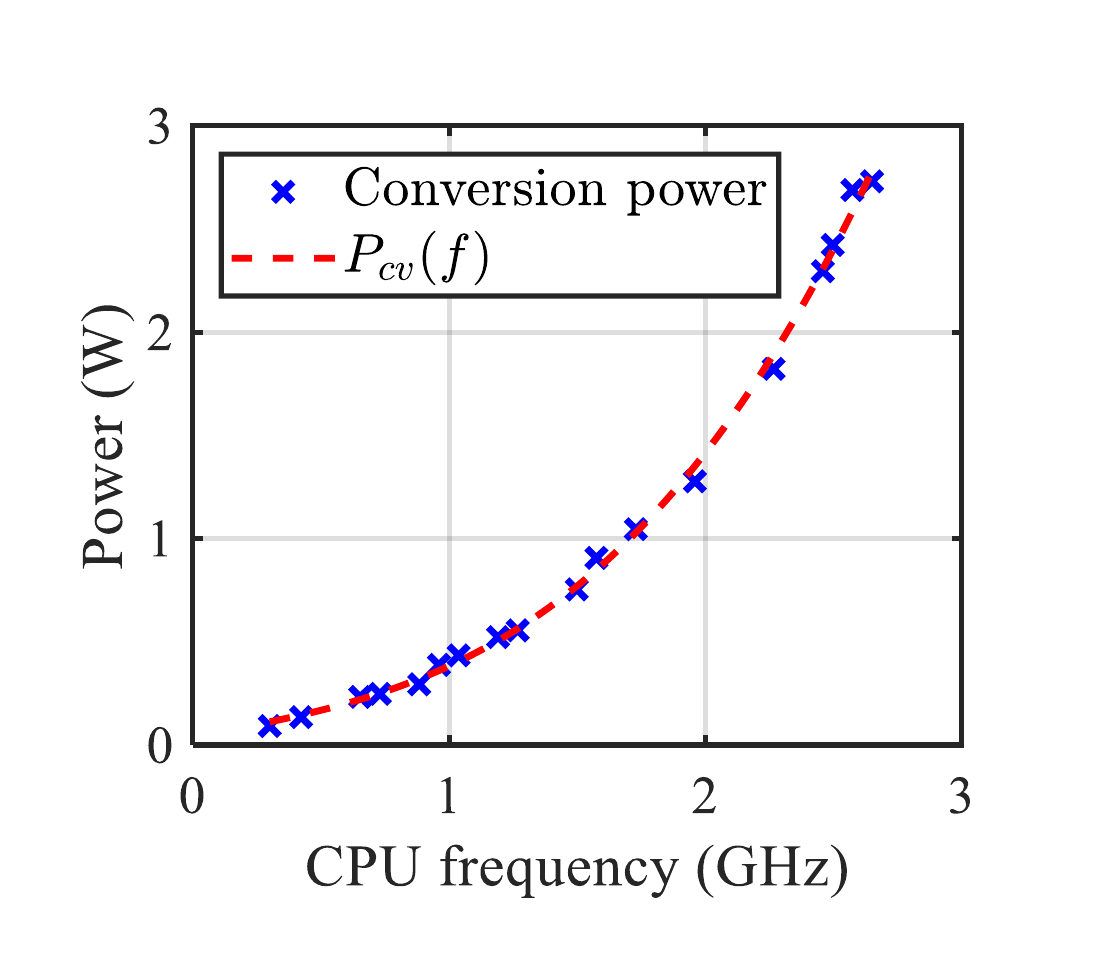}\label{fig:Pcv}}
\subfigure[]
{\includegraphics[width=0.195\textwidth]{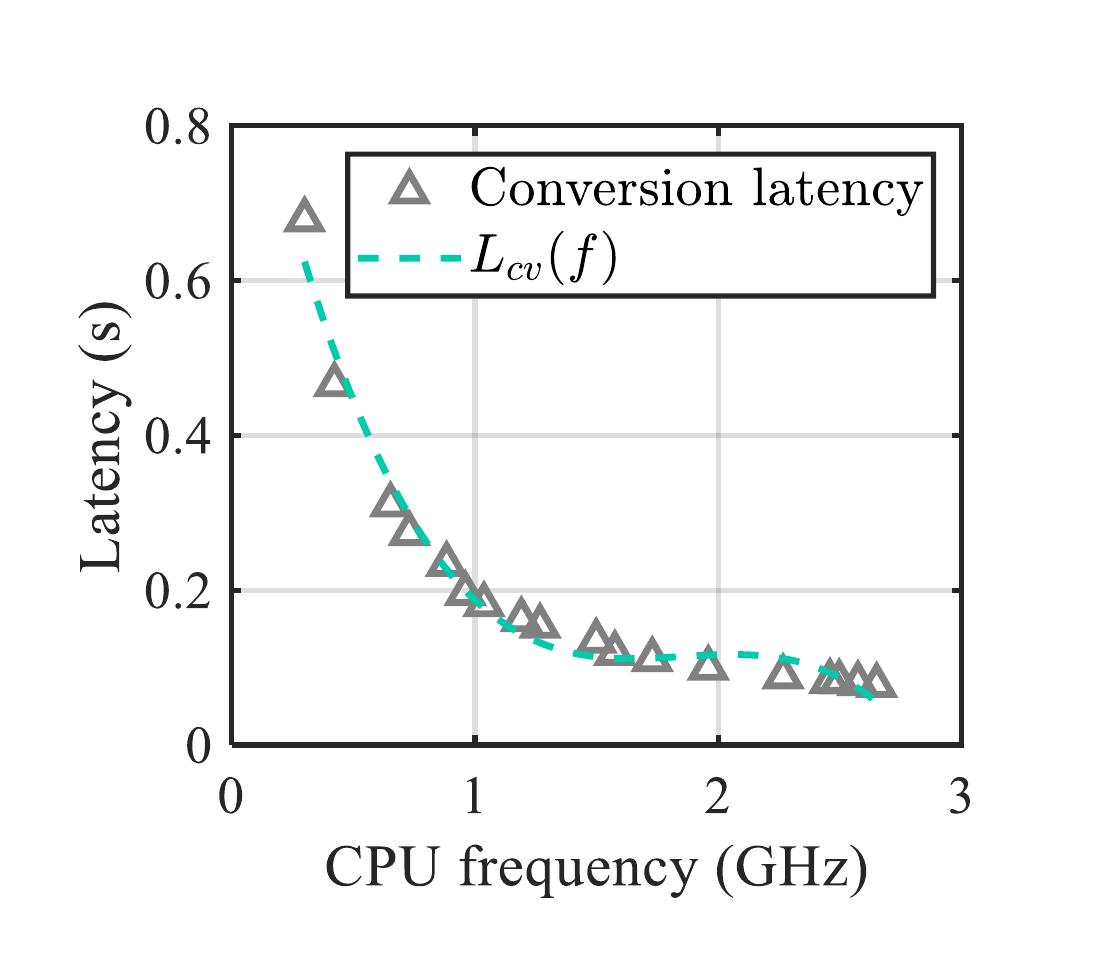}\label{fig:Lcv}}
\subfigure[]
{\includegraphics[width=0.195\textwidth]{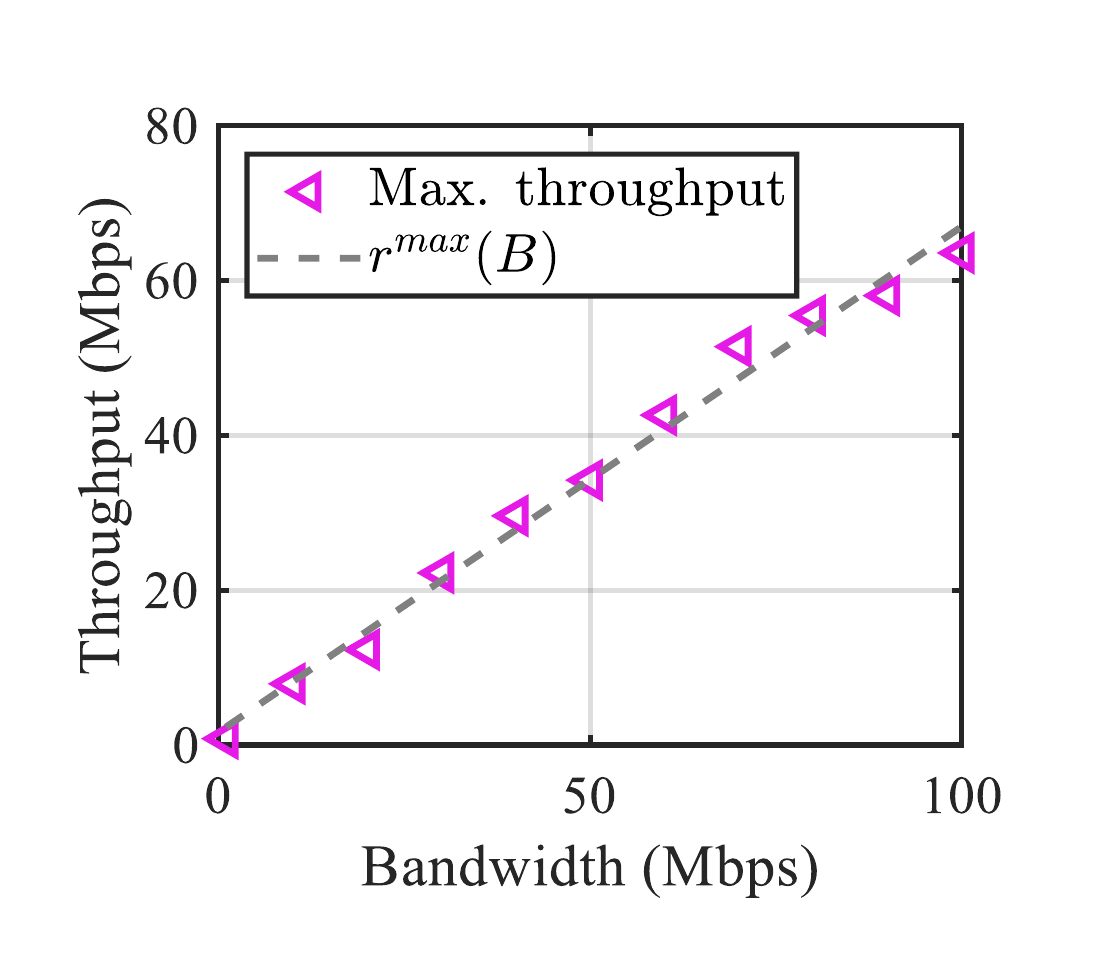}\label{fig:rb}}
\subfigure[]
{\includegraphics[width=0.195\textwidth]{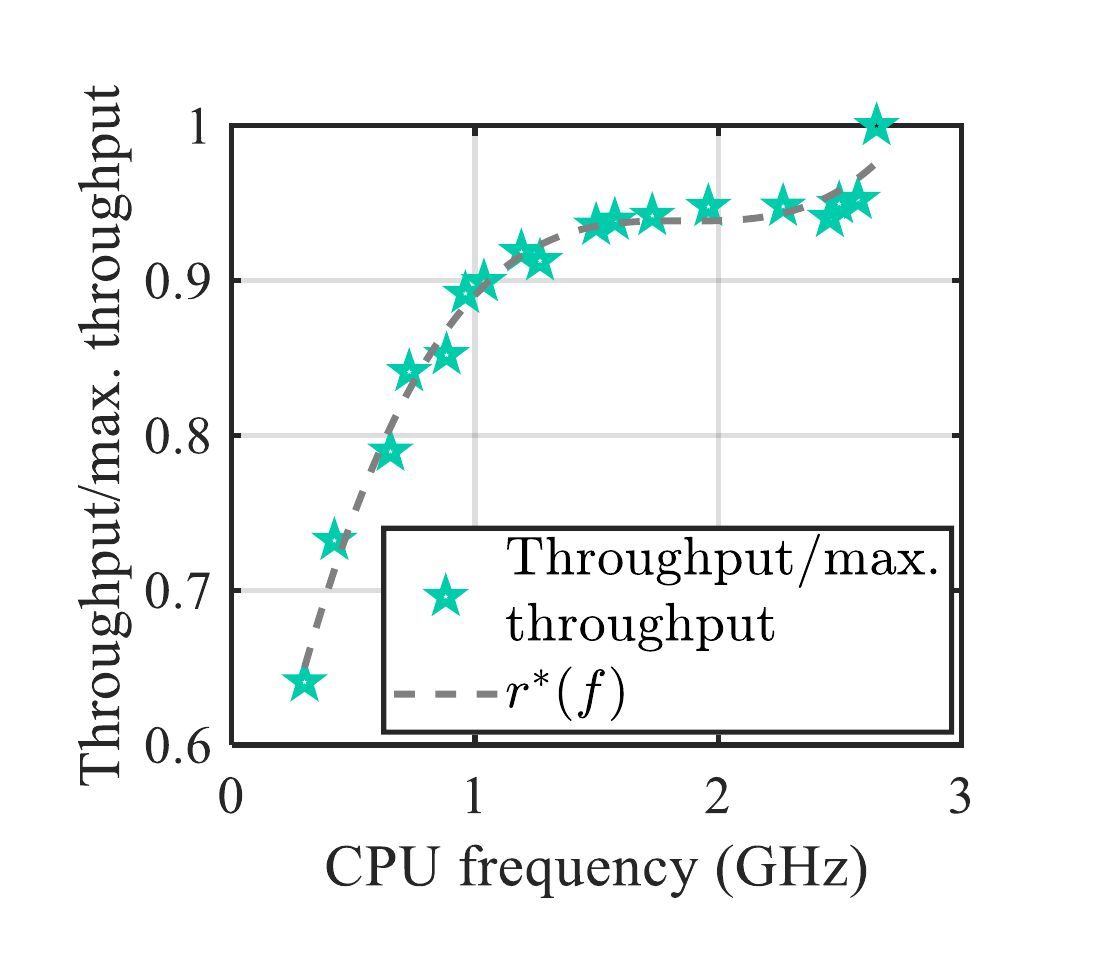}\label{fig:rf}}
\subfigure[]
{\includegraphics[width=0.195\textwidth]{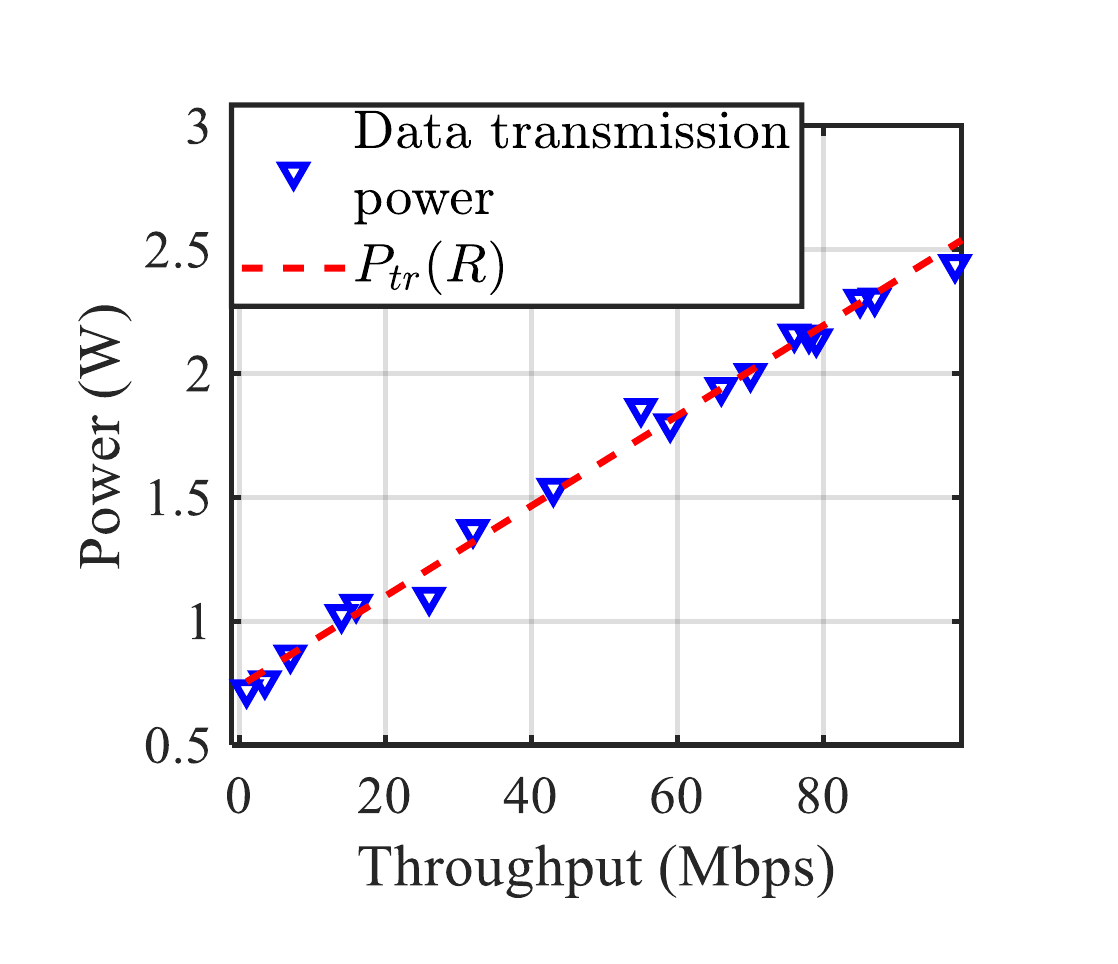}\label{fig:Ptr}}
\subfigure[]
{\includegraphics[width=0.195\textwidth]{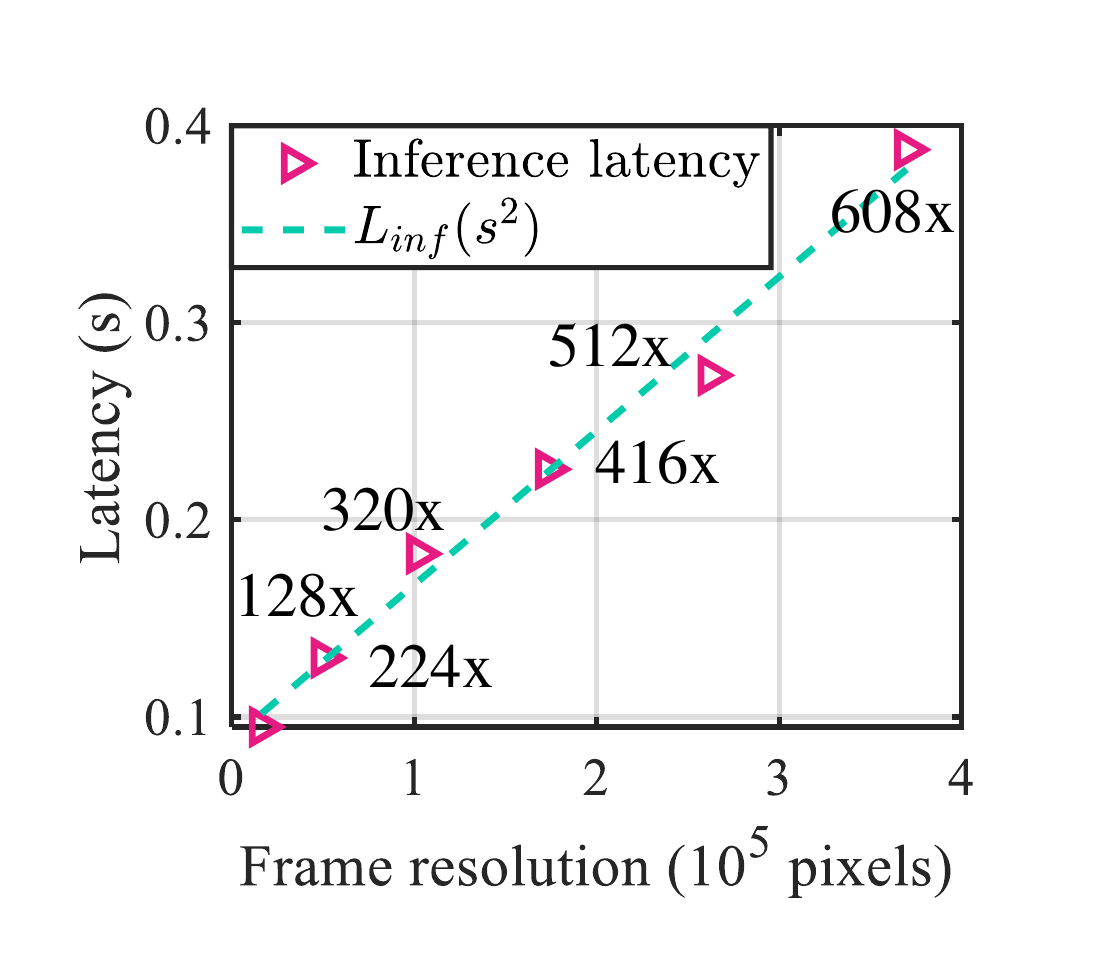}\label{fig:Linf}}
\subfigure[]
{\includegraphics[width=0.195\textwidth]{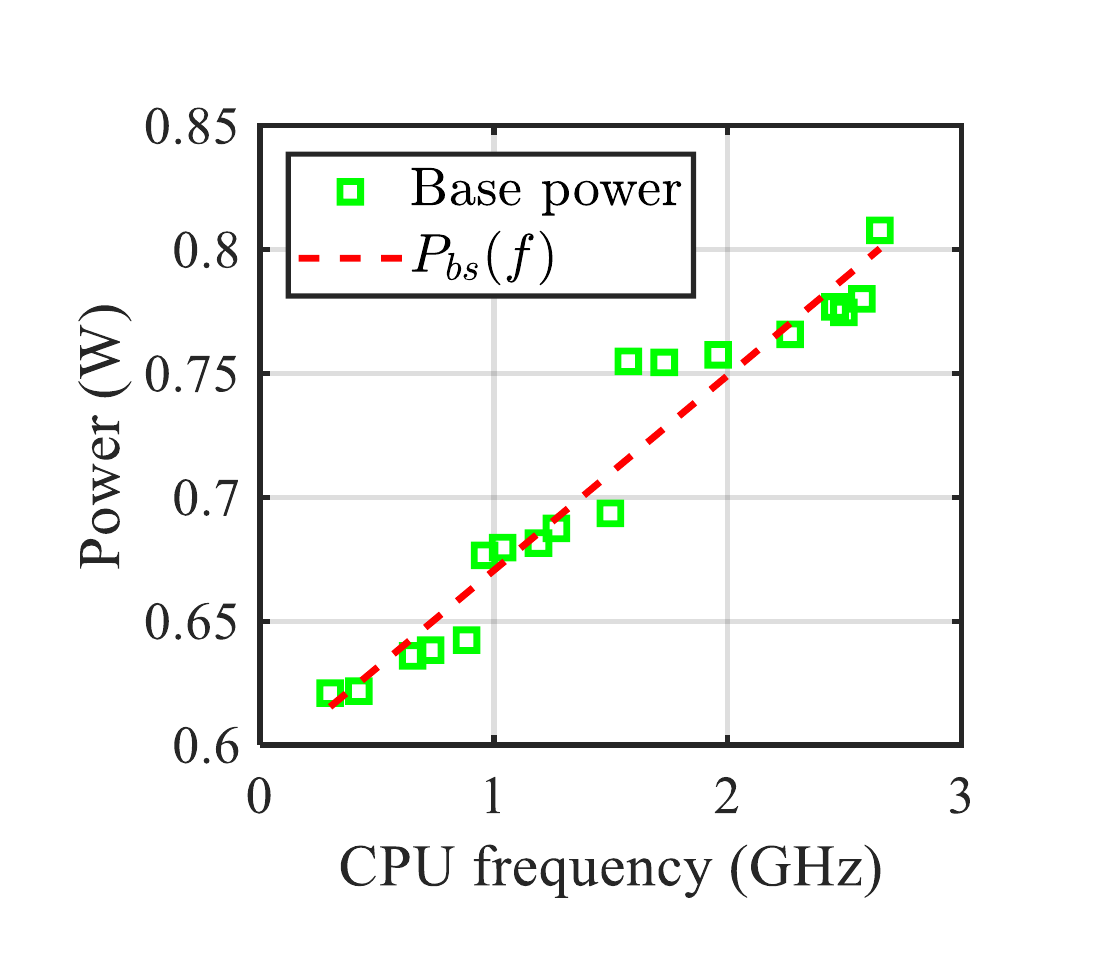}\label{fig:Pbs}}
\caption{\first{The proposed data-driven analytic models for MAR devices, where each function, presented correspondingly in Table I, is trained offline via empirical measurements and regression analyses.}}
\label{fig:regressionmodel}   
\end{figure*}

As shown in Subsection \ref{ssc:Analytics-based}, some interactions or functions in our proposed analytical models still cannot be expressed clearly in an analytic form. This is because of (i) the lack of analytic understandings of some interactions and (ii) specific coefficients/functions that may vary with different MAR device models. For example, in (\ref{eq:conversionPower}), the specific coefficients in $P^{k}_{cv}(f_{k})$ are unknown due to the lack of theoretical knowledge and vary with different MAR device models.

Therefore, we propose a data-driven methodology to address the above challenge, where those interactions with inadequate analytic understandings can be modeled and trained offline via empirical measurements and regression analyses. Note that regression-based modeling methodology is one of the most widely used approaches in developing mobile CPU's property models (e.g., CPU power and temperature variation modeling) and has shown to be effective in estimating CPU properties \cite{walker2016accurate,shye2009into,hu2017energy}. We use our testbed to collect measurements. The test MAR device is selected to work at $18$ different CPU frequencies ranging from $0.3$ to $2.649$ GHz. In addition, in order to obtain fine-grained regression models and eliminate the interference among different workloads on the device power consumption, we develop three Android applications; each is applied with a specific function of the MAR client, which includes image generation and preview, image conversion, and image transmission applications. The developed regression models are shown in Fig. \ref{fig:regressionmodel} and Table \ref{tb:regression}. Note that to obtain a statistical confidence in the experimental results, each data point in Fig. \ref{fig:regressionmodel} is derived by generating, transmitting, and detecting $1,000$ image frames and calculating the average values. The root mean square error (RMSE) is applied for calculating the average model-prediction error in the units of the variable of interest \cite{willmott2005advantages}.

\begin{table}[t]
 \begin{center}
    \caption{The Proposed Regression-based Models.}
    \label{tb:regression}
    \vspace{-0.2 in}
  \begin{tabular}{ |l||l|l| }
    \hline
     & Proposed models & RMSE\\ \hline
    $E_{gt}(f)$ & $-0.01071 f^{3}+0.06055 f^{2}-0.1028f+0.107$ & $0.002$ \\ \hline
    $E_{prv}(f)$ & $0.01094f+0.04816$ & $0.002$ \\ \hline
    $P_{cv}(f)$ & $0.1124f^{3}+0.01f^{2}+0.2175f+0.04295$ & $0.041$ \\\hline
    $L_{cv}(f)$ & $-0.145f^{3}+0.8f^{2}-1.467f+0.996$ & $0.025$\\\hline
    $r^{max}(B)$ & $0.677B$ & $2.403$ \\\hline
    $r^{*}(f)$ & $0.07651f^{3}-0.4264f^{2}+0.7916f+0.4489$ & $0.013$\\\hline
    $P_{tr}(R)$ & $0.01821R+0.7368$& $0.052$ \\\hline
    $L_{inf}(s^{2})$ & $0.07816s^{2}+0.08892$ & $0.838$ \\ \hline
    $P_{bs}(f)$ & $0.07873f+0.5918$ & $0.015$ \\\hline
  \end{tabular}
   \end{center}
\vspace{-0.25in}
\end{table}

\subsection{Problem Formulation}
\label{ssc:formulation}
Based on the above proposed models, we formulate the MAR reconfiguration as a multi-objective optimization problem \cite{deb2001multi}. We aim to \textit{minimize the per frame energy consumption of multiple MAR clients in the system while satisfying the user preference} (as stated in Section \ref{ssc:preference}) \textit{of each}. We introduce two positive weight parameters $\lambda^{k}_{1}$ and $\lambda^{k}_{2}$ to characterize the user preference of the $k$th MAR client, where $\lambda^{k}_{1}$ and $\lambda^{k}_{2}$ can be specified by the client. We adopt the weighted sum method \cite{marler2010weighted} to express the multi-object optimization problem as
\begin{small}
\begin{equation}
\begin{aligned}
\mathscr{P}_{0}: \min_{\{f_{k},s_{k},B_{k},\forall k\in \mathcal{K}\}} \quad & 
Q = \sum_{k\in \mathcal{K}} (E^{k}+ \lambda^{k}_{1}L^{k}-\lambda^{k}_{2}A_{k})\\
s.t. \quad 
  & C_{1}: \sum_{k\in \mathcal{K}}B_{k} \leq B_{max};\\
  & C_{2}: L^{k} \leq L_{max}^{k}, \forall k\in \mathcal{K};\\
  & C_{3}: F_{min} \leq f_{k}\leq F_{max},\forall k\in \mathcal{K};\\
  & C_{4}: s_{k}\in \{s_{min}, ..., s_{max}\},\forall k\in \mathcal{K};\\
\end{aligned}
\vspace{-0.05 in}
\end{equation}
\end{small}\noindent
where $A_{k}$ is an object detection accuracy function in terms of the $k$th MAR client selected computation model size $s_{k}^{2}$ (e.g., $A(s^{2}_{k}) = 1-1.578e^{-6.5\times10^{-3}s_{k}}$ \cite{liu2018edge}); $L^{k}_{max}$ is the maximum tolerable service latency of the $k$th client; $B_{max}$ is the maximum wireless bandwidth that an edge server can provide for its associated MAR clients. In practical scenarios, an edge server may simultaneously offer multiple different services for its associated users, e.g., video streaming, voice analysis, and content caching. Hence, the edge server may reallocate its bandwidth resource based on the user distribution. In this paper, we assume that $B_{max}$ varies with time randomly. The constraint $C_{1}$ represents that MAR clients' derived bandwidth cannot exceed the total bandwidth allocated for the MAR service on the edge server; the constraint $C_{2}$ guarantees that the service latency of MAR clients are no larger than their maximum tolerable latency; the constraints $C_{3}$ and $C_{4}$ are the constraints of the MAR device's CPU frequency and computation model size configurations, where $s_{k}$ is a discrete variable and its values depend on the available computation models in the MAR system.

\section{Proposed LEAF Optimization Algorithm}
\label{sc:leaf}
As shown in the previous section, problem $\mathscr{P}_{0}$ is a mixed-integer non-linear programming problem (MINLP) which is difficult to solve \cite{belotti2013mixed}. In order to solve this problem, we propose the LEAF algorithm based on the block coordinate descent (BCD) method \cite{grippo2000convergence}.

To solve problem $\mathscr{P}_{0}$, we relax the discrete variable $s_{k}$ into continuous variable $\hat{s_{k}}$. The problem is relaxed as
\begin{small}
\begin{equation}
\begin{aligned}
\mathscr{P}_{1}: \min_{\{f_{k},\hat{s_{k}},B_{k},\forall k\in \mathcal{K}\}} \quad & 
Q = \sum_{k\in \mathcal{K}} (E^{k}+ \lambda^{k}_{1}L^{k}-\lambda^{k}_{2}A_{k})\\
s.t. \quad 
  & C_{1}, C_{2}, C_{3}\\
  & \hat{C_{4}}: s_{min} \leq \hat{s_{k}}\leq s_{max}, \forall k\in \mathcal{K}.\\
\end{aligned}
\vspace{-0.05 in}
\end{equation}
\end{small}\noindent

According to the BCD method, we propose the LEAF algorithm which solves Problem $\mathscr{P}_{1}$ by sequentially fixing two of three variables and updating the remaining one. We iterate the process until the value of each variable converges.

$\nabla y(x)$ is denoted as the partial derivative of function $y$ corresponding to variable $x$. Denote $\vctProj[\mathcal{X}]{(x)}$ as the Euclidean projection of $x$ onto $\mathcal{X}$; $\vctProj[\mathcal{X}]{(x)} \triangleq \mathop{\arg\min}_{v\in\mathcal{X}}\lVert x-v\rVert^2$.

The procedure of our proposed solution is summarized as:
\begin{itemize}
\item Given $\hat{s_{k}}$ and $B_{k}$, we can derive a new $f_{k}$ according to
    \begin{small}
    \begin{equation}
    f_{k}^{(j+1)} = \vctProj[\mathcal{X}_{f}]{\left(f_{k}^{(j)} - \gamma_{k}\nabla Q_{k}\left(f_{k}^{(j)}\right)\right)}, \forall k\in \mathcal{K},
    \label{eq:projf}
    \vspace{-0.05 in}
    \end{equation}
    \end{small}\noindent
    where $\gamma_{k}>0$ is a constant step size and $\mathcal{X}_{f}$ is the bounded domain constrained by $C_{3}$. Based on the BCD method, we repeat (\ref{eq:projf}) until the derived $f_{k}$ is converged and then update $f_{k}$.
\item Given $f_{k}$ and $B_{k}$, we can derive a new $\hat{s_{k}}$ according to
    \begin{small}
    \begin{equation}
    \hat{s_{k}^{(j+1)}} = \vctProj[\mathcal{X}_{\hat{s}}]{\left(\hat{s_{k}}^{(j)} - \eta_{k}\nabla Q_{k}\left(\hat{s_{k}}^{(j)}\right)\right)}, \forall k\in \mathcal{K},
    \label{eq:projs}
    \vspace{-0.05 in}
    \end{equation}
    \end{small}\noindent
    where $\eta_{k}>0$ is a constant step size and $\mathcal{X}_{\hat{s}}$ is the bounded domain constrained by $\hat{C_{4}}$. Based on the BCD method, we repeat (\ref{eq:projs}) until the derived $\hat{s_{k}}$ is converged and then update $\hat{s_{k}}$.
\item Given $f_{k}$ and $\hat{s_{k}}$, the problem is simplified to
    \begin{small}
    \begin{equation}
    \begin{aligned}
    \min_{\{B_{k},\forall k\in \mathcal{K}\}} \quad & 
    Q = \sum_{k\in \mathcal{K}} \left(E^{k}+ \lambda^{k}_{1}L^{k}-\lambda^{k}_{2}A_{k}\right)\\
    s.t. \quad 
    & C_{1}: \sum_{k\in \mathcal{K}}B_{k} \leq B_{max};\\
    & C_{2}: L^{k} \leq L_{max}^{k}, \forall k\in \mathcal{K};\\
    \end{aligned}
    \vspace{-0.05 in}
    \end{equation}
    \end{small}\noindent
    where constraints $C_{3}$ and $\hat{C_{4}}$ are irrelevant to this problem.
\end{itemize}

The Lagrangian dual decomposition method is utilized to solve the above problem, where the Lagrangian function is
\begin{small}
\begin{equation}
\begin{split}
    \mathcal{L}\left(B_{k},\mu,\bm{\beta}\right) & = \sum_{k\in \mathcal{K}}\left(E^{k}+ \lambda^{k}_{1}L^{k}-\lambda^{k}_{2}A_{k}\right)\\
    &+ \mu\left(\sum_{k\in \mathcal{K}}B_{k} - B_{max}\right) + \sum_{k\in \mathcal{K}}\beta_{k}\left(L^{k}-L_{max}^{k}\right),
\end{split}
\end{equation}
\end{small}\noindent
where $\mu$ and $\bm{\beta}$ are the Lagrange multipliers, (i.e., $\bm{\beta}$ is a Lagrange multiplier vector), corresponding to constraints $C_{1}$ and $C_{2}$, respectively. The Lagrangian dual problem can therefore be expressed as
\begin{small}
\begin{equation}
\begin{aligned}
    \max_{\{\mu,\bm{\beta}\}} \quad
    g(\mu,\bm{\beta}) &= \min_{\{B_{k},\forall k\in\mathcal{K}\}} \mathcal{L}(B_{k},\mu,\bm{\beta})\\
    s.t. \quad
    & \mu\geq 0, \bm{\beta}\geq 0.
\end{aligned}
\vspace{-0.05 in}
\end{equation}
\end{small}\noindent
Here, $g(\mu,\bm{\beta})$ is concave with respect to $B_{k}$. 
\begin{myLem}
The problem $\mathscr{P}_{1}$ is convex with respect to $B_{k}$.
\end{myLem}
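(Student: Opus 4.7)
With $f_k$ and $\hat{s}_k$ treated as fixed constants, I would reduce the lemma to a careful term-by-term inspection of how the objective $Q$ and the constraints depend on $B_k$. The only component of $R_k(B_k, f_k) = r_k^{\max}(B_k)\cdot r_k^{*}(f_k)$ that varies with $B_k$ is $r_k^{\max}$, which is affine (in fact linear) in $B_k$ by the model in Table I. Consequently $R_k$ is an affine, strictly positive function of $B_k$ on the feasible region, and the transmission latency $L_{tr}^{k} = \sigma s_k^{2}/R_k(B_k,f_k)$ takes the form $c_k/B_k$ for some positive constant $c_k$ depending on the fixed variables. Since $1/x$ is convex on $x>0$, $L_{tr}^{k}$ is convex in $B_k$, and hence the total latency $L^{k} = L_{cv}^{k}(f_k) + L_{tr}^{k}(B_k,f_k) + L_{inf}^{k}(\hat{s}_k^{2})$ is convex in $B_k$ (the other two summands are constants).

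\textbf{Convexity of the objective.} I would then dissect $E^{k} = E_{img}^{k} + E_{cv}^{k} + E_{com}^{k} + E_{bs}^{k}$. The conversion energy $E_{cv}^{k}$ depends only on $f_k$, so it is constant in $B_k$. Both $E_{img}^{k}$ and $E_{bs}^{k}$ are affine functions of $L^{k}$ with nonnegative slopes (the bracketed integral in $E_{img}^{k}$ and $P_{bs}^{k}$ are nonnegative constants once $f_k$ is fixed), so they inherit convexity in $B_k$ from $L^{k}$. For the communication term, substituting the regression form $P_{tr}^{k}(R) = aR + b$ (with $a,b>0$) yields
\begin{equation*}
P_{tr}^{k}(R_k)L_{tr}^{k} \;=\; (aR_k + b)\frac{\sigma s_k^{2}}{R_k} \;=\; a\sigma s_k^{2} + \frac{b\sigma s_k^{2}}{R_k(B_k,f_k)},
\end{equation*}
which is again of the form constant $+$ constant$/B_k$, hence convex; the remaining promotion/tail/idle pieces of $E_{com}^{k}$ are constants in $B_k$. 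Finally, $\lambda_1^{k}L^{k}$ is convex in $B_k$ and $\lambda_2^{k}A_k$ is constant in $B_k$ (it depends only on $\hat{s}_k$). Summing over $k\in\mathcal{K}$ preserves convexity, so $Q$ is convex in $(B_k)_{k\in\mathcal{K}}$.

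\textbf{Convexity of the feasible set.} Constraint $C_1$ is linear in $(B_k)$ and $B_k\ge 0$ defines a halfspace, so the bandwidth simplex is convex. Constraint $C_2$ can be rewritten as $L_{tr}^{k}(B_k,f_k) \le L_{\max}^{k} - L_{cv}^{k}(f_k) - L_{inf}^{k}(\hat{s}_k^{2})$, which is a sublevel set of the convex function $L_{tr}^{k}$ of $B_k$ and is therefore convex. Intersecting finitely many convex sets yields a convex feasible region. Convex objective on a convex region gives convexity of the problem.

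\textbf{Main obstacle.} The only delicate step is the communication energy, because naively $E_{com}^{k}$ involves a product of two $B_k$-dependent factors $P_{tr}^{k}(R_k)$ and $L_{tr}^{k}$, and such a product need not be convex in general. The saving grace is the specific affine form of $P_{tr}^{k}(R)$ in Table I, which makes the product collapse to a constant plus a $1/R_k$ term; I would flag this linearity as the crucial structural hypothesis that the convexity argument rests on, and verify that the fitted coefficients $a,b$ are nonnegative so that no sign flip reverses convexity. Everything else reduces to the standard fact that nonnegative affine images of $1/x$ are convex on $x>0$.
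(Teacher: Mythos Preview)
Your proposal is correct and follows essentially the same approach as the paper: both reduce the dependence of $Q$ on $B_k$ to a sum of terms of the form (positive constant)$/r^{\max}(B_k)$ plus constants, then invoke the convexity of $1/x$ on $x>0$. The paper packages this as an explicit Hessian computation (diagonal with entries $\Psi_i\cdot 2/(0.677B_i^{3})>0$), whereas you argue term by term via closure properties of convex functions; your observation that the affine form $P_{tr}(R)=aR+b$ is what makes $P_{tr}(R_k)L_{tr}^{k}$ collapse is exactly what the paper encodes by having $P_{tr}^{i}(0)$ (i.e., the intercept $b$) appear in the coefficient $\Psi_i$.
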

\begin{proof}
For any feasible $B_{i}, B_{j}, \forall i,j\in\mathcal{K}$, we have
\begin{small}
\begin{equation}
    \pdv{Q}{B_{i}}{B_{j}} =  
    \begin{cases}
        0, & i\ne j,\\
        \Psi_{i}\cdot\pdv{\left(1/r^{max}\right)}{B_{i}}{B_{j}}, & i = j,
    \end{cases}
    \vspace{-0.025 in}
\end{equation}
\end{small}\noindent
where $\Psi_{i} = \frac{\left[fps_{i}(E_{gt}(f_{i})+E_{prv}(f_{i}))+P_{tr}^{i}(0)+P_{bs}(f_{i})+\lambda^{i}_{1}\right]\sigma s_{i}^2}{r_{i}^{*}(f_{i})}$ which is positive, and $\pdv{\left(1/r^{max}\right)}{B_{i}}{B_{j}} = \frac{2}{0.677B_{i}^3}>0$. Thus, the Hessian matrix $\mathbf{H} = \left(\pdv{Q}{B_{i}}{B_{j}}\right)_{K\times K}$ is symmetric and positive definite. Constraint $C_{1}$ is linear and $C_{2}$ is convex with respect to $B_{k}$. Constraints $C_{3}$ and $C_{4}$ are irrelevant to $B_{k}$. Therefore, $\mathscr{P}_{1}$ is strictly convex with respect to $B_{k}$.
\end{proof}
Therefore, based on the Karush-Kuhn-Tucker (KKT) condition \cite{boyd2004convex}, the sufficient and necessary condition of the optimal allocated bandwidth for the $k$th MU can be expressed as
\begin{small}
\begin{equation}
    B_{k}^{*} = \sqrt{\frac{\Phi(f_{k},s_{k},\beta_{k})}{0.677\mu}},
    \vspace{-0.025 in}
\end{equation}
\end{small}\noindent
where $\Phi_{k} = \frac{\left[fps_{i}(E_{gt}(f_{i})+E_{prv}(f_{i}))+P_{tr}^{i}(0)+P_{bs}(f_{i})+\lambda^{i}_{1}+\beta_{k}\right]\sigma s_{i}^2}{r_{i}^{*}(f_{i})}$.

Next, the sub-gradient method \cite{boyd2004convex} is used to solve the dual problem. Based on the sub-gradient method, the dual variables of the $k$th MAR clients in the $(j+1)$th iteration are
\begin{small}
\begin{equation}
\left\{
    \begin{aligned}
    \mu_{k}^{(j+1)} &=  \max\left\{0,\left[\mu^{(j)} + \vartheta_{k}^{\mu}\nabla g(\mu^{(j)})\right]\right\}, \forall k\in\mathcal{K};\\
    \beta_{k}^{(j+1)} &= \max\left\{0,\left[\beta_{k}^{(j)} + \vartheta_{k}^{\beta}\nabla g(\beta_{k}^{(j)})\right]\right\}, \forall k\in\mathcal{K};\\
    \end{aligned}
\right.
\vspace{-0.05 in}
\end{equation}
\end{small}\noindent
where $\vartheta_{k}^{\mu}>0$ and $\vartheta_{k}^{\beta}>0$ are the constant step sizes.

Based on the above mathematical analysis, we propose an MAR optimization algorithm, LEAF, which can dynamically determines the CPU frequency of multiple MAR devices, selects the computation model sizes, and allocates the wireless bandwidth resources. The pseudo code of the proposed LEAF MAR algorithm is presented in Algorithm \ref{alg:leaf}. First, the LEAF is initialized with the lowest CPU frequency, the smallest computation model size, and evenly allocated bandwidth resources among MAR devices.
We then iteratively update $f_{k}$, $\hat{s_{k}}$, and $B_{k}$ until the LEAF converges (i.e., line $7$-$8$ in Algorithm \ref{alg:leaf}). In addition, $\hat{s_{k}}$ is a relaxed value of the computation model size. Thus, it may not match any pre-installed computation model in a real system. In this case, the LEAF selects the computation model size $s_{k}$ that is the closest to the relaxed one $\hat{s_{k}}$ (i.e., line $10$ in Algorithm \ref{alg:leaf}). Since the LEAF MAR algorithm is developed based on the BCD method and follows the convergence results in \cite{grippo2000convergence}, we claim that the LEAF converges to a local optimal solution.

\begin{algorithm}[t]
\DontPrintSemicolon
\footnotesize
\SetNoFillComment
\label{alg:leaf}
\caption{The LEAF MAR Algorithm}
\KwIn{$\lambda^{k}_{1}$, $\lambda^{k}_{2}$, $L_{max}^{k}$, $B_{max}$, $fps_{k}$, and $\tau$, $\forall k \in \mathcal{K}$.}
\KwOut{$f_{k}$, $s_{k}$, and $B_{k}$, $\forall k \in \mathcal{K}$.}
$B_{k} \gets B_{max}/|\mathcal{K}|$,
$\hat{s_{k}} \gets s_{min}$, $\forall k \in \mathcal{K}$, $i\gets 1$;\\
\While{True}{
$f_{k}$ $\gets$ solving $\mathscr{P}_{1}$ with fixed $\hat{s_{k}}$ and $B_{k}$;\\
$\hat{s_{k}} \gets$ solving $\mathscr{P}_{1}$ with fixed $f_{k}$ and $B_{k}$;\\
$B_{k} \gets$ solving $\mathscr{P}_{1}$ with fixed $f_{k}$ and $\hat{s_{k}}$;\\
$Q_{i} \gets \sum_{k\in \mathcal{K}}(E^{k}+ \lambda^{k}_{1}L^{k}-\lambda^{k}_{2}A_{k})$\\
\If{$|(Q_{i}-Q_{i-1})/Q_{i}|\leq \tau$}{
\textbf{break;} \Comment{Converges}\\
}
$i\gets i+1$;\\
}
$s_{k}=\mathop{\arg\min}\limits_{s\in\{s_{min}...s_{max}\}}|s-\hat{s_{k}}|$, $\forall k\in \mathcal{K}$;\\
\Return $f_{k}$, $s_{k}$, and $B_{k}$, $\forall k\in \mathcal{K}$.
\end{algorithm}

\section{Image Offloading Frequency Orchestrator}
\label{sc:tracker}
In this section, an offloading frequency orchestrator with local object tracking is proposed to further reduce the energy consumption and latency of MAR devices by leveraging the model we developed for significant scene change estimation, based on our proposed LEAF algorithm.

\subsection{Edge-based Object Detection vs. Local Object Tracking}

As presented in Section \ref{sc:leaf}, our proposed LEAF is able to guide MAR configuration adaptations and radio resource allocations at the edge server to improve the energy efficiency of executing continuous image offloading and object detection. However, continuous repeated executions of offloading camera image frames to the edge server for object detection are unnecessary. This is because, although the positions of detected objects may slightly change in continuous camera captured frames due to the camera movement or detected object motions, the probability of significant changes to the scene or a new object appearing is low within a very short period. For example, as shown in Fig. \ref{fig:example}, three image frames are extracted in a video stream. From Frame 1 to Frame 10, only the position of the detected dog in the scene changes. Thus, sending every captured frame to the edge server for detecting objects (i.e., locating and recognizing objects in a frame) is extremely inefficient and will cause unnecessary energy expenditure even with our proposed LEAF.

\begin{figure}[t]
\centering
\includegraphics[width=0.48\textwidth]{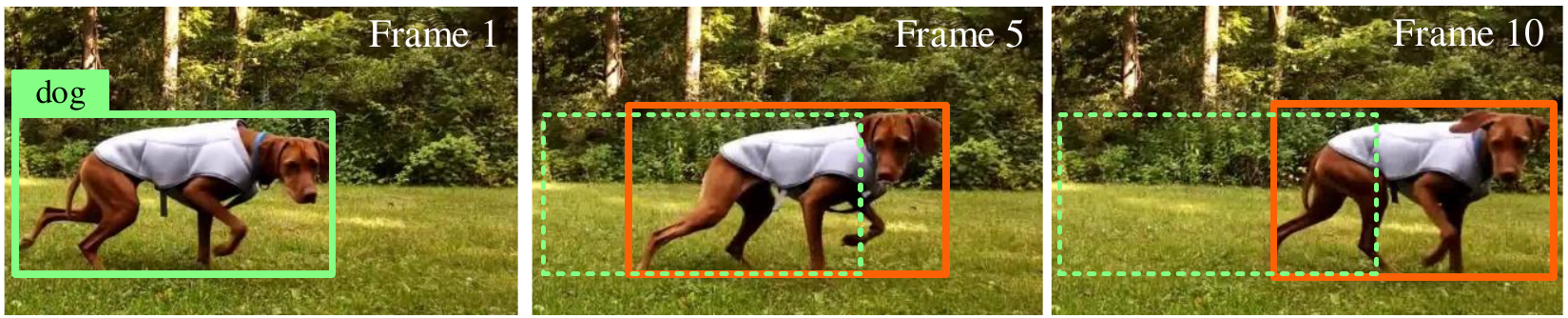}
\caption{Continuous repeated executions of object detection are unnecessary.}
\label{fig:example}
\vspace{-0.15in}
\end{figure}

To reduce the execution of continuous image offloading and object detection, one naive approach is to implement a local lightweight object tracker on the MAR device and invoke the tracker for updating the locations of the objects of interest that are achieved by performing a successful object detection, as is done in several prior works \cite{apicharttrisorn2019frugal,huynh2017deepmon,ran2018deepdecision}. However, three essential questions are brought up here:

\textbf{RQ 1.} How much energy can the local object tracker save for an MAR device compared to performing edge-assisted object detection? It is intuitive that local lightweight object tracker consumes less battery than local CNN-based object detector due to the nature of CNNs, which contains tens to hundreds of computation-intensive layers. But how does it compare to the edge-based detectors, where the MAR device's on-board resource is not consumed by running CNNs?

\textbf{RQ 2.} How does the MAR device's hardware capacity (e.g., CPU frequency) impact the tracking performance and overhead (e.g., tracking delay and energy consumption)? It is critical to have the knowledge that whether the object tracker can help to improve the energy efficiency of MAR devices within the full or only a partial range of CPU frequencies.  

\textbf{RQ 3.} How does the MAR device determine the frequency of the image offloading and object detection? The frequency of executing edge-based object detector is the most essential and challenging parameter of the MAR system. If the edge-based object detector (i.e., image offloading) is executed as often as possible, the MAR device may achieve a high object detection and tracking accuracy but a high energy expenditure. However, if the edge-based object detector is executed with a low frequency, for instance, executing an object detection only once at the beginning of tracking, the MAR device may achieve a high energy efficiency but unacceptable tracking accuracy (e.g., in our experiment, we observe that the tracking accuracy decreases or even the tracker loses objects of interest as the time interval between the current frame and reference frame performed object detection increases).   

To the best of our knowledge, these questions lack pertinent investigations and sophisticated solutions in both academia and industry, such as ARCore \cite{ARCore} and ARKit \cite{ARKit}. To explore these questions, we implement a real-time lightweight object tracker on a Nexus $6$ using JavaCV libraries\footnote{JavaCV is a collection of wrappers in Java for commonly used libraries in the field of computer vision such as OpenCV \cite{opencv}.} \cite{JavaCV}. The implemented lightweight object tracker in this paper is developed based on Kernelized Correlation Filter (KCF) \cite{henriques2014high} which is a tracking framework that utilizes properties of circulant matrix to enhance the processing speed. KCF tracker has achieved impressive tracking delay and accuracy on Visual Tracker Benchmarks \cite{wu2013online}. The latency of performing an object tracking on a single video frame contains (i) the latency of converting a camera captured raw YUV video frame produced by \texttt{\textsc{ImageReader}} to an \texttt{\textsc{Mat}}\footnote{\texttt{\textsc{Mat}} is a basic image container used in OpenCV.} object, (ii) the latency of converting a frame from color to gray scale (i.e., \texttt{COLOR\_BGR2GRAY}), and (iii) the latency of executing the KCF object tracker.

\begin{figure}[t]
\centering
\subfigure[]
{\includegraphics[width=0.24\textwidth]{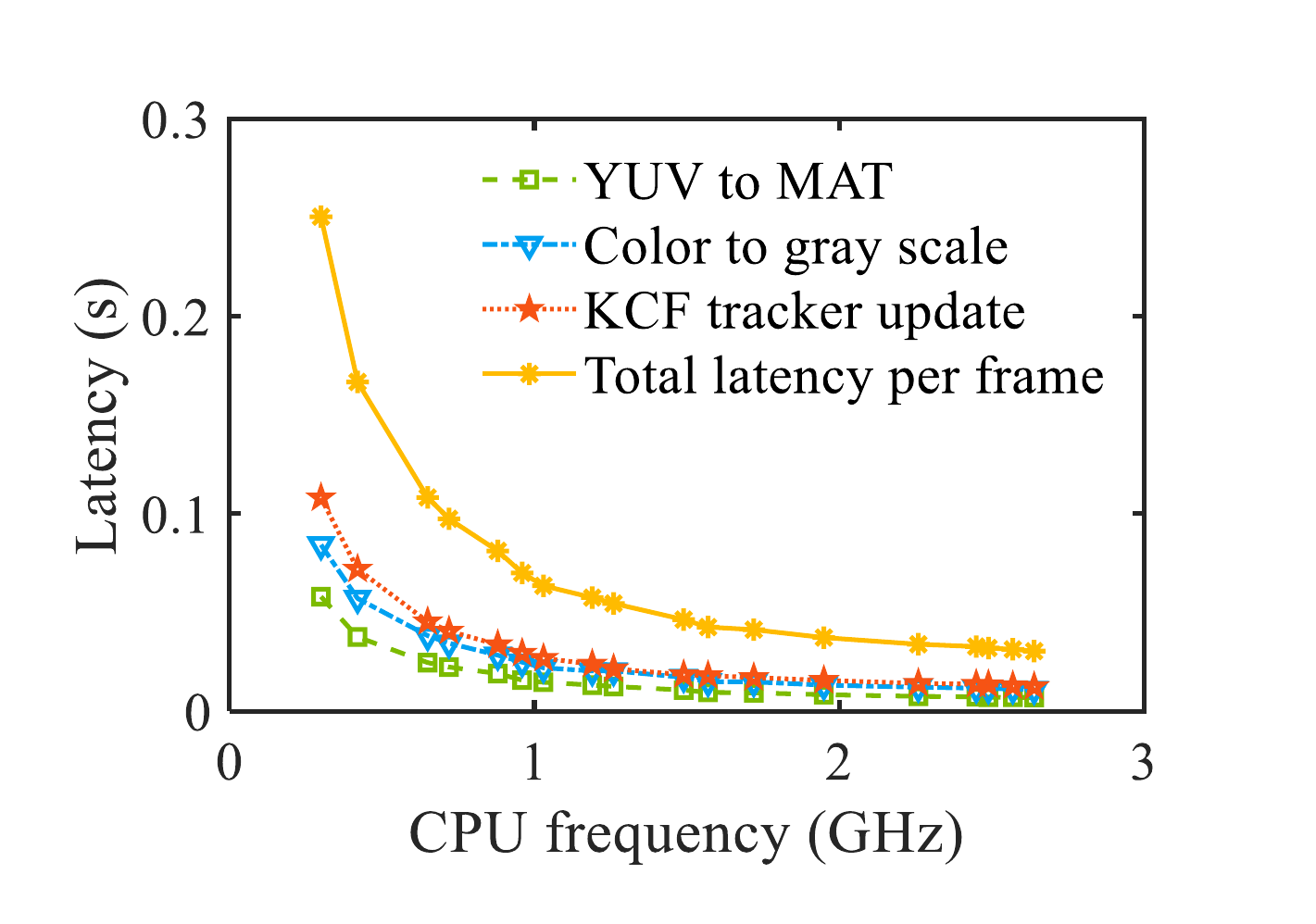}\label{fig:Trackerlatency}}
\subfigure[]
{\includegraphics[width=0.235\textwidth]{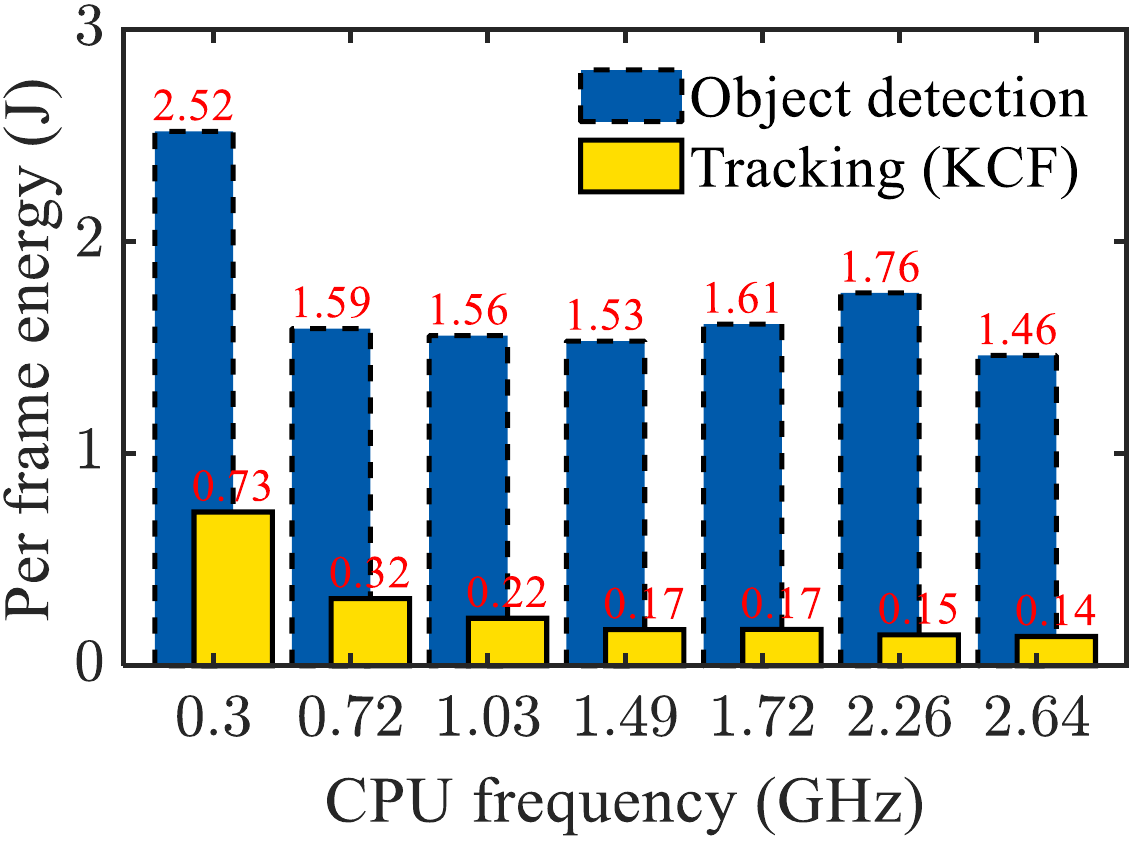}\label{fig:trackvsobject}}
\caption{CPU frequency vs. latency and per frame energy consumption of local object tracking.}
\label{fig:tracker}
\vspace{-0.2in}
\end{figure}

Fig. \ref{fig:tracker} illustrates the average object tracking latency and per frame energy consumption when the MAR device runs on different CPU frequencies. In Fig. \ref{fig:Trackerlatency}, we observe that the average total latency of performing an object tracking is significantly reduced compared to the latency of edge-based object detection, as presented in Section \ref{sc:Experimental Results}. For instance, when the CPU frequency of Nexus $6$ is $1.728$ GHz, the average latency of object detection and tracking are $500$ ms and $40$ ms, respectively (RQ 1). Fig. \ref{fig:trackvsobject} compares the per frame energy consumption of the edge-based object detection and the local object tracking, where we find that the lightweight local object tracker, KCF, can help to improve the energy efficiency of the Nexus $6$ within the full range of CPU frequencies (RQ 2). The per frame energy consumption is decreased by over $80\%$ comparing to the object detection ($s^{2}=320^2$) when the device's CPU frequency is not less than $1.032$ GHz. \textit{Therefore, implementing a lightweight local object tracker will not only help MAR devices to further mitigate the quick battery depletion, but also drop the latency substantially.}

The above experimental result and discussion advocate adding a local lightweight object tracker in our developed edge-based MAR system, as depicted in Fig. \ref{fig:overview}, for further improving the energy efficiency of MAR devices and reducing the latency. However, given the discussion on RQ 3, we argue that naively implementing an object tracker in such a system is inadequate, where an image offloading frequency orchestrator that balances the trade-off between MAR device's energy efficiency and tracking accuracy is essential and nonignorable. We design and implement such an orchestrator,  which coordinates with the proposed LEAF to adaptively and intelligently adjust the image offloading frequency (i.e., execution of edge-based object detection) based on real-time scene change estimations.

\subsection{Image Offloading Frequency Orchestrator}

\begin{figure}[t]
\centering
\includegraphics[width=0.48\textwidth]{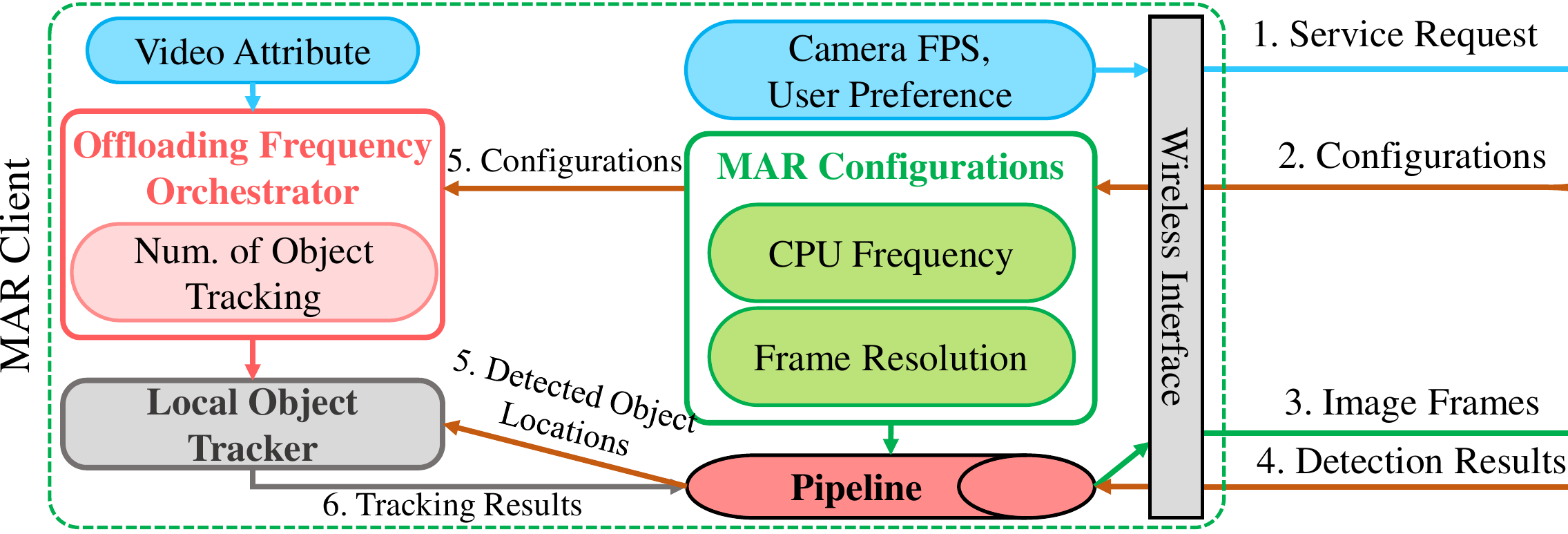}
\caption{Overview of the MAR client deployed with our proposed image offloading frequency orchestrator and how it coordinates with the rest of the edge-based MAR system with LEAF.}
\label{fig:overview2}
\vspace{-0.15 in}
\end{figure}

Fig. \ref{fig:overview2} provides an overview of how our proposed image offloading frequency orchestrator coordinates with the LEAF illustrated in Fig. \ref{fig:overview}. The proposed orchestrator is implemented in MAR devices. The MAR device invokes the orchestrator after it has successfully received the object detection results from the edge server. The inputs of the orchestrator are the optimal MAR configurations (i.e., CPU frequency and frame resolution) obtained from the proposed LEAF. The output is the estimated number of next successive image frames that will perform local object tracking, denoted by $\rho$. For instance, if the output of the orchestrator is $11$, the next $11$ continuous frames will not be eligible for offloading and will be transited to the local object tracker to perform tracking. 

However, designing such an image offloading frequency orchestrator is challenging. Prior work \cite{apicharttrisorn2019frugal} set a single threshold to determine whether the current image frame should be offloaded to the edge. However, (i) the value of the threshold is significantly experience-driven, which is unrealistic to handle all environment conditions with one single threshold; (ii) it lacks exploration of what is the optimal offloading solution for the MAR device in a time period. To tackle these, our orchestrator is designed based on two principles: (i) the detection/tracking decision will be made via a context-aware optimization algorithm, which is developed based on our proposed analytical model and LEAF; (ii) in order to achieve real-time results, considering the restricted computation capability of MAR devices, the designed algorithm should be as lightweight as possible.

To fulfill the first principle, it is necessary to predict how $\rho$ will impact the object tracking accuracy within various scenarios, as tracking is not always accurate with respect to changes in object locations. The larger $\rho$ the orchestrator provides, the less similarity between the current tracked image frame and the frame executed object detection, which raises the probability of tracking accuracy degradation. In addition, the attribute of the scenario (e.g., objects of interest being blurred) also heavily impacts the similarity among continuous tracked frames. To assess the tracking accuracy in terms of $\rho$, we choose to measure the similarity between two images using the peak signal-to-noise ratio (PSNR). PSNR provide a measure of the similarity between two images, $I_1$ and $I_2$, and is given by $PSNR(I_1,I_2) = 20\log_{10}\left(\frac{MAX_{I}}{\sqrt{MSE(I_1,I_2)}}\right)$, where $MAX_{I}$ is the maximum possible pixel value of the image. $MSE(I_1,I_2)$ is the mean squared error of two images and is calculated by $MSE(I_1,I_2)=\frac{1}{MN}\sum^{M}_{i=1}\sum^{N}_{j=1}\left(I^{ij}_{1}-I^{ij}_{2}\right)^{2}$, where $M\times N$ is the image size; and $i, j$ are the pixel locations within the images. Furthermore, the average intersection over union (IOU) is used for estimating the tracking accuracy. The IOU of object $o$ in frame $I$ is $IOU^{I}_{o}=\frac{R^{G}_{o}\cap R^{P}_{o}}{R^{G}_{o}\cup R^{P}_{o}}$, where $R^{G}_{o}$ is the groundtruth region of object $o$, and $R^{P}_{o}$ is the predicted region of object $o$. Therefore, to develop the context-aware optimization algorithm, we need to explore the interactions among $\rho$, $PSNR$, and $IOU$.

To explore such interactions, we leverage an open dataset \cite{WuLimYang13} which contains $100$ videos with nine different scene attributes, such as illumination variation (i.e., the illumination in the target region is significantly changed) and motion blur (i.e., the target region is blurred due to the motion of the target or camera). We measure how IOU varies in terms of PSNR using videos with the same scene attribute, where we gradually increase the frame interval of two measured images, such as Frame 1 to Frame 3, Frame 1 to Frame 4, etc. Figs. \ref{fig:video1} to \ref{fig:video3} depict the measurement of three videos with motion blur. We observe that different videos that have the same major attribute obtain similar shape of $IOU(PSNR)$ (due to the page limit, we only show the results of three videos with motion blur). If we integrate all the samples into one figure, as illustrated in Fig. \ref{fig:video_integration}, we can achieve a regression-based model which describes an object tracking accuracy function in terms of PSNR for a specific scenario (e.g., for motion blur, $IOU(PSNR) = -0.004335PSNR^2 + 0.2411PSNR-2.328$).

\begin{figure*}[t]
\centering
\subfigure[]
{\includegraphics[width=0.24\textwidth]{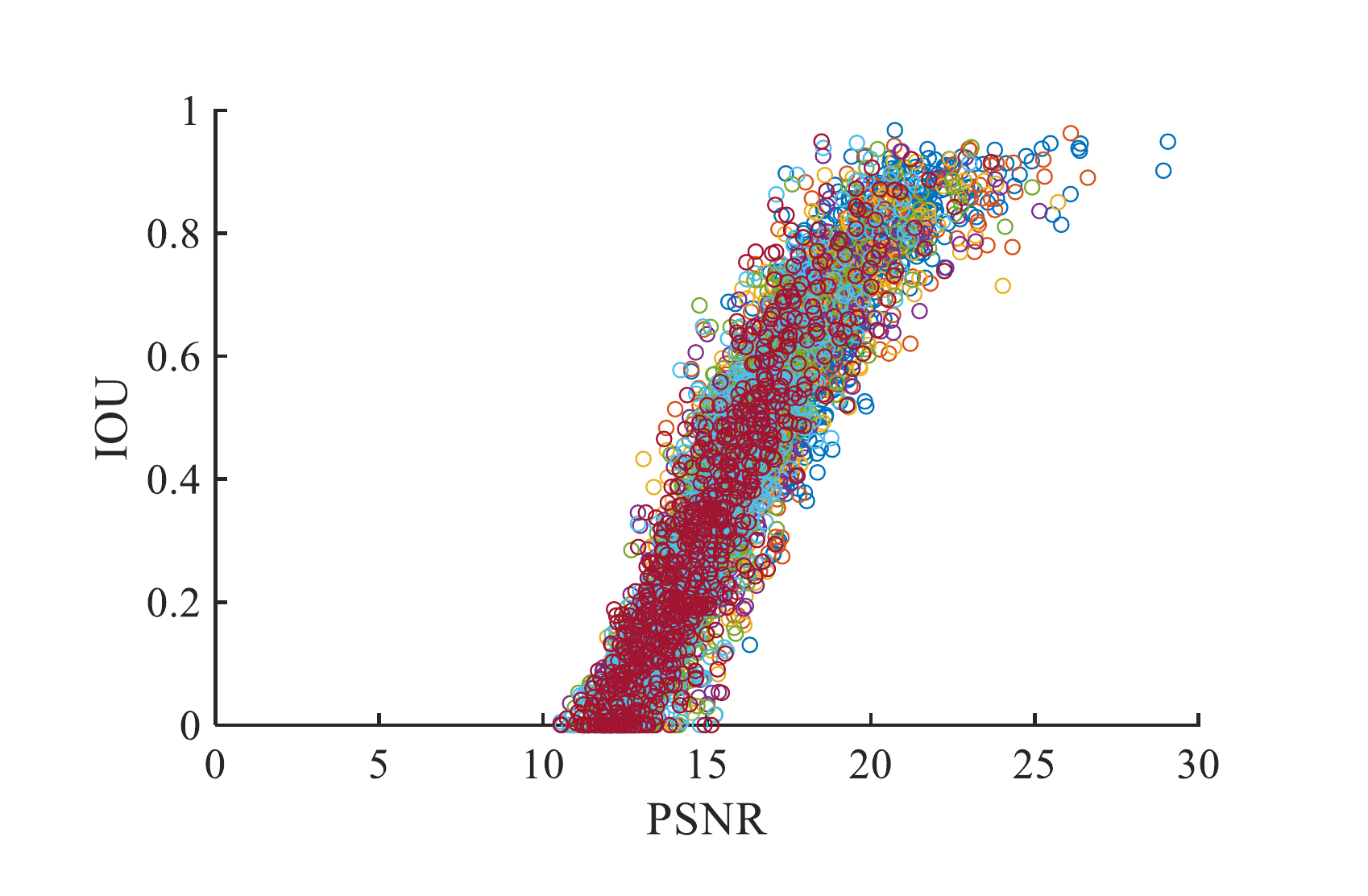}\label{fig:video1}}
\subfigure[]
{\includegraphics[width=0.24\textwidth]{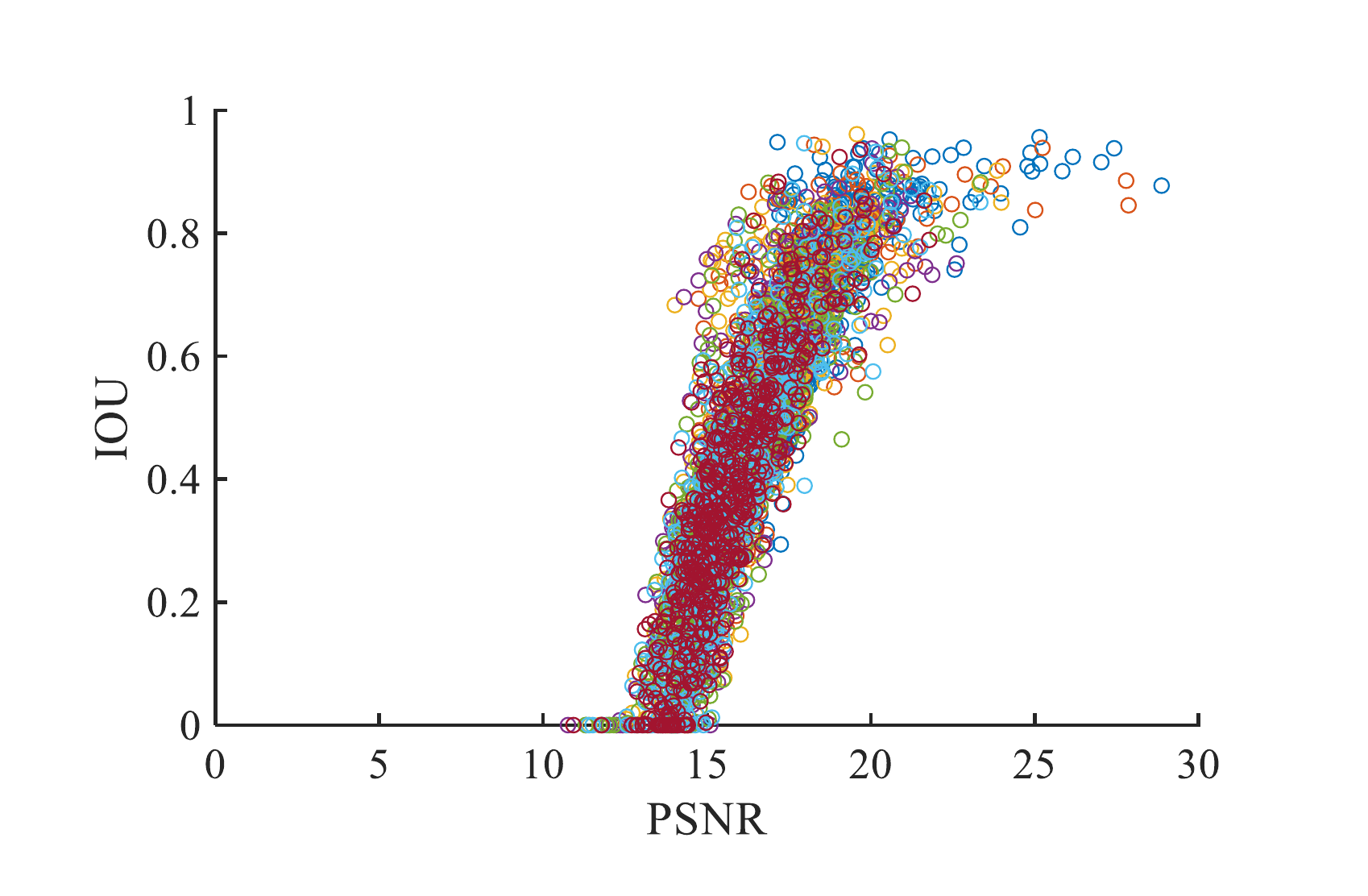}\label{fig:video2}}
\subfigure[]
{\includegraphics[width=0.24\textwidth]{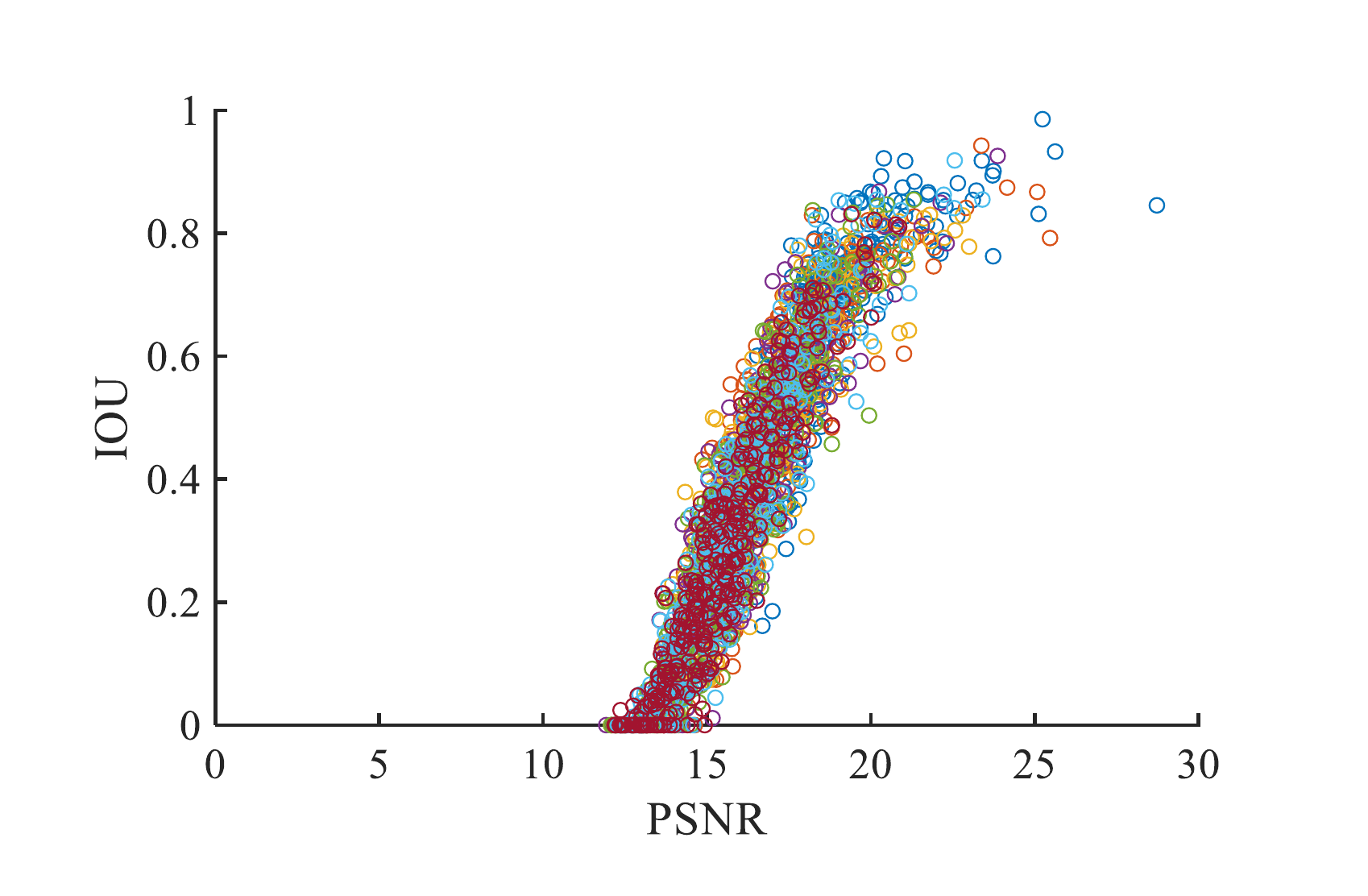}\label{fig:video3}}
\subfigure[]
{\includegraphics[width=0.24\textwidth]{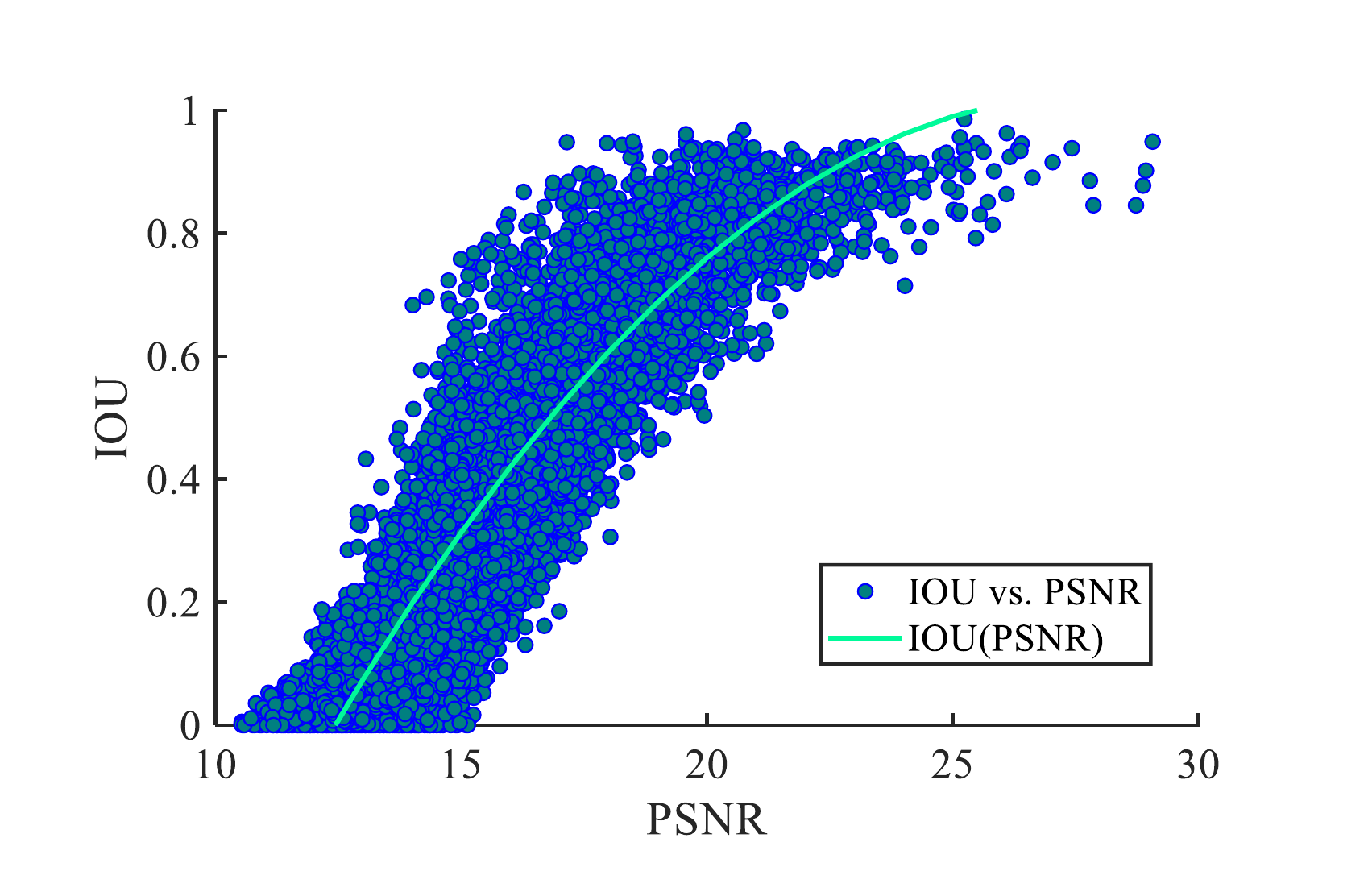}\label{fig:video_integration}}
\vspace{-0.1in}
v\caption{Study on how the PSNR impacts the IOU degradation in videos with motion blurred attribute.}
\label{fig:video}   
\vspace{-0.15in}
\end{figure*}

Given the above discussion and analysis, we formulate the image offloading frequency decision as an optimization problem $\mathscr{P}_{2}$, which aims to achieve an optimal $\rho$ to balance the MAR device's energy consumption and tracking accuracy loss. $E_{ogj}$ and $E_{trk}$ are estimated per frame energy consumption of edge-based object detection and local object tracking based on current MAR system configurations determined by LEAF, respectively. Two positive weight parameters $\theta_{1}$ and $\theta_{2}$ are introduced to characterize the offloading preference of an MAR client. For example, given a larger $\theta_{1}$ and a smaller $\theta_{2}$, the decision made by the offloading frequency orchestrator will be more aggressive on saving MAR device's battery life, and vice versa. Current scene attribute is denoted by $\varpi$. 

\begin{small}
\begin{equation}
\begin{aligned}
\mathscr{P}_{2}: \min_{\{\rho\}} \quad & 
J =  \theta_{1}\frac{E_{obj}+ E_{trk}\rho}{1+\rho} - \theta_{2}IOU^{\varpi}_{trk}(PSNR(\rho)) \\
s.t. \quad 
  & \rho\in \{0, 1, 2, ...\}.\\
\end{aligned}
\end{equation}
\end{small}

\begin{algorithm}[t]
\DontPrintSemicolon
\footnotesize
\SetNoFillComment
\label{alg:offload}
\caption{The AIO Algorithm}
\KwIn{$\rho$, $E_{obj}$, $E_{trk}$, $\theta_{1}$, $\theta_{2}$, $\mathcal{P}$, and $\mathcal{V}$.}
\KwOut{$\rho$, $\mathcal{P}$, and $\mathcal{V}$.}
\If{Object detection = True {\normalfont \textbf{and}} $\rho = 0$}{
$\mathcal{P} \gets \mathcal{P} \cup \{PSNR(I_{i-1},I_{i})\}$;\\ 
$\mathcal{V} \gets \mathcal{V} \cup \{v_{i} =\frac{PSNR(I_{i-1},I_{i})-PSNR(I_{i-2},I_{i-1})}{2}\}$;\\ 
$\Bar{v} \gets \frac{\sum^{i-n}_{i}{v_{i}w_{i}}}{\sum^{i-n}_{i}{w_{i}}}, v_{i}\in \mathcal{V}$;\\
$IOU_{trk}\left(\right) \gets IOU^{\varpi}_{trk}\left(\right)$; \Comment{Scene attribute estimation}\\

$\rho$ $\gets$ solving $\mathscr{P}_{2}$ with $E_{obj}$, $E_{trk}$, $\theta_{1}$, $\theta_{2}$, $\Bar{v}$, and $IOU^{\varpi}_{trk}\left(\right)$;\\
\If{$\rho = 0$}{
Object detection $\gets$ False;\\
}
}
\ElseIf{$\rho \neq 0$}{
$\mathcal{P} \gets \mathcal{P} \cup \{PSNR(I_{i-1},I_{i})\}$;\\ 
$\mathcal{V} \gets \mathcal{V} \cup \{v_{i} =\frac{PSNR(I_{i-1},I_{i})-PSNR(I_{i-2},I_{i-1})}{2}\} $;\\ 
$\rho \gets \rho - 1$;\\
\If{$\rho = 0$}{
Object detection $\gets$ False;
}
}
\Return $\rho$, $\mathcal{P}$, and $\mathcal{V}$.
\end{algorithm}

Based on $\mathscr{P}_{2}$, we develop an \underline{a}daptive \underline{i}mage \underline{o}ffloading (AIO) algorithm implemented in the offloading frequency orchestrator, where the pseudo code of it is presented in Algorithm \ref{alg:offload}. The proposed AIO will be triggered after the MAR device executes a successful object detection and receives the corresponding detection results from the edge server. First, the AIO calculates the $PSNR$ of the current image frame and $v_{i}$ (i.e., transient gradient of $PSNR$) and updates them in sets $\mathcal{P}$ and $\mathcal{V}$, respectively (i.e., line $2$-$3$ in Algorithm \ref{alg:offload}). We then estimate the current scene change rate denoted by $\Bar{v}$. To avoid a transient outlier that impacts the precision of estimation, $\Bar{v}$ is achieved by calculating the weighted mean of elements of $\mathcal{V}$ that are the latest updated in a short range of time (e.g., in the past 2 seconds). In this paper, we use exponential function to calculate the weights, where the element that is updated later will be allocated with a larger weight. The scene attribute estimation is out of the scope of this paper, where we assume that the AIO knows the current $IOU^{\varpi}_{trk}\left(\right)$ when solving $\mathscr{P}_{2}$. In addition, to fulfill the second principle (i.e., reducing the workload on MAR devices), the per frame energy consumption of object detection $E_{obj}$ is estimated by the edge server via our proposed analytical model, presented in Section \ref{sc:models}, based on the LEAF guided configurations, while the per frame energy consumption of object tracking $E_{trk}$ is estimated locally via a preset table. Thus, the AIO can efficiently achieve the value of $E_{trk}$ through the table in terms of its current CPU frequency. Finally, the AIO outputs an optimal $\rho$ and the MAR device will keep performing local object tracking until $\rho$ decreases to $0$.


\section{Performance Evaluation of the Proposed Analytical Model and MAR System}
\label{sc:evluation}
In this section, we evaluate both the proposed MAR analytical energy model as well as the proposed LEAF and AIO algorithms. We first validate our analytical model by comparing the estimated energy consumption with the real energy measurement (obtained from our developed testbed described in Section \ref{sc:Experimental Results}). The Mean Absolute Percentage Error (MAPE) is used for quantifying the estimation error. Then, we evaluate the per frame energy consumption, service latency, and detection accuracy of the proposed LEAF and AIO algorithms under variant bandwidth and user preferences through data-driven simulations.
\subsection{Analytical Model Validation}
The measured power and duration of promotion and tail phases in WiFi are shown in Table \ref{tb:prom&tail} (note that LTE has different values \cite{hu2015energy}). As shown in Fig. \ref{fig:modelV}, we validate the proposed analytical model with respect to MAR client's CPU frequency, computation model size, allocated bandwidth, and camera FPS. Each measured data is the average of the per frame energy consumption of $1,000$ image frames. The calculated MAPE of these four cases are $6.1\% \pm 3.4\%$, $7.6\% \pm 4.9\%$, $6.9\% \pm 3.9\%$, and $3.7\% \pm 2.6\%$, respectively. Therefore, our proposed energy model can estimate the MAR per frame energy consumption very well.
\begin{table}[t]
\vspace{-0.1in}
 \begin{center}
    \caption{Power and Duration of Promotion \& Tail Phases.}
    \label{tb:prom&tail}
  \begin{tabular}{ |c|c|c|c| }
    \hline
    $P_{pro}$ (W) & $t_{pro}$ (s) & $P_{tail}$ (W) & $t_{tail}$ (s)\\ \hline
    $1.97\pm 0.08$ & $0.034\pm 0.004$ & $1.61\pm 0.15$ & $0.21\pm 0.02$\\ \hline
  \end{tabular}
   \end{center}
\vspace{-0.2in}
\label{tb:promo&tail}
\end{table}

\begin{figure*}[t]
\centering
\subfigure[]
{\includegraphics[width=0.24\textwidth]{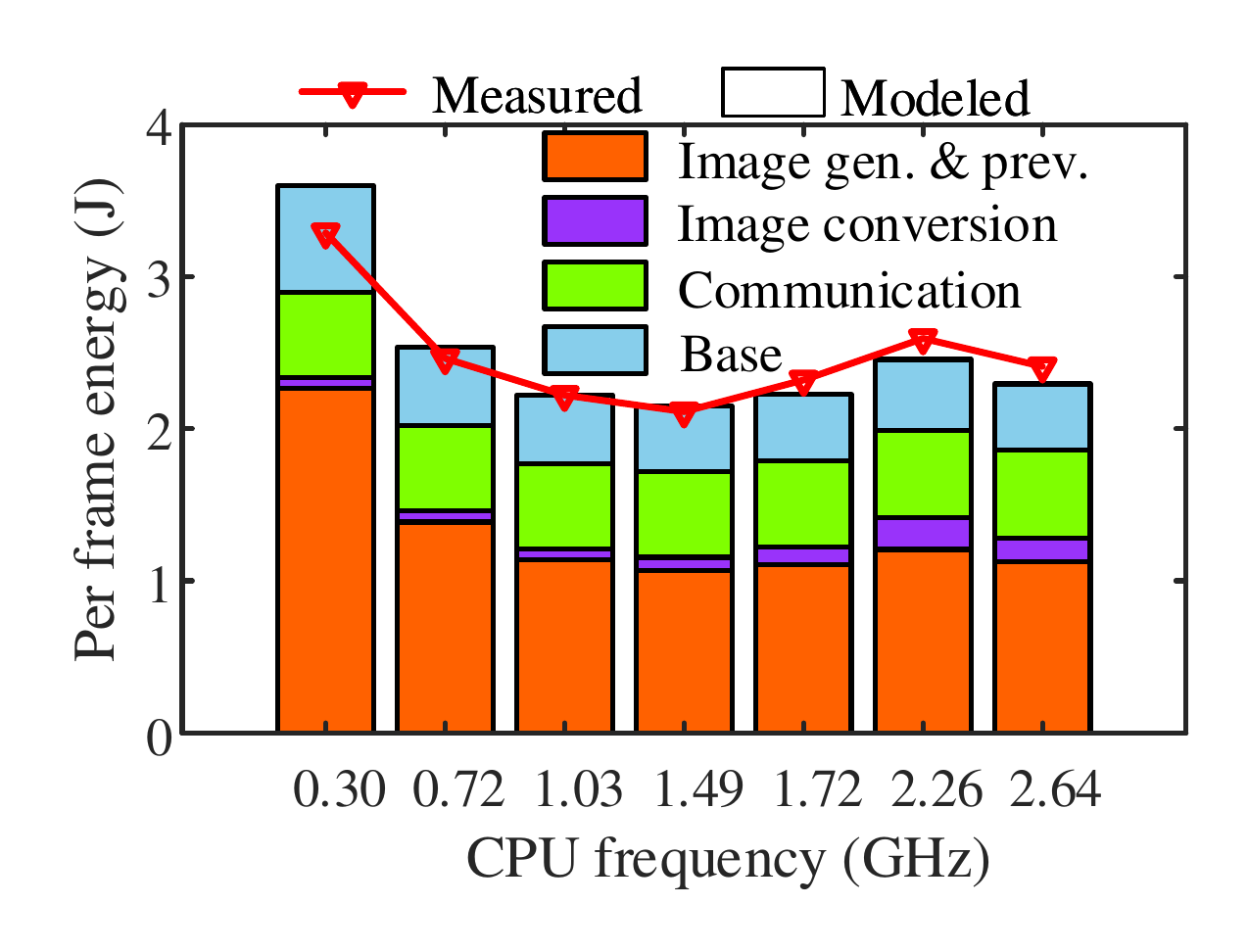}\label{fig:modelAf}}
\subfigure[]
{\includegraphics[width=0.24\textwidth]{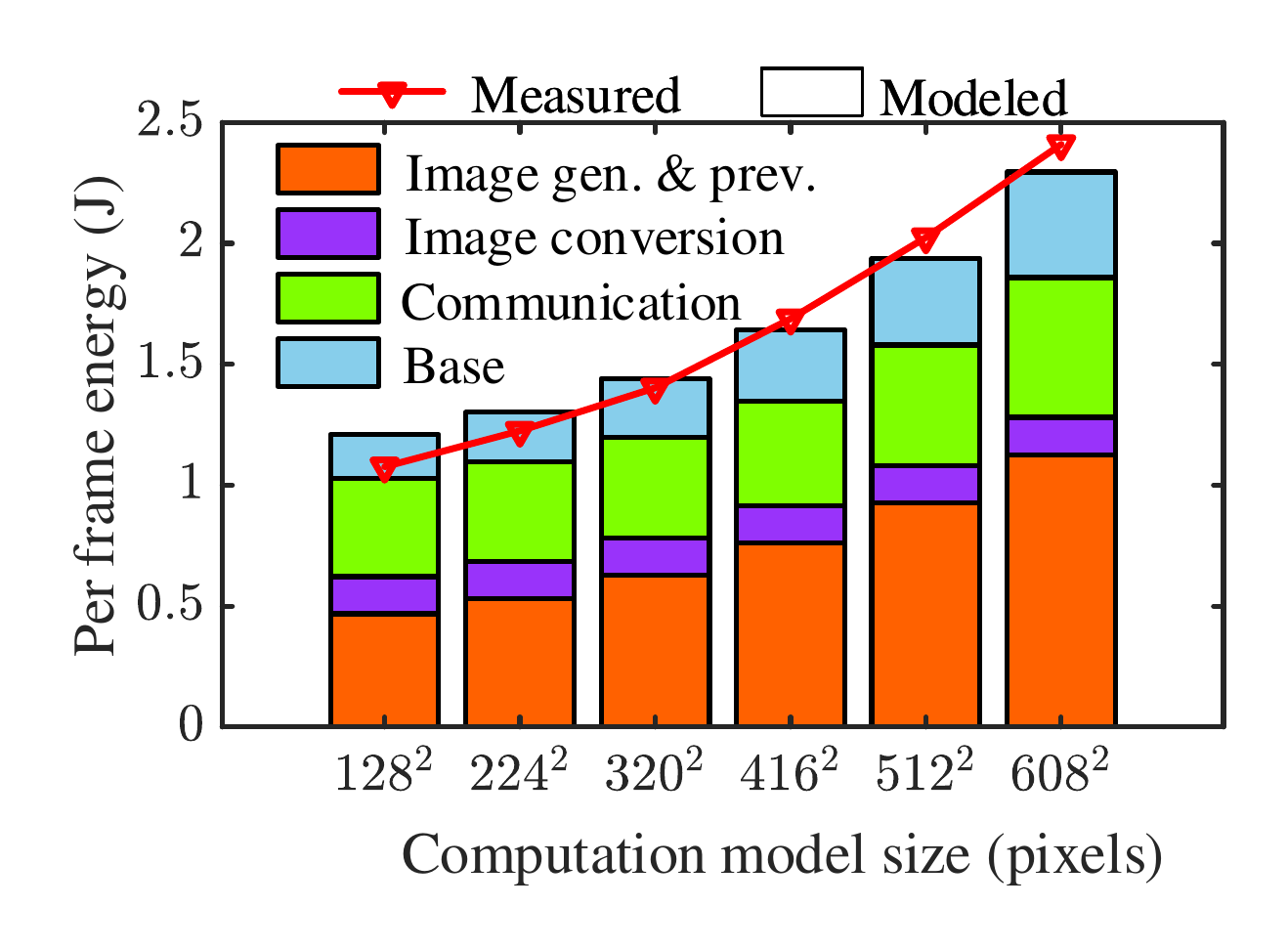}\label{fig:modelAs}}
\subfigure[]
{\includegraphics[width=0.24\textwidth]{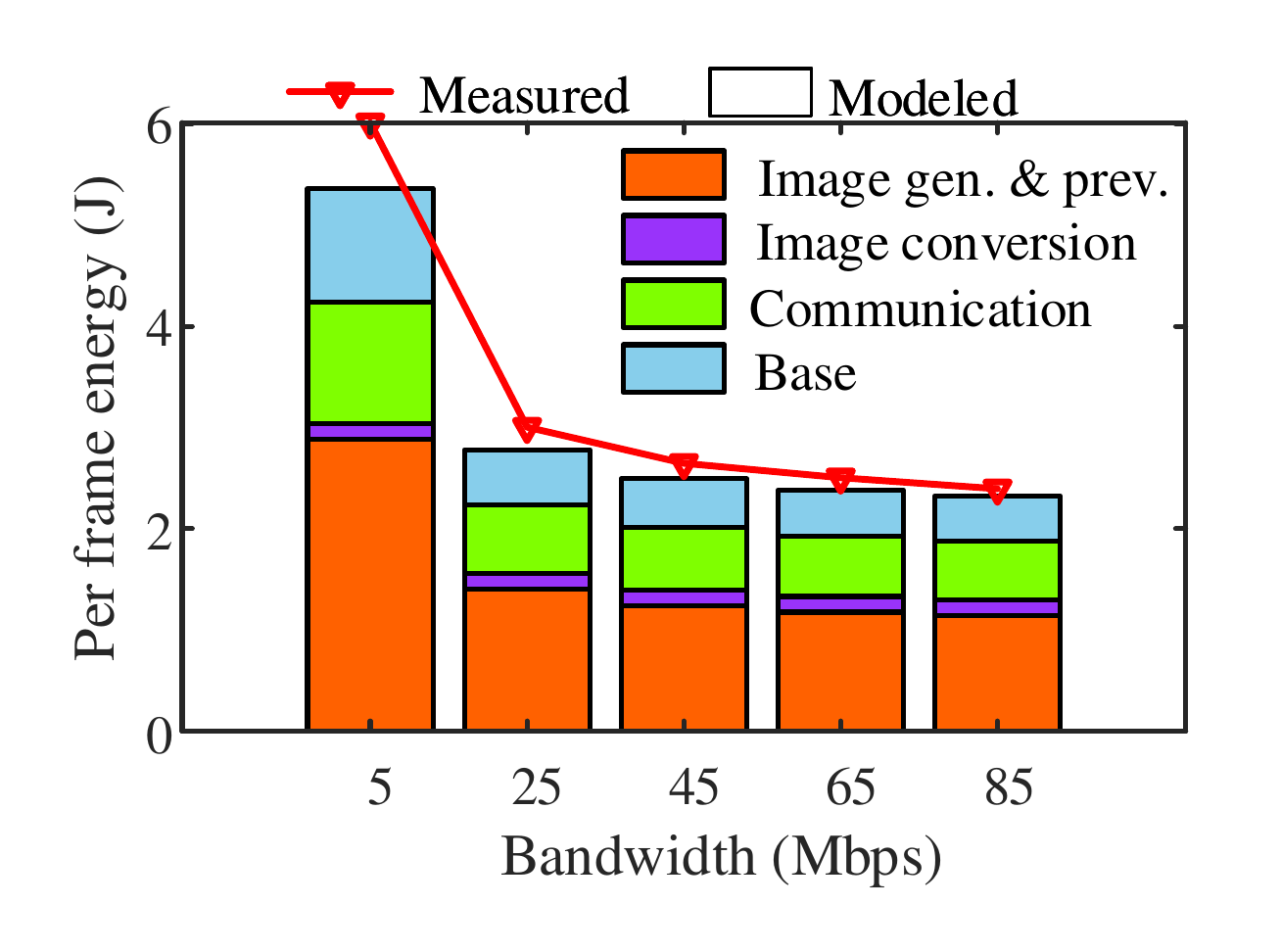}\label{fig:modelAb}}
\subfigure[]
{\includegraphics[width=0.24\textwidth]{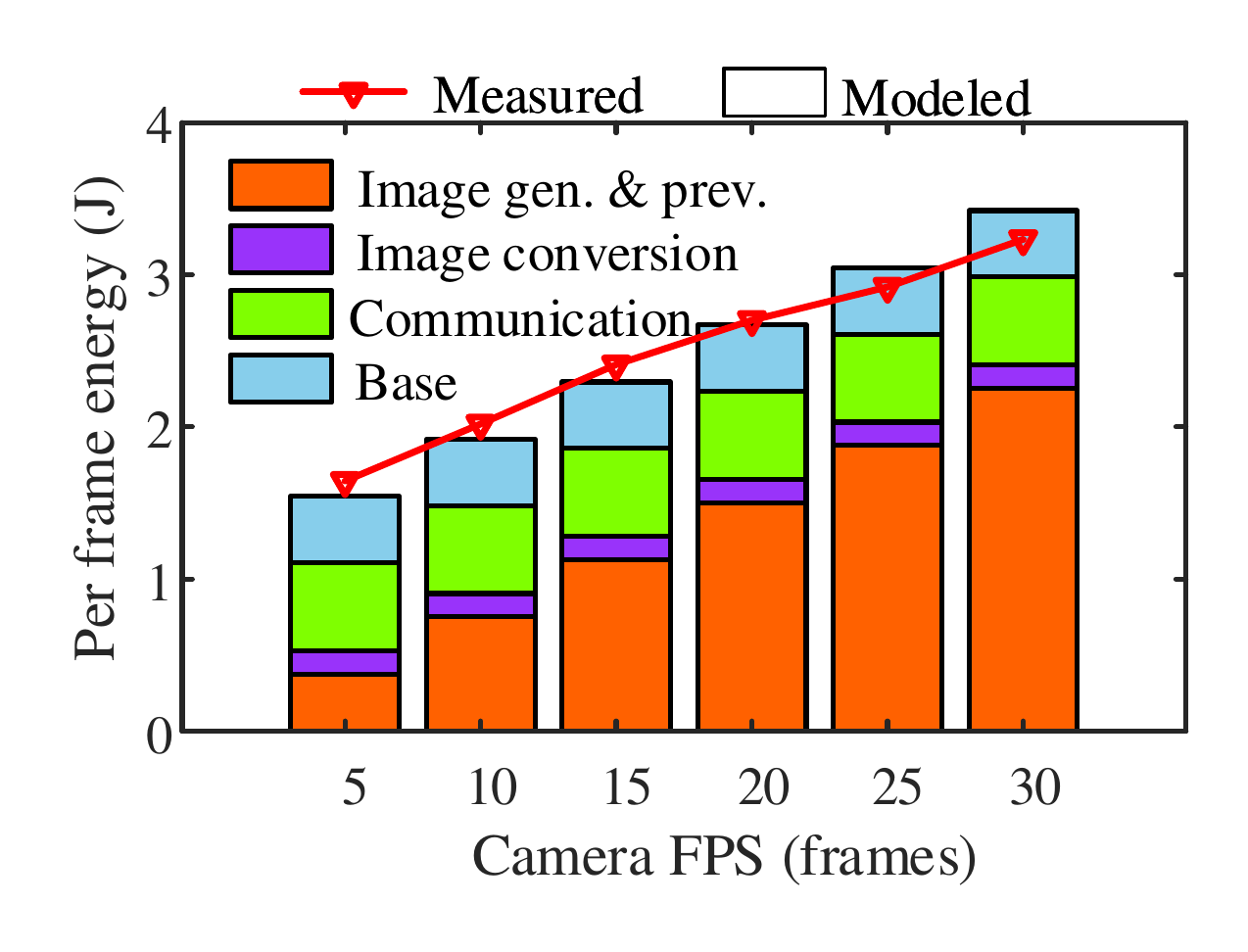}\label{fig:modelAfps}}
\vspace{-0.1in}
\caption{Measured data vs. estimated data from our proposed analytical model.}
\label{fig:modelV}   
\vspace{-0.15in}
\end{figure*}

\begin{figure}[t]
\centering
\subfigure[\first{$Q$ vs. Max. bandwidth}]
{\includegraphics[width=0.235\textwidth]{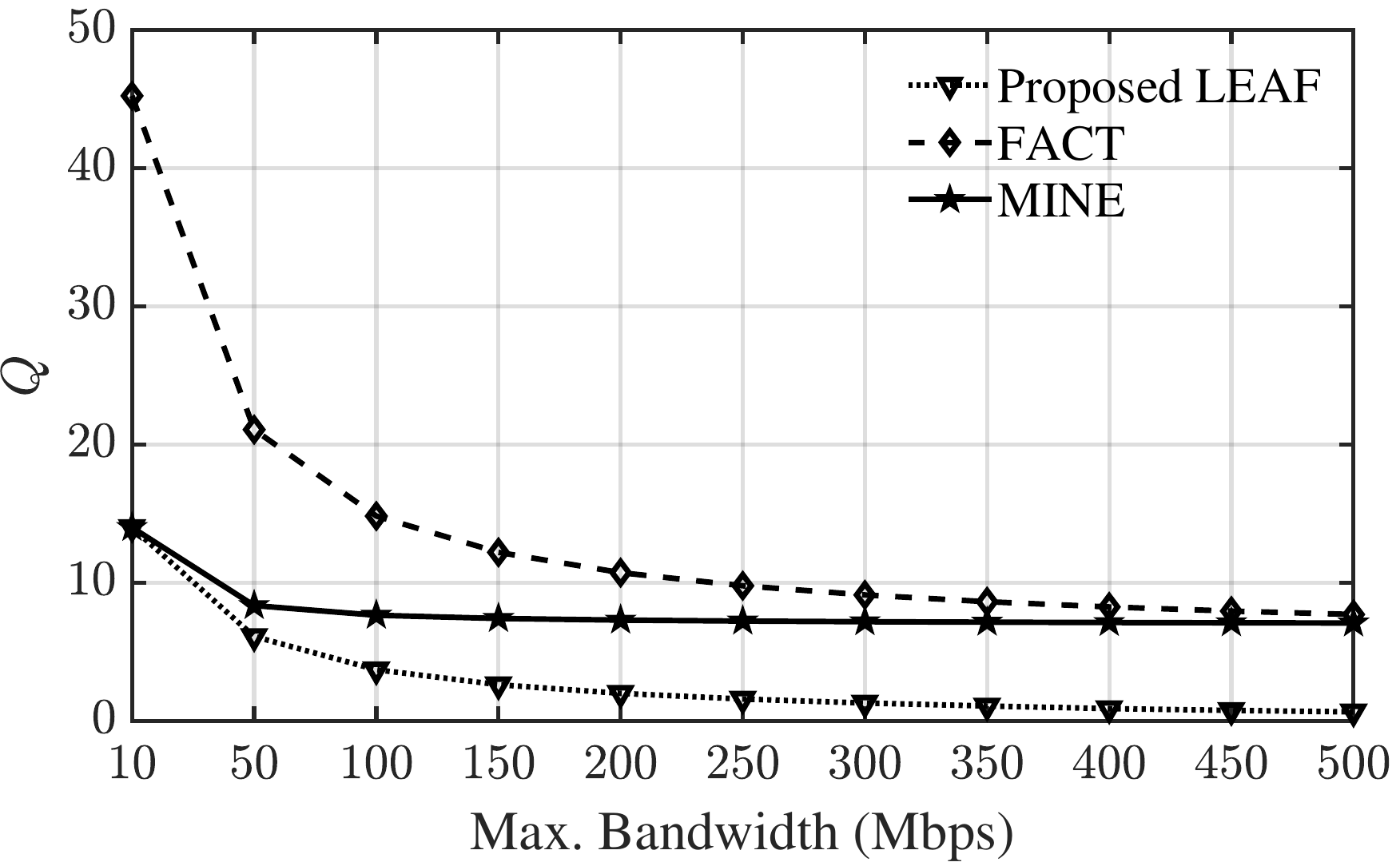}\label{fig:Q1}}
\subfigure[\first{$Q$ vs. user preference}]
{\includegraphics[width=0.24\textwidth]{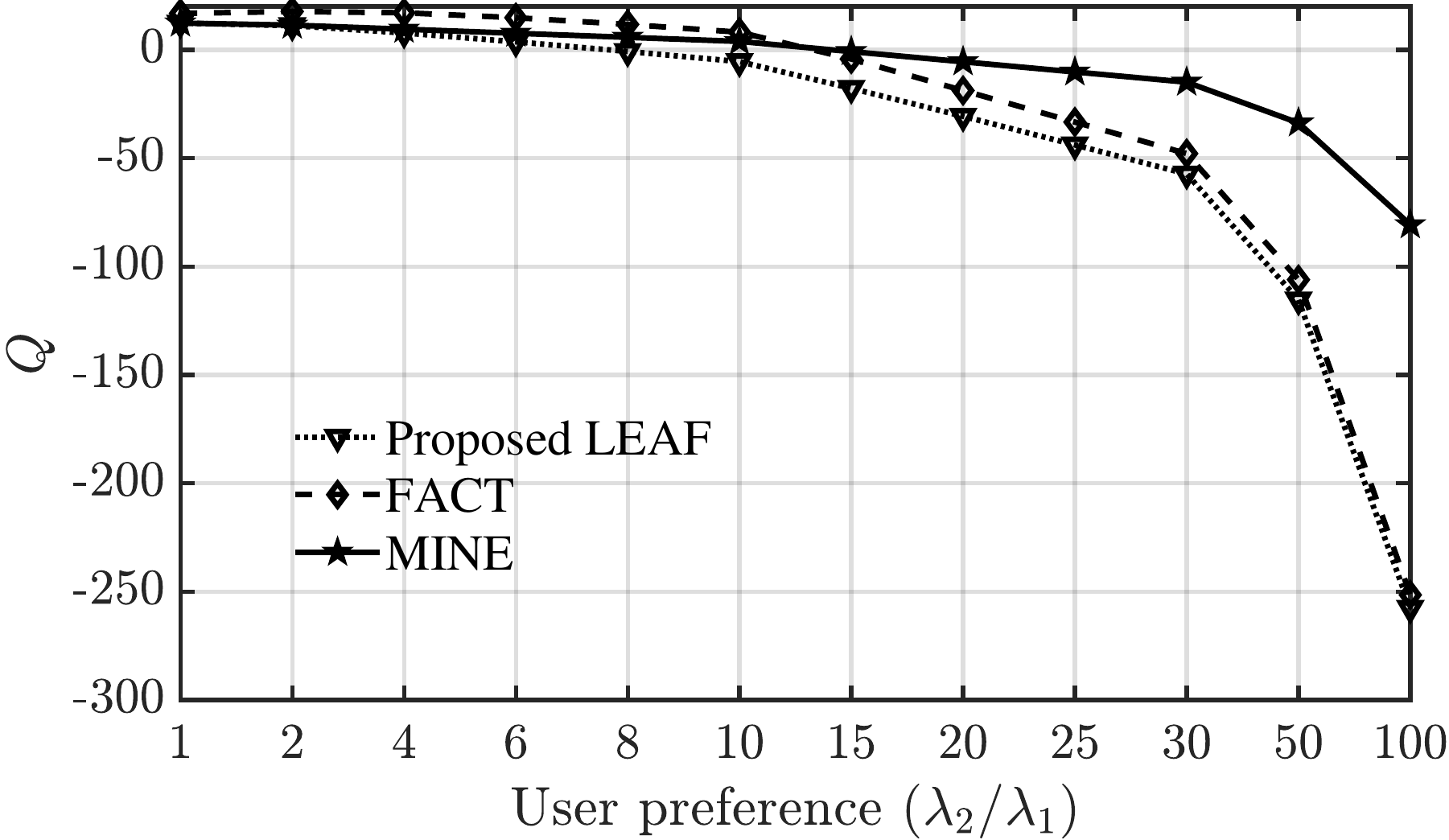}\label{fig:Q2}}
\caption{\first{Optimality.}}
\label{fig:QPevaluation}   
\vspace{-0.15in}
\end{figure}

\begin{figure}[t]
\centering
\subfigure[]
{\includegraphics[width=0.225\textwidth]{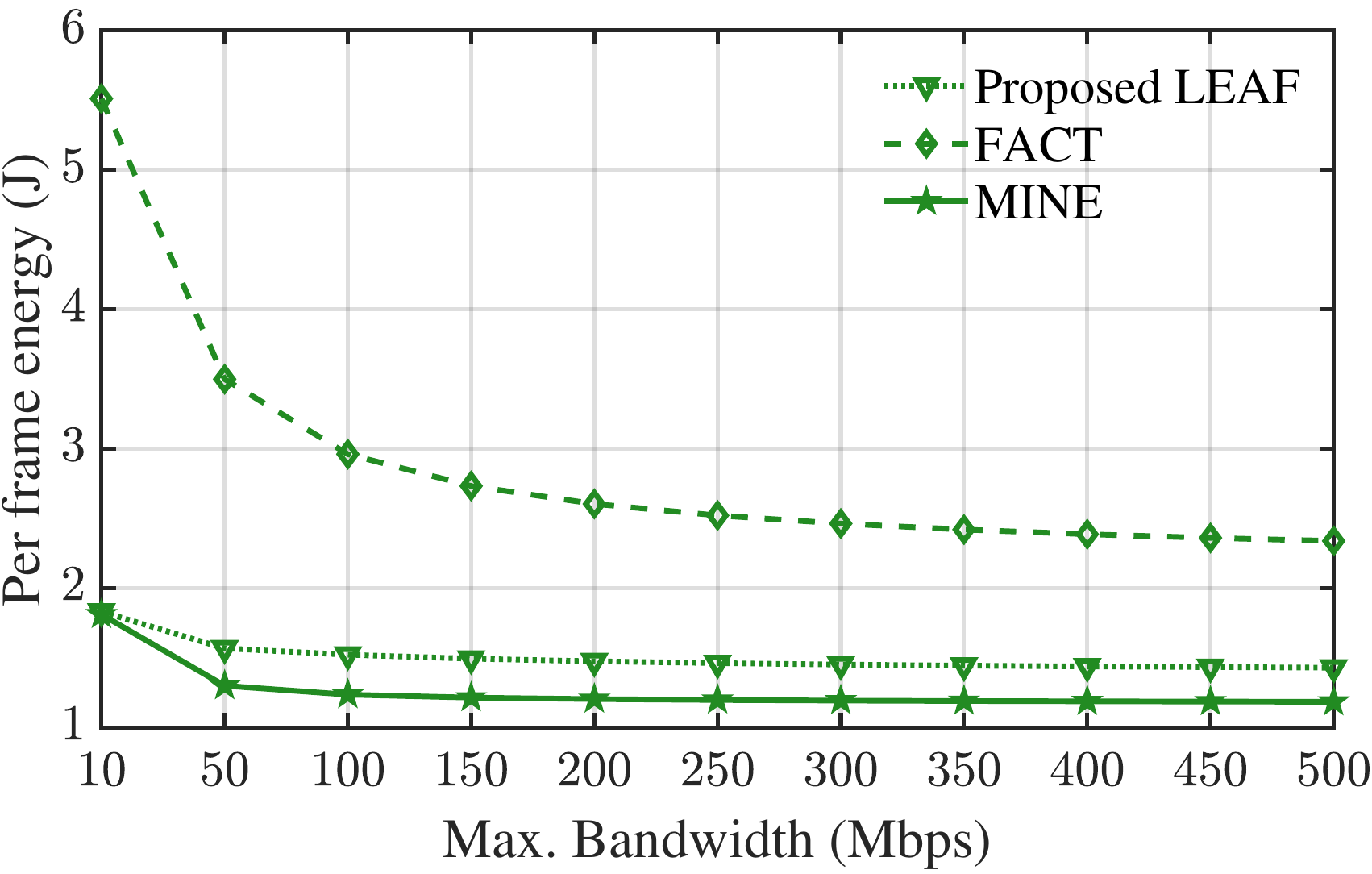}\label{fig:Eb}}
\subfigure[]
{\includegraphics[width=0.252\textwidth]{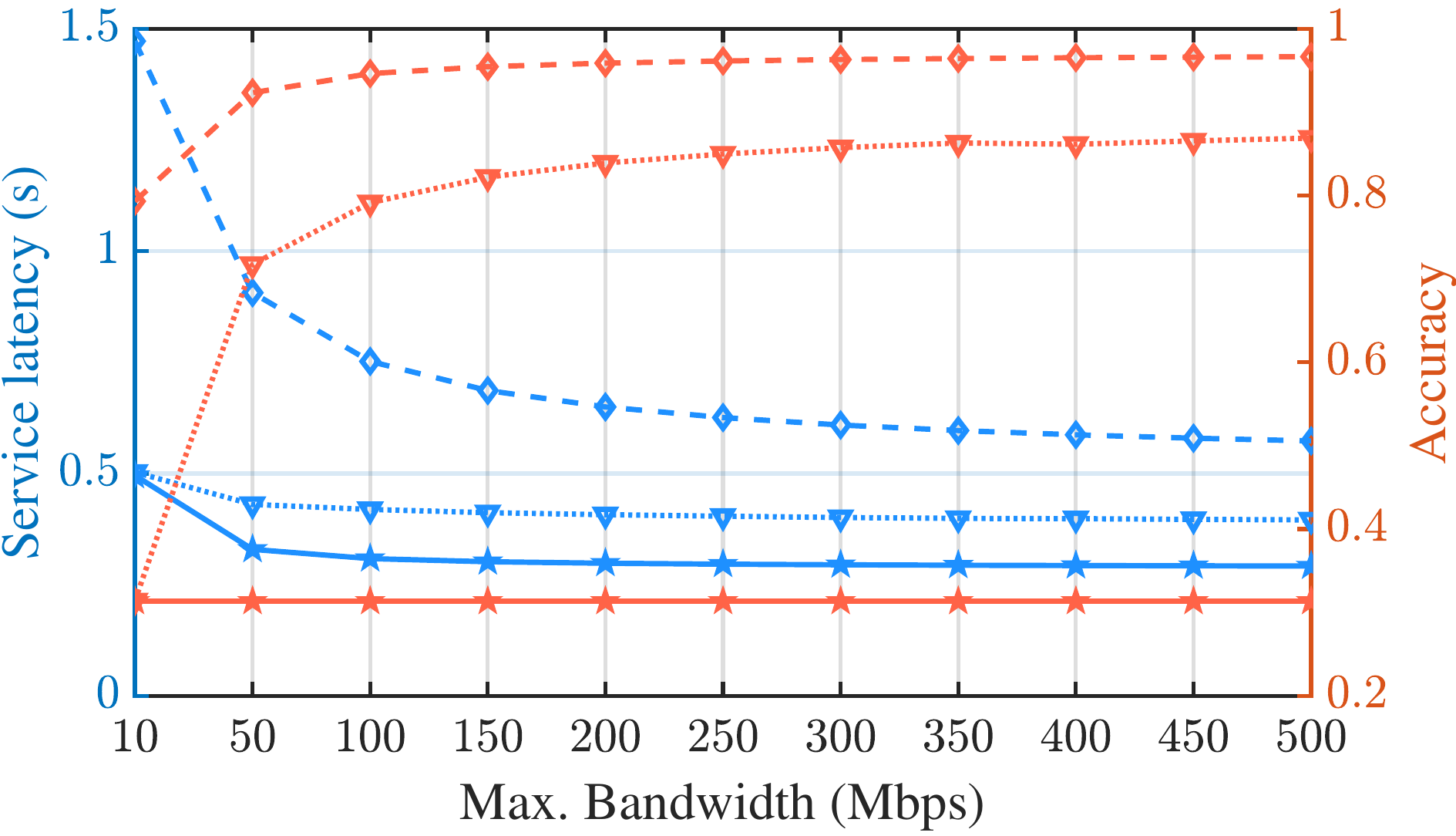}\label{fig:LAb}}
\caption{\first {System performance vs. Max. bandwidth.}}
\label{fig:BPevaluation}   
\vspace{-0.15in}
\end{figure}

\begin{figure}[t]
\centering
\subfigure[]
{\includegraphics[width=0.23\textwidth]{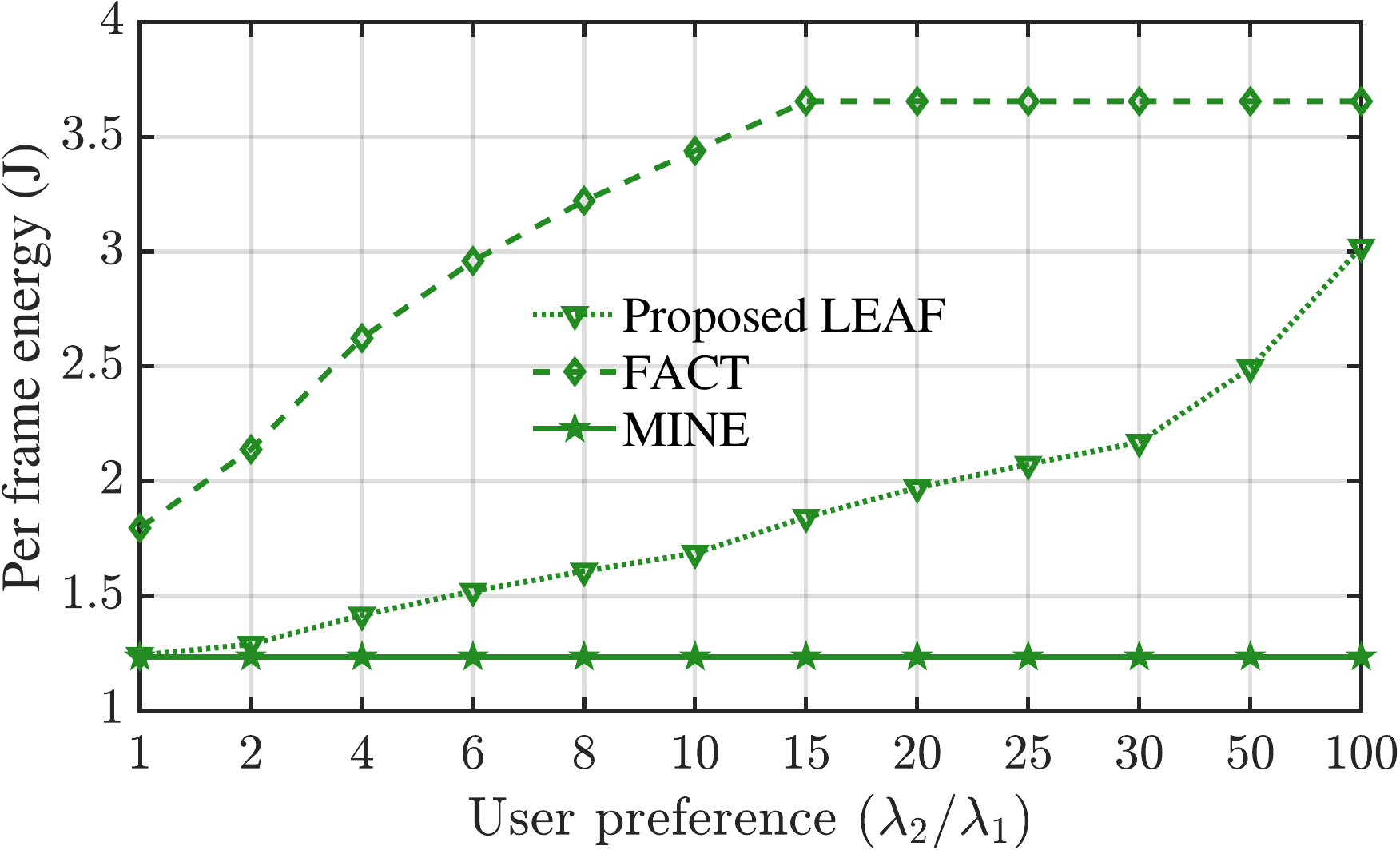}\label{fig:Ep}}
\subfigure[]
{\includegraphics[width=0.25\textwidth]{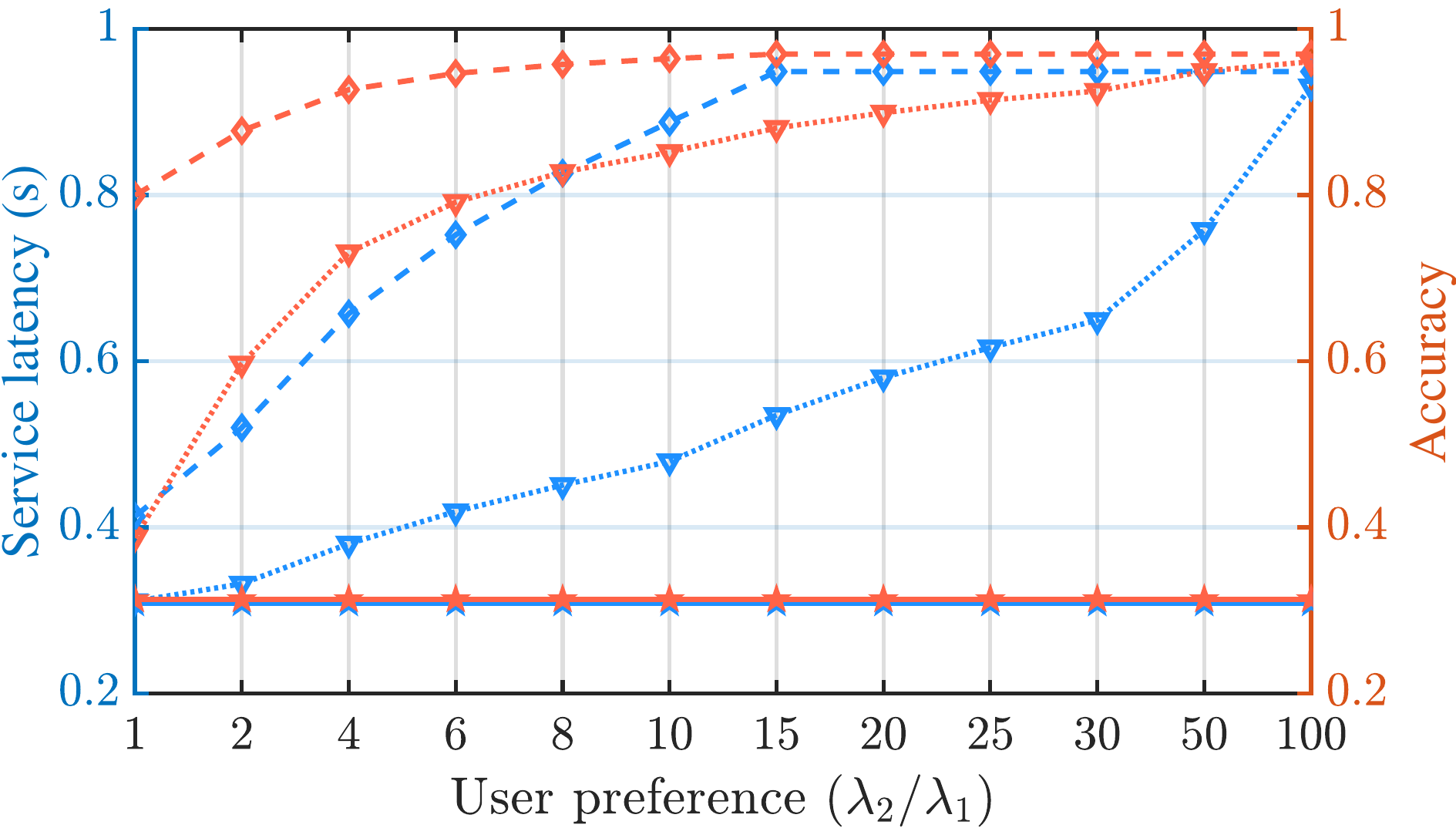}\label{fig:LAp}}
\caption{\first {System performance vs. user preference.}}
\label{fig:UPevaluation}   
\vspace{-0.15in}
\end{figure}

\subsection{Performance Evaluation of LEAF}
We simulate an edge-based MAR system with an edge server and multiple MAR clients. Each MAR client may select a different camera FPS, which is obtained randomly in the range of $\left[1,30\right]$ frames\footnote{Ten MAR devices are implemented in our simulation, where their camera FPS are 9, 30, 16, 23, 14, 17, 13, 2, 19, 5.}. The default user preference is $\lambda_{1} = 0.3$ and $\lambda_{2} = 1.8$. We compare our proposed LEAF algorithm with two other algorithms summarized as follows:
\begin{itemize}
    \item \textbf{FACT + Interactive:} It uses the FACT algorithm \cite{liu2018edge} to select the computation model size, which is optimized for the tradeoff between the service latency and the detection accuracy. As FACT does not consider the MAR client's CPU frequency scaling and radio resource allocation at the edge server, we use \textit{Interactive} to conduct CPU frequency scaling and the radio resource is allocated evenly.
    Note that FACT does not consider the energy efficiency of MAR clients either.
    
    \item \textbf{Energy-optimized only solution:} It selects the optimal CPU frequency, computation model size, and bandwidth allocation by minimizing the per frame energy consumption of MAR clients in the system without considering user preferences, which is named as MINE.
\end{itemize}

\textbf{Optimality.} We first validate the optimality of our proposed LEAF algorithm. As shown in Fig. \ref{fig:QPevaluation}, LEAF always obtains the minimal $Q$ compared to the other two algorithms under variant maximum available bandwidth and user preference.

\textbf{Comparison under Variant Max. Bandwidth.} We then evaluate the impact of the maximum available bandwidth on the performance of the proposed LEAF. As presented in Section \ref{ssc:formulation}, in practical environments, the maximum bandwidth at an edge server for serving its associated MAR clients may vary with the user distribution. For each MAR client, the value of the allocated bandwidth directly impacts not only the service latency and the per frame energy consumption but also the detection accuracy. The evaluation results are depicted in Fig. \ref{fig:BPevaluation}. (i) Compared to FACT, the proposed LEAF decreases up to $40\%$ per frame energy consumption and $35\%$ service latency with less than $9\%$ loss of object detection accuracy when the max. bandwidth is $300$ Mbps. The performance gap between LEAF and FACT is due to the gain derived through optimizing the clients' CPU frequency and the server radio resource allocation. (ii) Compared to MINE, the proposed LEAF significantly improves the detection accuracy at the cost of a slightly increase of the service latency and per frame energy. The performance gap between LEAF and MINE reflects the gain derived through considering the user preference.

\textbf{Comparison under Variant User Preferences.}
Finally, we evaluate the impact of the user preference on the performance of the proposed LEAF by varying the value of $\lambda_{2}/\lambda_{1}$, as shown in Fig. \ref{fig:UPevaluation}. User preference impacts the tradeoffs among the per frame energy consumption, service latency, and detection accuracy. When $\lambda_{2}/\lambda_{1}$ grows, the MAR client emphasizes on the detection accuracy by trading the service latency and per frame energy. Since MINE does not consider the user preference, the variation of $\lambda_{2}/\lambda_{1}$ does not change its performance. (i) Compared to FACT, the proposed LEAF reduces over $20\%$ per frame energy consumption while maintaining the same detection accuracy ($\lambda_{2}/\lambda_{1}=100$). (ii) Compared to MINE, the proposed LEAF is able to enhance over $50\%$ accuracy while ensuring similar per frame energy and service latency ($\lambda_{2}/\lambda_{1}=2$). Fig. \ref{fig:UPevaluation} also shows that, as compared to FACT, the proposed LEAF offers more fine-grained and diverse user preference options for MAR clients.

\begin{figure*}[t]
\centering
\subfigure[]
{\includegraphics[width=0.24\textwidth]{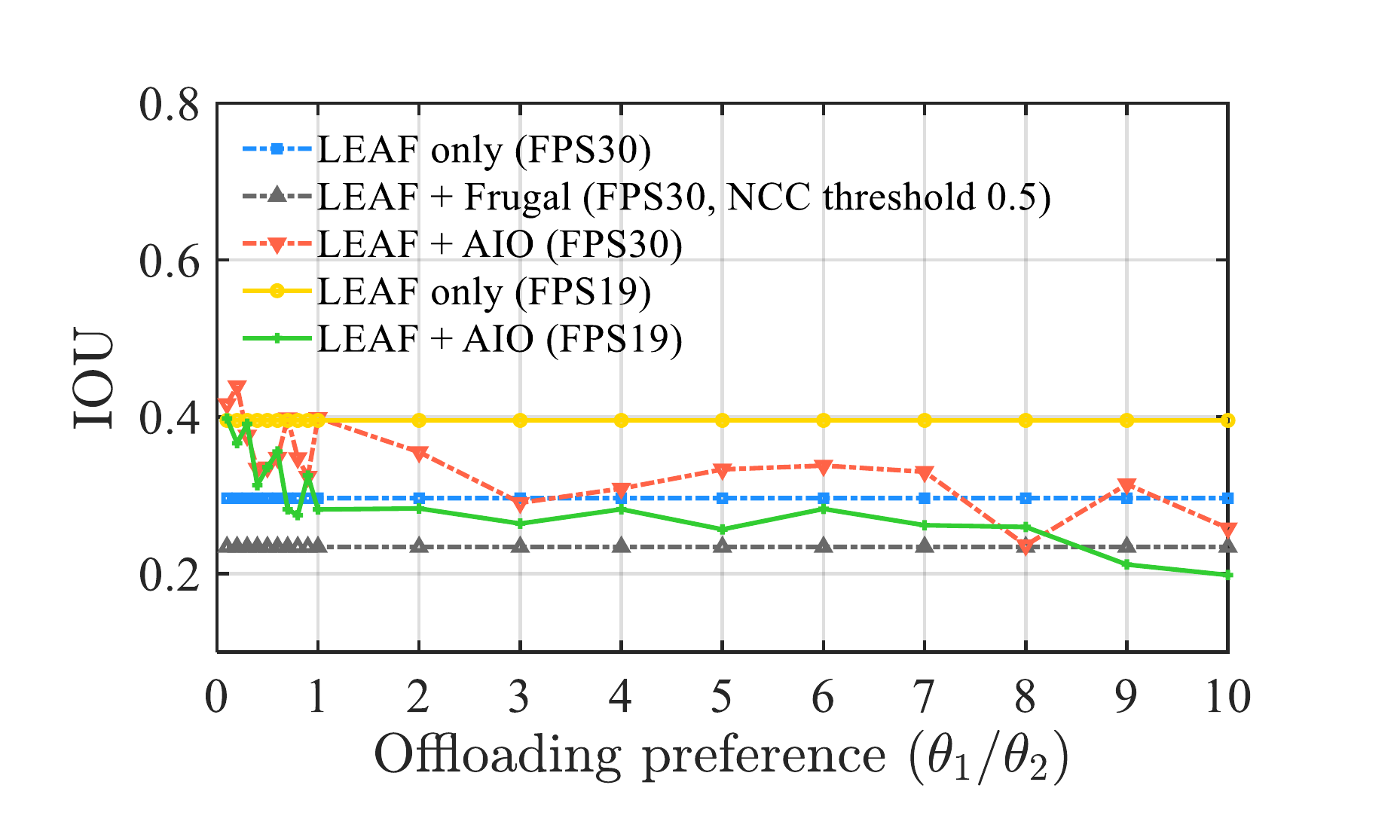}\label{fig:IOUvsa1a2}}
\subfigure[]
{\includegraphics[width=0.24\textwidth]{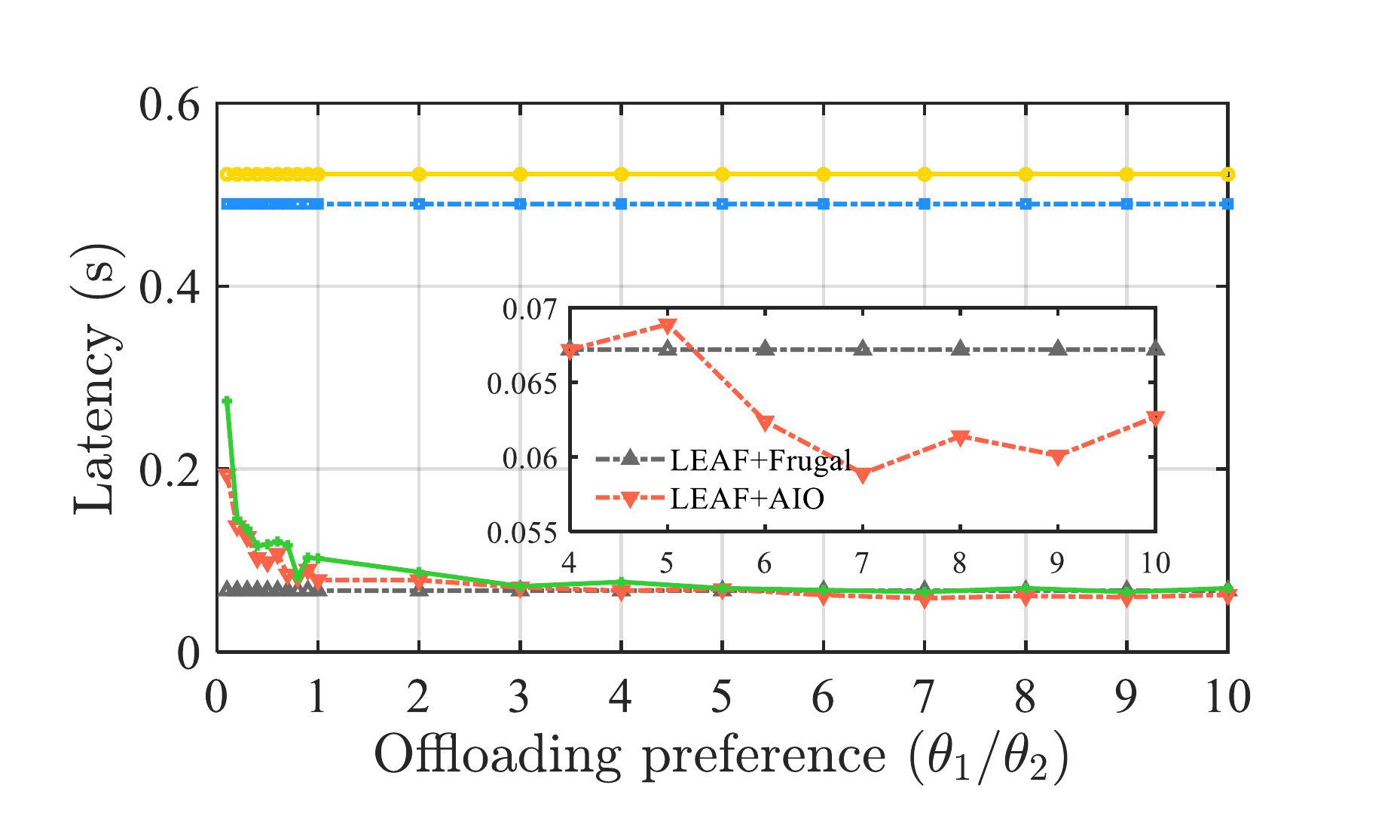}\label{fig:latencyvsa1a2}}
\subfigure[]
{\includegraphics[width=0.23\textwidth]{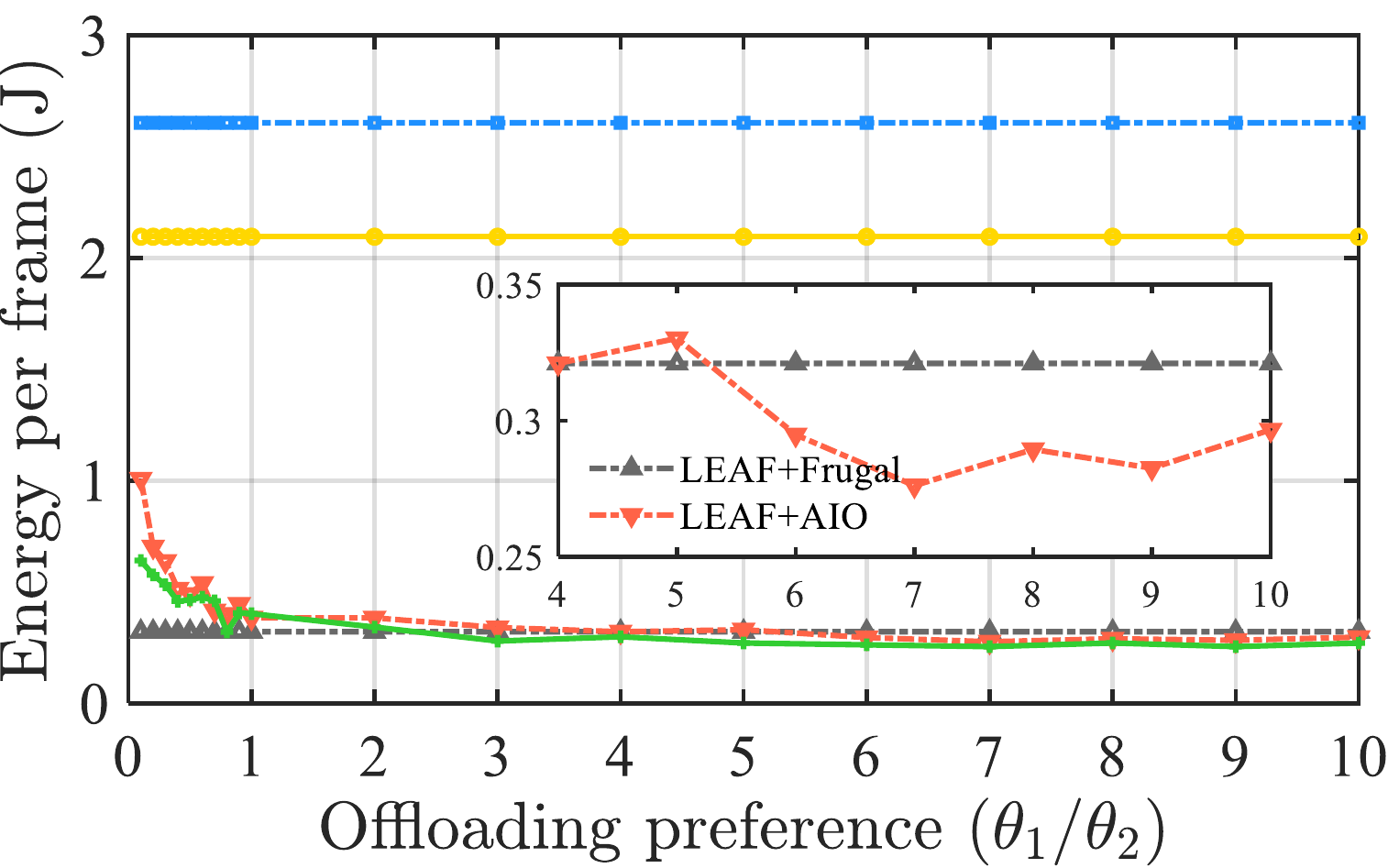}\label{fig:energyvsa1a2}}
\subfigure[]
{\includegraphics[width=0.23\textwidth]{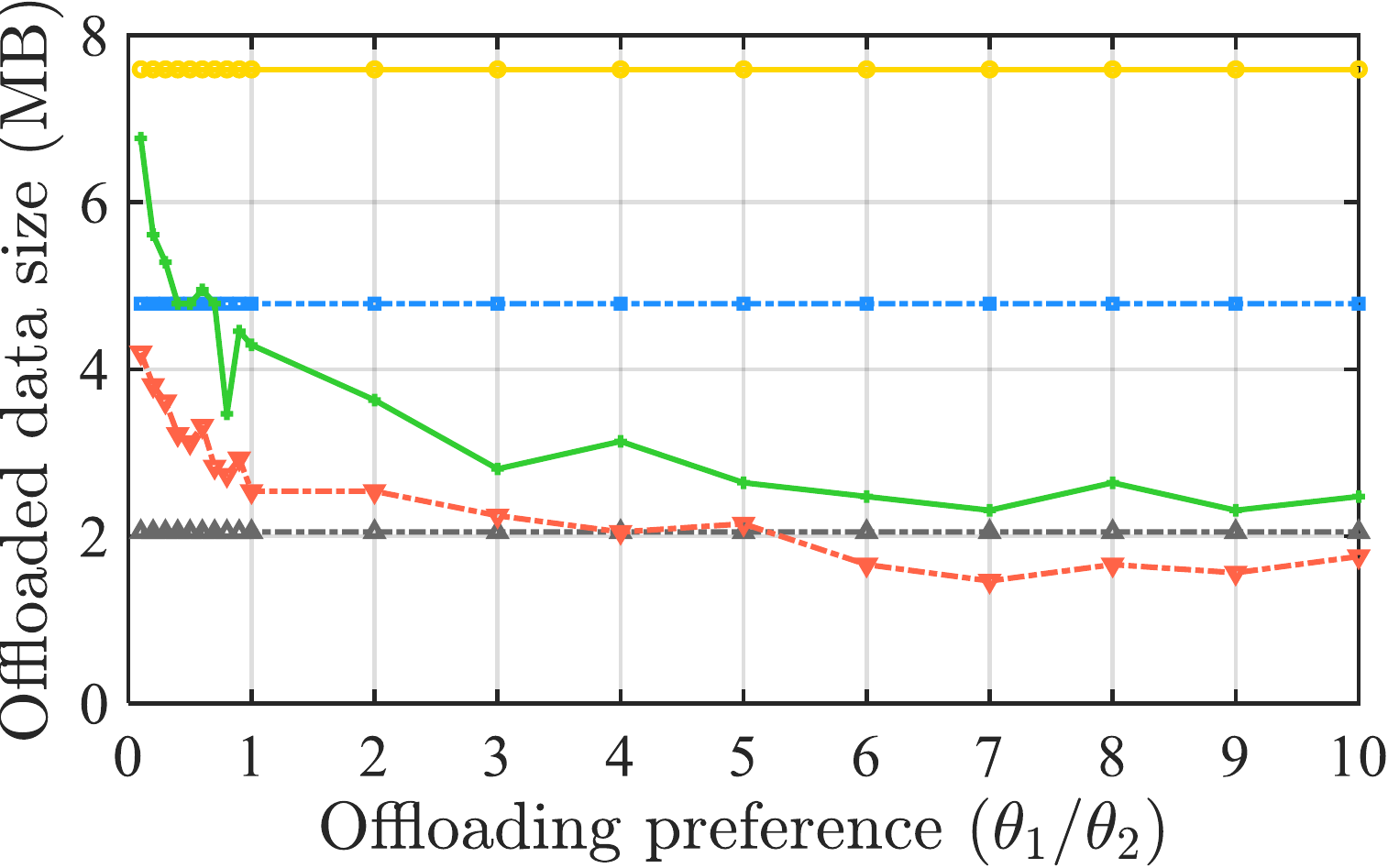}\label{fig:datasizevsa1a2}}
\vspace{-0.1in}
\caption{System performance vs. offloading preference.}
\label{fig:AIOevaluation}   
\vspace{-0.15in}
\end{figure*}

\subsection{Performance Evaluation of LEAF + AIO}
We implement our proposed image offloading orchestrator with the AIO in a device-to-device testbed that consists of an MAR device and an edge server. The MAR device works with the configurations achieved from the edge server and optimized by the proposed LEAF. In the experiment, we choose two MAR clients with camera FPS $30$ and $19$ for the evaluation. To make the experiment repeatable, we leverage the video frames from the open dataset \cite{WuLimYang13} as the source of data ingestion. In addition, we compare our proposed AIO algorithm integrated with the LEAF with two other baselines summarized as follows:
\begin{itemize}
    \item \textbf{LEAF + Frugal} \cite{apicharttrisorn2019frugal}\textbf{:} It uses a preset normalized cross-correlation (NCC) threshold to trigger object detection invocations. The value of NCC threshold is set to $0.5$ which is experience-driven.  
    \item \textbf{LEAF only:} The MAR device offloads its camera captured image frames as many as possible and no local object tracker is deployed. 
\end{itemize}

\textbf{Comparison under Variant Offloading Preferences.} We evaluate the impact of the offloading preference on the performance of the proposed AIO by varying the value of $\theta_{1}/\theta_{2}$, as illustrated in Fig. \ref{fig:AIOevaluation}. Offloading preference influences the tradeoffs between the perception accuracy and energy efficiency of MAR devices. When $\theta_{1}/\theta_{2}$ increases, the image offloading orchestrator emphasizes on the energy efficiency by trading the perception accuracy. As Frugal sets the same trigger value for all scenarios and LEAF only does not consider the adaptive image offloading decision, the variation of $\theta_{1}/\theta_{2}$ does not change their performance. (i) Compared to Frugal, our proposed AIO improves the average IOU by $43\%$ while decreasing the average service latency and per frame energy consumption by $12.3\%$ and $13.9\%$, respectively ($\theta_{1}/\theta_{2} = 7$). (ii) Compared to the LEAF only, our integration system not only significantly drops the latency and offloaded data size but also further improves IOU and energy efficiency of MAR devices.

\section{Conclusion}
\label{sc:conclusion}
In this paper, we proposed a user preference based energy-aware edge-based MAR system for object detection that can reduce the per frame energy consumption of MAR clients without compromising their user preferences by dynamically selecting the optimal combination of MAR configurations and radio resource allocations according to user preferences, camera FPS, and available radio resources at the edge server. To the best of our knowledge, we built the first analytical energy model for thoroughly investigating the interactions among MAR configuration parameters, user preferences, camera sampling rate, and per frame energy consumption in edge-based MAR systems. Based on the proposed analytical model, we proposed the LEAF optimization algorithm to guide the optimal MAR configurations and resource allocations. The performance of the proposed analytical model is validated against real energy measurements from our testbed and the LEAF algorithm is evaluated through extensive data-driven simulations. Additionally, we studied and implemented object tracking to further improve the energy efficiency of our proposed edge-based MAR system.

\IEEEpeerreviewmaketitle

\bibliographystyle{IEEEtran}
\bibliography{references}

\begin{IEEEbiography}[{\includegraphics[width=1in,height=1.25in,clip,keepaspectratio]{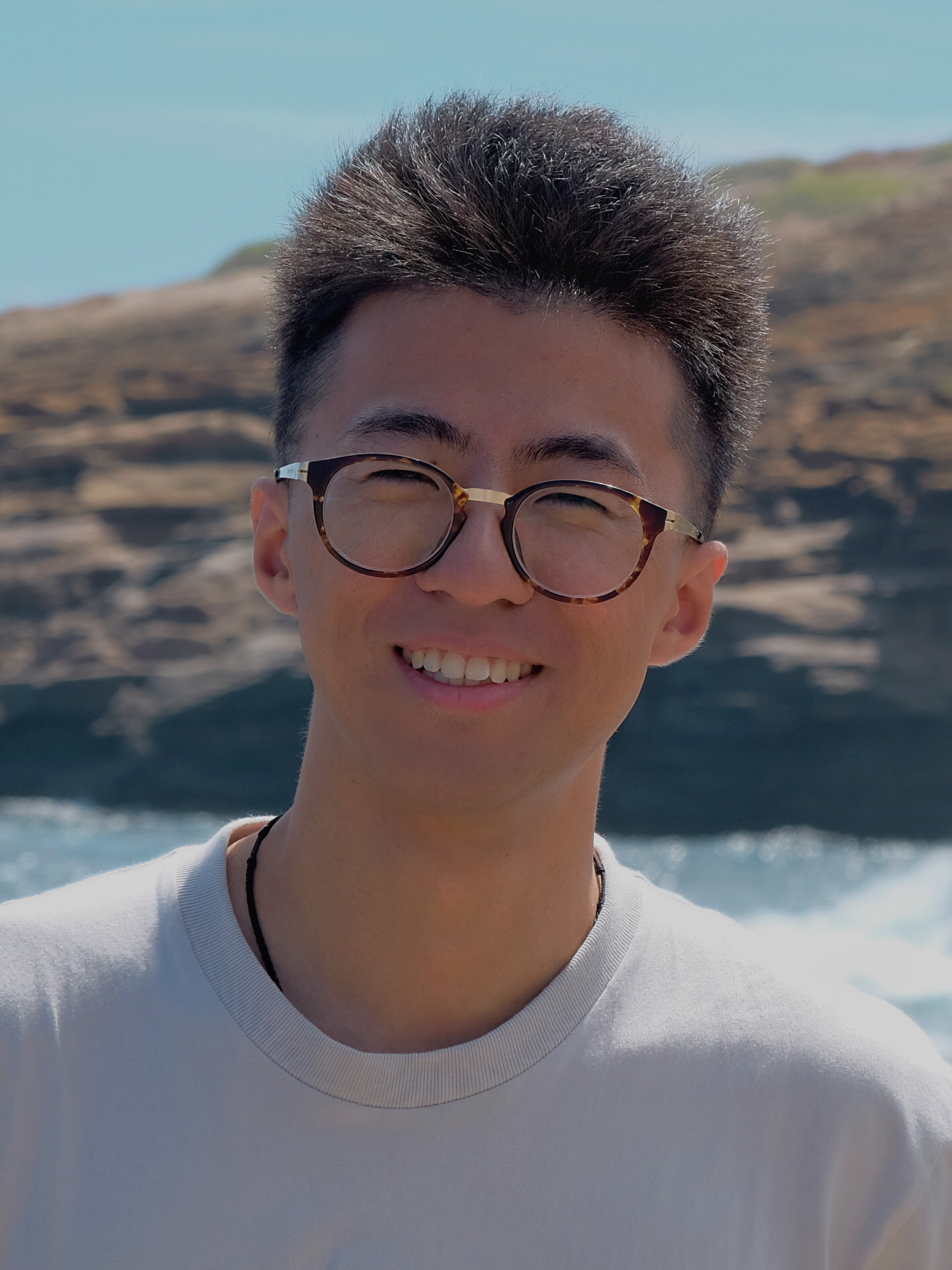}}]{Haoxin Wang}
(S'15-M'21) received the Ph.D. degree in electrical and computer engineering from The University of North Carolina at Charlotte in 2020, and the B.S. degree in control science and engineering from Harbin Institute of Technology in China in 2015. He is currently a research scientist at Toyota Motor North America, InfoTech Labs, where he leads the ``Edge Computing'' project. His research interests include edge computing for connected and autonomous vehicles, applied machine learning for intelligent systems, and energy-efficient mobile computing systems.
\end{IEEEbiography}

\begin{IEEEbiography}[{\includegraphics[width=1in,height=1.25in,clip,keepaspectratio]{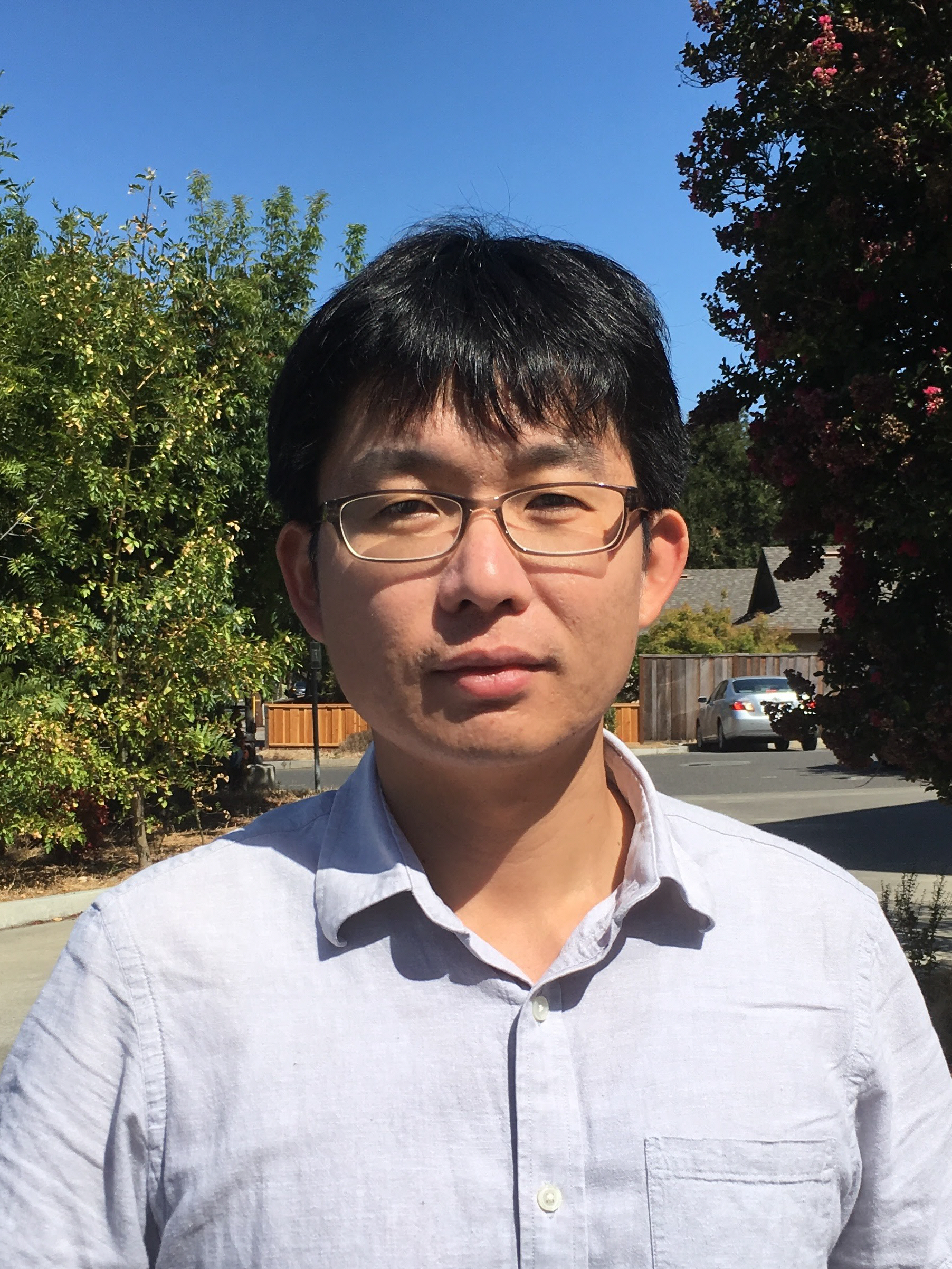}}]{BaekGyu Kim}
earned B.S. and M.S. in electrical engineering and computer science from Kyungpook National University in South Korea, and earned Ph.D in computer science from University of Pennsylvania. He is currently a principal researcher in Toyota Motor North America, InfoTech Labs, and his research area includes software platform technologies for connected cars, and model based software development for high-assurance systems.
\end{IEEEbiography}

\begin{IEEEbiography}[{\includegraphics[width=1in,height=1.25in,clip,keepaspectratio]{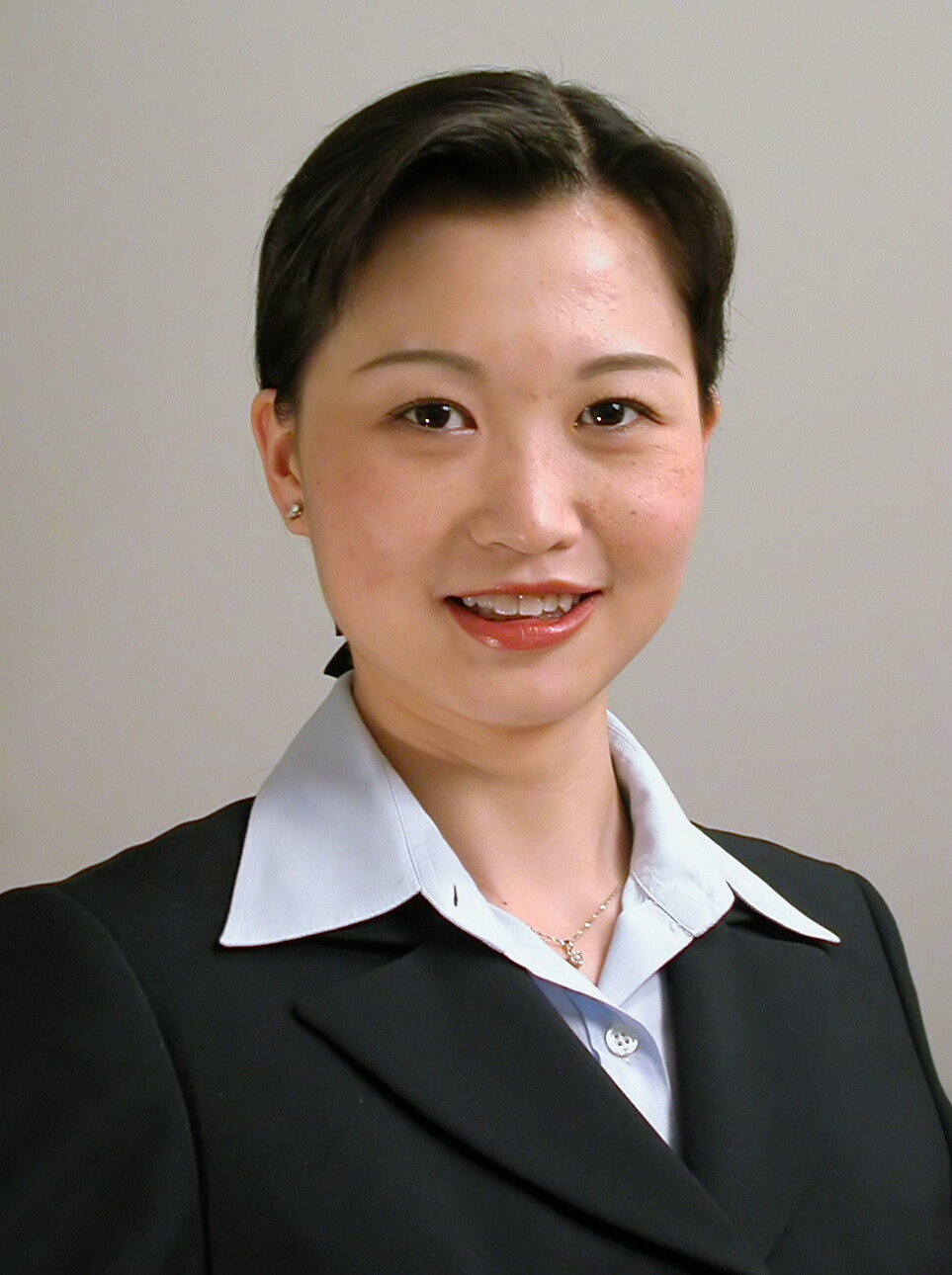}}]{Jiang Xie}
received the B.E. degree from Tsinghua University, Beijing, China, the MPhil degree from the Hong Kong University of Science and Technology, and the M.S. and PhD degrees from Georgia Institute of Technology, all in electrical and computer engineering. She joined the Department of Electrical and Computer Engineering at the University of North Carolina at Charlotte (UNC Charlotte) as an Assistant Professor in August 2004, where she is currently a Full Professor. Her current research interests include resource and mobility management in wireless networks, mobile computing, Internet of Things, and cloud/edge computing. She is on the Editorial Boards of the IEEE Transactions on Wireless Communications, IEEE Transactions on Sustainable Computing, and Journal of Network and Computer Applications (Elsevier). She received the US National Science Foundation (NSF) Faculty Early Career Development (CAREER) Award in 2010, a Best Paper Award from IEEE Global Communications Conference (Globecom 2017), a Best Paper Award from IEEE/WIC/ACM International Conference on Intelligent Agent Technology (IAT 2010), and a Graduate Teaching Excellence Award from the College of Engineering at UNC Charlotte in 2007. She is a fellow of the IEEE and a senior member of the ACM.
\end{IEEEbiography}

\begin{IEEEbiography}[{\includegraphics[width=1in,height=1.25in,clip,keepaspectratio]{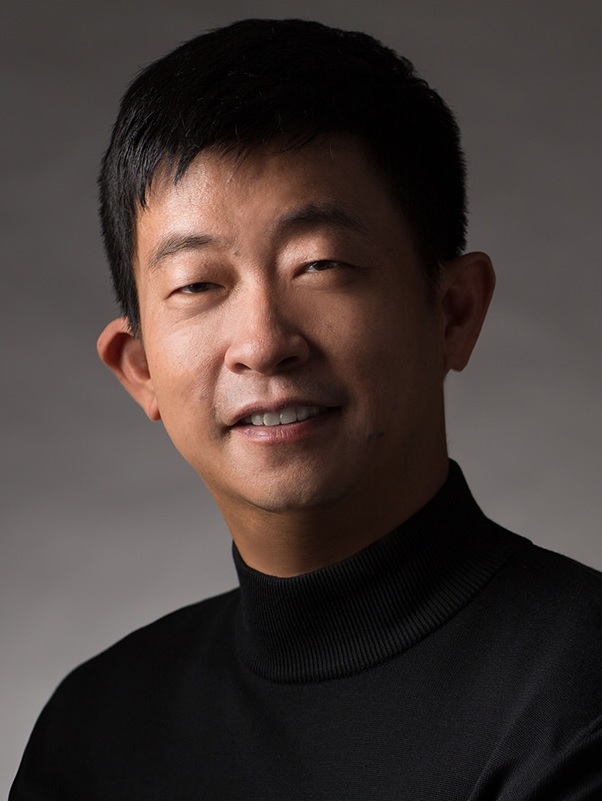}}]{Zhu Han}
(S’01–M’04-SM’09-F’14) received the B.S. degree in electronic engineering from Tsinghua University, in 1997, and the M.S. and Ph.D. degrees in electrical and computer engineering from the University of Maryland, College Park, in 1999 and 2003, respectively. 

From 2000 to 2002, he was an R\&D Engineer of JDSU, Germantown, Maryland. From 2003 to 2006, he was a Research Associate at the University of Maryland. From 2006 to 2008, he was an assistant professor at Boise State University, Idaho. Currently, he is a John and Rebecca Moores Professor in the Electrical and Computer Engineering Department as well as in the Computer Science Department at the University of Houston, Texas. His research interests include wireless resource allocation and management, wireless communications and networking, game theory, big data analysis, security, and smart grid.  Dr. Han received an NSF Career Award in 2010, the Fred W. Ellersick Prize of the IEEE Communication Society in 2011, the EURASIP Best Paper Award for the Journal on Advances in Signal Processing in 2015, IEEE Leonard G. Abraham Prize in the field of Communications Systems (best paper award in IEEE JSAC) in 2016, and several best paper awards in IEEE conferences. Dr. Han was an IEEE Communications Society Distinguished Lecturer from 2015-2018, AAAS fellow since 2019, and ACM distinguished Member since 2019. Dr. Han is a 1\% highly cited researcher since 2017 according to Web of Science. Dr. Han is also the winner of the 2021 IEEE Kiyo Tomiyasu Award, for outstanding early to mid-career contributions to technologies holding the promise of innovative applications, with the following citation: ``for contributions to game theory and distributed management of autonomous communication networks."
\end{IEEEbiography}
\end{document}